\def\1{\bm{1}}
\DeclareMathAlphabet{\mathsfit}{\encodingdefault}{\sfdefault}{m}{sl}
\SetMathAlphabet{\mathsfit}{bold}{\encodingdefault}{\sfdefault}{bx}{n}
\DeclareMathOperator{\Tr}{Tr}
\newcommand{\bv}{{v}}
\newcommand{\bx}{{x}}
\newcommand{\bF}{{F}}
\newcommand{\bI}{{I}}
\newcommand{\btheta}{{\theta}}
\newcommand{\bxi}{{\xi}}
\newcommand{\cG}{\mathcal{G}}
\newcommand{\cL}{\mathcal{L}}
\newcommand{\cN}{\mathcal{N}}
\newcommand{\cO}{\mathcal{O}}
\newcommand{\cQ}{\mathcal{Q}}
\newcommand{\cV}{\mathcal{V}}
\newcommand{\cW}{\mathcal{W}}
\newcommand{\bbE}{\mathbb{E}}
\newcommand{\bbI}{\mathbb{I}}
\newcommand{\bbP}{\mathbb{P}}
\newcommand{\bbR}{\mathbb{R}}
\newcommand{\bbS}{\mathbb{S}}
\newcommand{\pll}{\kern 0.56em/\kern -0.8em /\kern 0.56em}
\newcommand{\norm}[1]{\ensuremath{\left\| #1 \right\|}}
\newcommand{\<}{\left\langle}
\renewcommand{\>}{\right\rangle}
\newcommand{\diag}{{\rm diag}}
\newcommand{\srk}{{\rm srk}}
\newcommand{\rF}{\mathrm{F}}
\newcommand{\sbracket}[1]{\ensuremath{\left( #1 \right)}}
\newcommand{\mbracket}[1]{\ensuremath{\left[ #1 \right]}}
\newcommand{\bbracket}[1]{\ensuremath{\left\{ #1 \right\}}}
\renewcommand{\>}{\right\rangle}
\newcommand{\EE}{\mathbb{E}}
\newcommand{\RR}{\mathbb{R}}
\renewcommand{\wp}{{\em w.p.~}}
\newcommand{\tr}{\mathrm{tr}}
\author{Mingze Wang \\
School of Mathematical Sciences\\
Peking University\\
\texttt{mingzewang@stu.pku.edu.cn} 
\And
Lei Wu~\dag \\
School of Mathematical Sciences\\ 
Center for Machine Learning Research\\
Peking University\\
\texttt{leiwu@math.pku.edu.cn} 
\vspace*{-1em}
}
\theoremstyle{plain}
\newtheorem{theorem}{Theorem}[section]
\newtheorem{proposition}[theorem]{Proposition}
\newtheorem{lemma}[theorem]{Lemma}
\theoremstyle{definition}
\newtheorem{definition}[theorem]{Definition}
\newtheorem{assumption}[theorem]{Assumption}
\newtheorem*{main result}{Main Theorem}
\title{A Theoretical Analysis of Noise Geometry in Stochastic Gradient Descent}
\begin{document}

\maketitle

\begin{abstract}
In this paper, we provide a theoretical study of noise geometry for minibatch  stochastic gradient descent (SGD), a phenomenon where noise aligns favorably   with the  geometry of local landscape.
We propose two  metrics, derived from analyzing how noise influences the loss and subspace projection dynamics, to quantify the alignment strength.  We show that for (over-parameterized) linear models and two-layer nonlinear networks, when measured by these metrics, the  alignment  can be provably guaranteed  under conditions  independent of the degree of over-parameterization.
To showcase the utility of our noise geometry characterizations, we present a refined analysis of the mechanism by which SGD  escapes  from sharp minima. We reveal that unlike gradient descent (GD), which escapes along the sharpest directions, SGD tends to escape from flatter directions and cyclical learning rates can exploit this SGD characteristic to navigate more effectively towards flatter regions. Lastly, extensive experiments are provided to support our theoretical findings.

\end{abstract}

\section{Introduction}
Minibatch SGD\footnote{\dag\ Corresponding Author.}, along with its variants, now stands as the de facto optimizers in  modern machine learning (ML) \citep{goodfellow2016deep}. 
Unlike full-batch GD, SGD uses only mini-batches of data  in each iteration, which injects noise into the training process.  Considering a loss objective $\cL:\RR^p\mapsto\RR$, the  SGD iteration with learning rate $\eta$  can be written as
\begin{align}\label{eqn: a0}
     \theta_{t+1}=\theta_t-\eta\sbracket{\nabla \cL(\theta_t)+  \xi_t},
\end{align}
where $\xi_t=\xi(\theta_t)$ denotes the injected noise that satisfies $\EE[\xi_t]=0$ and $\EE[\xi_t\xi_t^\top]=\Sigma(\theta_t)$, with $\Sigma(\cdot)$ denoting the covariance matrix of SGD noise. Importantly, the injected noise can significantly alter the optimizer's properties, particularly in aspects like optimization convergence 
\citep{hazan2016introduction,thomas2020interplay,wojtowytsch2023stochastic} 
and implicit regularization \citep{zhang2017understanding,keskar2016large,wu2017towards,wu2018sgd}.

To illustrate this, we begin by examining the noise's influence on loss dynamics. Assuming $\cL(\cdot)$ is twice differentiable and letting $H(\btheta)=\nabla^2\cL(\theta)$,  then the one-step loss update of SGD is given by
\begin{equation}\label{eqn: x2}
\EE[\cL(\theta_{t+1}] = \EE[\cQ(\theta_t)]  + \frac{\eta^2}{2} \EE[\gamma(\theta_t)] + o(\eta^2),
\end{equation}
where ${\cQ(\btheta)=\cL(\btheta)-\eta\|\nabla \cL(\btheta)\|^2+\frac{\eta^2}{2}\nabla \cL(\btheta)^\top H(\btheta)\nabla \cL(\btheta)}$ denotes the contribution of GD component and $\gamma(\btheta_t)=\EE[\bxi_t^\top H(\btheta)\bxi_t]$ represents  the noise contribution.  
In optimization literature, one often makes the {\em bounded variance assumption}: $\EE[\|\bxi_t\|^2]\leq\sigma^2$, with $\sigma$ being a fixed constant.
By additionally assuming $\|H(\btheta_t)\|_2\leq L$, it follows that 
$
\gamma(\theta_t) \leq L\EE[\|  \xi_t\|^2]\leq L\sigma^2.
$
However, this estimates could substantially  overestimate $\gamma(\theta_t)$ as it overlooks the fact:  for typical ML models, both the magnitude and shape of SGD noise are state-dependent.

\vspace{.5em}
\textbf{Noise magnitude.} \citet{bottou2018optimization} introduced the affine variance assumption: $\EE[\|  \xi_t\|^2]\leq \sigma_0^2+\sigma_1^2\|\nabla \cL(\theta_t)\|^2$, incorporating the state-dependence of noise magnitude.  
This  assumption has been adopted in \cite{faw2022power} to justify the superiority of adaptive stochastic gradient methods.  Additionally, a series of studies including  \cite{feng2021inverse,mori2021logarithmic,wojtowytsch2021stochastic,liu2021noise} have shown, for regression with square loss, the magnitude of SGD noise can be bounded by the loss value: $\EE[\|  \xi_t\|^2]\leq \sigma^2\cL(\theta_t)$. This reveals that the noise magnitude diminishes to zero at global minima. Leveraging this property, some works
\citep{bassily2018exponential,fang2020fast,wojtowytsch2021stochastic,liu2023aiming} showed that  SGD can  achieve  convergence with a constant learning rate in the interpolation regime. This contrasts starkly with the bounded variance assumption, where SGD requires a decaying learning rate to converge \citep{hazan2016introduction}.

\textbf{Noise shape.}
It is important to note that the noise contribution in \eqref{eqn: x2} can be expressed as 
\begin{equation}\label{eqn: noise-influence}
    \gamma(\theta_t)=\EE[  \xi^\top_t H(\theta_t)  \xi_t]
    =\EE[  \xi_t  \xi^\top_tH(\theta_t)]=\tr(\Sigma(\theta_t)H(\theta_t)).
\end{equation}
This underscores the significance of the noise shape, highlighting the necessity of examining the relationship between the noise covariance and the local Hessian matrix.
In particular, \cite{zhu2019anisotropic,wu2020noisy,xie2020diffusion} have empirically demonstrated that  $\Sigma(\btheta)$ is highly {\em anisotropic}  and bears resemblance to $H(\btheta)$ to a certain degree for various ML models. However, it is still unclear how to quantitatively characterize this resemblance.


Beyond loss convergence, it has been demonstrated that the aforementioned  noise geometry is also crucial in shaping the superior generalization properties of SGD \citep{zhu2019anisotropic,wu2020noisy,li2021validity,wu2022does,haochen2021shape} and general dynamical behaviors \citep{feng2021inverse,ziyin2021minibatch,wu2018sgd,thomas2020interplay}. Therefore, it is important to establish quantitative descriptions of the above noise geometry.

In the context of regression  with square loss, by incorporating the state dependence of  both noise magnitude and shape,
\cite{mori2021logarithmic} proposed a heuristic  approximation of the noise covariance: 
\begin{equation}\label{eqn: decoupl-x1}
\Sigma(\btheta)\approx 2\cL(\btheta) G(\btheta),
\end{equation}
where $G(\theta)$ denotes the empirical Fisher matrix and $G(\theta)\approx H(\btheta)$ in low-loss region (see Section \ref{sec: preliminary} for details). This approximation suggests an {\em intriguing alignment between SGD noise and local landscape}: 1) the noise magnitude is proportional to the loss value; 2) the noise energy tends to concentrate more along sharp directions than flat directions. The latter point can be deduced as follows: for a fixed direction $u$, the noise energy along $v$ satisfies 
\begin{equation}\label{eqn: decoupl-subspace}
\EE[(\xi(\theta)^\top v)^2] = v^\top \Sigma(\theta) v \approx 2\cL(\theta) v^\top G(\theta) v,
\end{equation}
where the second step uses \eqref{eqn: decoupl-x1} and  $u^\top G(\theta)u$ is roughly the curvature of local landscape along the direction $u$. However,   it should be noted that this heuristic approximation  can be neither theoretically justified nor empirically verified. 

\subsection{Our contribution}

The goal of this work is to advance beyond  heuristic approximations of SGD noise by  providing theoretical characterizations. We highlight that the inaccuracy of approximation \eqref{eqn: decoupl-x1} stems from its attempt to establish a full description of the entire noise covariance. This is often unnecessary since, in most scenarios,  the focus is only on some specific low-dimensional quantities instead of the entire high-dimensional trajectory. For instance, when analyzing the loss dynamics,  it suffices to characterize  $\gamma(\theta_t)\in\RR$ as per Eq.~\eqref{eqn: x2} rather than  $\Sigma(\theta_t)\in\RR^{p\times p}$.

Adopting this perspective, we demonstrate that the alignment implied by  approximation \eqref{eqn: decoupl-x1}, while not accurate, is still valid when assessed using less stringent metrics. 
In addition, we present an illustrative example, showcasing the utility of our noise geometry quantifications in analyzing SGD's dynamical behavior. 
Specifically, our key findings are outlined as follows. 


\begin{itemize}[leftmargin=2em]
    \item 
    We begin by quantifying the noise geometry through examining its influence on loss dynamics.  According to Eq.~\eqref{eqn: noise-influence},  when approximation \eqref{eqn: decoupl-x1} holds, $\gamma(\theta)\approx 2\cL(\theta)\|G(\theta)\|^2_F=:\bar{\gamma}(\theta)$ in regions of low loss. This motivates us to  measure the alignment strength using the ratio $\mu(\theta)=\gamma(\theta)/\bar{\gamma}(\theta)$. We prove that, in both (over-parameterized) linear models and two-layer nonlinear networks without bias,  $\mu(\theta)$ is close to $1$ across the entire parameter space, a result that intriguingly holds true regardless of the degree of over-parameterization.
    

	\item In our next analysis, we focus on the alignment of SGD noise with local geometry along fixed directions, investigating if {\em the noise energy along a given direction is proportional to the curvature in that direction}. This type of alignment can be useful in describing the SGD dynamics in  subspaces, such as the top principal components (see Section \ref{section: theory, strong align} for details). According to Eq.~\eqref{eqn: decoupl-subspace}, we  define the metric $g(\theta,v)=v^\top\Sigma(\theta)v/(2\cL(\theta)v^\top G(\theta)v)$ to quantify the strength of directional alignment, where  $v$ denotes the direction of interest. We establish that for (over-parameterized) linear models, this directional alignment also holds regardless of the degree of over-parameterization.

\item  Lastly, to illustrate the utility of our characterization of noise geometry, we provide a fine-grained analysis of the mechanism by which  SGD escapes from sharp minima. We show that \textit{the escape direction of SGD exhibits significant components along flat directions}. This stands in stark contrast to GD, which escapes from minima only along the sharpest direction. We also discuss an implication of this unique escape property: cyclical learning rate \citep{smith2017cyclical,loshchilovsgdr} can leverage it  to help SGD locate flat minima more effectively.
\end{itemize}

To validate our theoretical findings, we have provided both small-scale and large-scale experiments, including the classification of  CIFAR-10 dataset using VGG nets and ResNets. Overall, we not only establish quantitative  descriptions for the alignment between SGD noise and the local geometry but also shed light on  how the unique noise geometry helps SGD navigate the loss landscape.

\subsection{Other related work}

{\bf The noise geometry.} \citet{ziyin2021minibatch} conducted an analysis of the  noise structure of online SGD for under-parameterized linear regression. \citet{pesme2021implicit} studied the implicit bias of SGD noise for diagonal linear networks.
\citet{haochen2021shape,damian2021label,li2021happens} showed that the noise covariance of label-noise SGD is $\varepsilon^2 G(\theta)$, where $\varepsilon$ denotes the size of label noise. 
Works such as \citet{simsekli2019tail,zhou2020towards} argued that the magnitude of SGD noise  is heavy-tailed but not considered the noise shape. 
Lastly, we remark that
aligning with our first part analysis, \citet{wu2022does}  developed theoretical underpinnings for the noise geometry in terms of its contribution in loss dynamics. However, their analysis  is restricted to OLMs under the infinite-data regime. In contrast, we offer an investigation of the finite-sample effects and extend substantially beyond the scope of \citet{wu2022does}. 


%

{\bf Escape from minima and saddle points.} The phenomenon of SGD escaping from sharp minima exponentially fast was initially studied in \citet{zhu2019anisotropic,wu2018sgd}. This provides an explanation of the famous ``flat minima hypothesis'' \citep{hochreiter1997flat,keskar2016large,wu2023implicit}---one of the most important observations in explaining the implicit regularization of SGD. 
However, existing analyses of the escape phenomenon  have primarily focused on the escape rate \citep{wu2018sgd,zhu2019anisotropic,xie2020diffusion,mori2021logarithmic,ziyin2021minibatch}. In contrast, we extends this focus by providing an analysis of escape direction, which is enabled by our refined description of the noise geometry. 
\citet{kleinberg2018alternative} introduced an alternative perspective, positing that SGD circumvents local minima by navigating an effective loss landscape that results from the convolution of the original landscape with SGD noise. In this context, our noise geometry results can be beneficial in understanding the effective loss landscape. In addition, prior works like \citep{daneshmand2018escaping,xie2022adaptive} has illustrated that the alignment of SGD noise with local geometry facilitates the rapid saddle-point escape of SGD. Our work offers theoretical support for the alignment assumptions in these studies.





\section{Preliminaries}
\label{sec: preliminary}

{\bf Notation.}
We use $\left<\cdot,\cdot\right>$ for the Euclidean inner product  and $\left\|\cdot\right\|$ the $\ell^2$ norm of a vector or the spectral norm of a matrix. For any positive integer $k$, let $[k]=\{1,\cdots,k\}$. 
Denote by $\cN( \mu,S)$ the Gaussian distribution with mean $ \mu$ and covariance matrix $S$.
When $A$ is positive semidefinite,  we use $\srk(A):=\tr(A)/\norm{A}$ to denote  the effective rank of $A$. 
We use $a \lesssim b$ to mean there exist an  an absolute  constant $C>0$ such that $a \leq Cb$ and $a \gtrsim b$ is defined analogously. We write $a \sim b$ if both $a\lesssim b$ and $a\gtrsim b$ hold.

{\bf Problem Setup.}
Let $\{(\bx_i,y_i)\}_{i=1}^n$ with $x_i\in\RR^d$ and $y_i\in\RR$ for $i\in [n]$ be the training set and 
 $f(\cdot;\btheta):\bbR^d\to\bbR$ be the model parameterized by $\btheta\in\bbR^p$. Let $\ell_i(\btheta)=\frac{1}{2}\sbracket{f(\bx_i;\btheta)-y_i}^2$  and $\cL(\btheta)=\frac{1}{n}\sum_{i=1}^n\ell_i(\btheta)$ be the empirical loss.  In theoretical analysis, we make the following input  assumption:
 \begin{assumption}\label{assumption: input}
Suppose that $\bx_1,\bx_2,\dots,\bx_n$ are {\em i.i.d} samples drawn from $\cN(0,S)$ with $S\in\RR^{d\times d}$ denoting the input covariance matrix. We  use $
d_{\mathrm{eff}}:=\min\{\srk(S),\srk(S^2)\}
$ to denote the effective input dimension. 
 \end{assumption}


In the above setup, the Hessian matrix of the empirical loss is given by
\begin{equation}\label{eqn: Hessian}
H(\theta)=G(\theta)+\frac{1}{n}\sum_{i=1}^n\sbracket{f( x_i;\theta)-y_i}\nabla^2 f(  x_i;\theta),
\end{equation}
where 
$
G(\theta)=\frac{1}{n}\sum_{i=1}^n\nabla f(  x_i;\theta)\nabla f(  x_i;\theta)^\top
$
 is the empirical Fisher matrix. Eq.~\eqref{eqn: Hessian} implies that when the fit errors are negligible, we have $G(\btheta)\approx H(\btheta)$ and in particular, at global minima $\btheta^*$,  $ H(\btheta^*)=G(\btheta^*)$. Additionally, for linear regression $f(\bx;\btheta)=\btheta^{\top}\bx$, $ H(\btheta)= G(\btheta)\equiv\frac{1}{n}\sum_{i=1}^n\bx_i\bx_i^\top$. Therefore, in this paper, we focus on examining the alignment between noise covariance $\Sigma(\theta)$ and the empirical Fisher matrix $G(\btheta)$, rather directly analyzing the Hessian.

{\bf SGD and noise covariance.} To minimize $\cL(\cdot)$, SGD with a batch size of $1$ updates as $\theta_{t+1}=\theta_t-\eta \nabla\ell_{i_t}(\theta_t)$ with $(i_t)_{t\geq 1}$ being {\em i.i.d} samples uniformly drawn from $[n]$. In this case, the noise covariance is given by
$
\Sigma_0(\theta)=\Sigma_1(\btheta)-\Sigma_2(\btheta)
$
with
\begin{equation}
    \begin{aligned}
        \Sigma_1(\btheta)&=\frac{1}{n}\sum_{i=1}^n \nabla\ell_i(\btheta)\nabla\ell_i(\btheta)^\top,
        \\\Sigma_2(\btheta)&=\nabla \cL(\btheta) \nabla \cL(\btheta)^\top.
    \end{aligned}
\end{equation}
Since the relationship between $\Sigma_2(\btheta)$ with $\nabla \cL(\theta)$ is evident, the remaining task is to characterize the geometry of  $\Sigma_1(\btheta)$. Therefore, in the subsequent analysis, we shall refer $\Sigma(\btheta)$ as  $\Sigma_1(\btheta)$ for simplicity. 

Noting $\nabla \ell_i(\theta)=(f( x_i;\theta)-y_i)\nabla f(x_i;\theta)$, it follows that
$
	\Sigma(\theta) = \frac{1}{n}\sum_{i=1}^n (f( x_i;\theta)-y_i)^2\nabla f(x_i;\theta) \nabla f(x_i;\theta)^\top.
$
Then, the heuristic Hessian-based approximation \eqref{eqn: decoupl-x1} follows by
assuming  fitting errors $\{f( x_i;\theta)-y_i\}_{i=1}^n$ and model gradients $\{\nabla f(x_i;\theta)\}_{i=1}^n$ are decoupled \citep{mori2021logarithmic}.
%

{\bf Over-parameterized linear models (OLMs).} 
An OLM is defined as $f(\bx;\btheta)=F(\btheta)^{\top}\bx$, where $F:\bbR^p\to\bbR^d$ denotes a general re-parameterization map. Although $f(\cdot;\btheta)$ only represents linear functions, the associated loss landscape can be highly non-convex. Some typical examples include (i)  linear model $F(w)=w$; (ii) diagonal linear network: $F(\btheta)=(\alpha_1^2-\beta_1^2,\dots,\alpha_d^2-\beta_d^2)^\top$; and (iii) linear network: $F(\btheta) =W_1W_2\cdots W_L$. 
Notably, OLMs have been widely used to analyze the optimization and implicit bias of SGD \citep{arora2019implicit,woodworth2020kernel,pesme2021implicit,haochen2021shape,azulay2021implicit,li2021happens}.

\section{Loss alignment}\label{sec: average}

By Eq.~\eqref{eqn: x2}, to assess how the noise geometry affects the loss dynamics, we only  need  to estimate
$
\gamma(\theta)=\tr(\Sigma_1(\theta)H(\theta)) - \nabla \cL(\theta)^\top H(\theta)\nabla \cL(\theta).
$
It is evident that the second term is clear and what remains is to estimate $\gamma_1(\theta)=\tr(\Sigma_1(\theta)G(\theta))$ by assuming the closeness between $G(\theta)$ and $H(\theta)$.  If the approximation \eqref{eqn: decoupl-x1} holds, we have $\gamma_1(\theta)\approx 2\cL(\theta)\|G(\theta)\|_F^2=:\bar{\gamma_1}(\theta)$. We thus can define the following quantity to measure the influence of noise geometry on loss dynamics:
\begin{definition}[Loss alignment]
$\mu(\theta) = \frac{\gamma_1(\theta)}{\bar{\gamma}_1(\theta)}$.
\end{definition}
At global minima $\theta^*$,  $\gamma_1(\theta)=\bar{\gamma}_1(\theta^*)=0$ and we define $\gamma(\theta^*)=\frac{0}{0}=1$ for convention. Under this definition, $\mu(\cdot)$ may not be continuous but we will show it holds that $\mu(\theta)\sim 1$.

\subsection{(Over-parameterized) linear models}




We first consider OLMs, for which we have:

\begin{theorem}[OLM]\label{thm: weak: OLM}
Consider OLMs and suppose Assumption \ref{assumption: input} holds. For any $\epsilon,\delta\in(0,1)$, 
if $n\gtrsim\max\{(d^2\log^2\sbracket{1/{\epsilon}}+\log^2(1/\delta))/\epsilon, (d\log\sbracket{1/{\epsilon}}+\log(1/\delta))/\epsilon^2\},$
then \wp at least $1-\delta$, it holds for any $\theta\in\bbR^p$ that $\frac{1-\epsilon}{(1+\epsilon)^2}\leq\mu(\theta)\leq\frac{2+\epsilon}{(1-\epsilon)^2}$.
\end{theorem}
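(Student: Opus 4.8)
The key observation is that for an OLM $f(\bx;\btheta)=F(\btheta)^\top\bx$, the model gradient is $\nabla f(\bx_i;\btheta)=J(\btheta)^\top\bx_i$ where $J(\btheta)=\partial F/\partial\btheta\in\RR^{d\times p}$ is the Jacobian of the re-parameterization, and the fitting error is $f(\bx_i;\btheta)-y_i=F(\btheta)^\top\bx_i-y_i$. Crucially, the quantity $\mu(\btheta)$ is \emph{invariant under the re-parameterization}: writing $w=F(\btheta)\in\RR^d$ and $r_i=w^\top\bx_i-y_i$, we have $\Sigma_1(\btheta)=J^\top\big(\tfrac1n\sum_i r_i^2\bx_i\bx_i^\top\big)J$ and $G(\btheta)=J^\top\big(\tfrac1n\sum_i\bx_i\bx_i^\top\big)J$, and the spurious factor $J$ will cancel in the ratio $\gamma_1/\bar\gamma_1$ up to controllable distortions. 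So the first step is to reduce the problem to the plain linear model, i.e.\ to bound, uniformly over $w\in\RR^d$ (equivalently over all achievable residual vectors),
\[
\mu(w)=\frac{\tr\!\big(\widehat\Sigma(w)\,\widehat\Sigma_{\mathrm{emp}}\big)}{2\cL(w)\,\|\widehat\Sigma_{\mathrm{emp}}\|_F^2},
\]
where $\widehat\Sigma_{\mathrm{emp}}=\tfrac1n\sum_i\bx_i\bx_i^\top$ is the empirical second-moment matrix and $\widehat\Sigma(w)=\tfrac1n\sum_i r_i^2\bx_i\bx_i^\top$. I would spell out precisely how the Jacobian factors cancel (this uses cyclicity of trace and the fact that $2\cL(w)=\tfrac1n\sum_i r_i^2$ depends only on $w$), so that it suffices to prove the two-sided bound for the linear model.

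**Main concentration step.** The heart of the argument is to show that when $\bx_i\stackrel{iid}{\sim}\cN(0,S)$ and $n$ is large enough (with the stated dependence on $d$, $\epsilon$, $\delta$), the empirical moments concentrate around their population counterparts \emph{uniformly in the residual pattern}. Decompose $r_i^2$: since $r_i=w^\top\bx_i-y_i$ is (conditionally on the design being Gaussian) a linear-plus-constant functional, $r_i^2$ is a degree-$2$ polynomial in $\bx_i$. The numerator $\tr(\widehat\Sigma(w)\widehat\Sigma_{\mathrm{emp}})=\tfrac1{n^2}\sum_{i,j}r_i^2(\bx_i^\top\bx_j)^2$ and the denominator factors $\tfrac1n\sum_i r_i^2$ and $\|\widehat\Sigma_{\mathrm{emp}}\|_F^2=\tfrac1{n^2}\sum_{i,j}(\bx_i^\top\bx_j)^2$ are all low-degree polynomials in the Gaussian vectors. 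I would establish, with probability $\ge 1-\delta$: (a) a matrix-concentration bound $\|\widehat\Sigma_{\mathrm{emp}}-S\|\lesssim\epsilon\|S\|$ type of statement (the $d/\epsilon^2$-type sample requirement), giving $\|\widehat\Sigma_{\mathrm{emp}}\|_F^2=(1\pm\epsilon)\|S\|_F^2$ and spectral two-sided control of $\widehat\Sigma_{\mathrm{emp}}$ by $S$; (b) a uniform-in-$w$ bound showing $\tfrac1n\sum_i r_i^2\langle\bx_i\bx_i^\top,\widehat\Sigma_{\mathrm{emp}}\rangle \approx \tfrac1n\sum_i r_i^2 \langle\bx_i\bx_i^\top,S\rangle$ and that the latter $\approx (\tfrac1n\sum_i r_i^2)\cdot(\mathbb E\,\langle\bx\bx^\top,S\rangle$-type scalar$)$; this is where the $d^2/\epsilon$-type requirement enters, because controlling a quadratic form weighted by another quadratic form uniformly over directions $w$ needs an $\epsilon$-net over the sphere and a union bound, costing an extra factor of $d$. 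Combining these pieces, both numerator and denominator are pinned to the population value $2\cL(w)\|S\|_F^2$ up to multiplicative $(1\pm\epsilon)$ factors, and the stated bounds $\frac{1-\epsilon}{(1+\epsilon)^2}\le\mu\le\frac{2+\epsilon}{(1-\epsilon)^2}$ fall out — the asymmetry (the "$2+\epsilon$" rather than "$1+\epsilon$" on the upper side) presumably comes from a cross-term, e.g.\ the population value of $\mathbb E[(w^\top\bx)^2(\bx^\top S\bx)]$ contains a term $2\,w^\top S^2 w$ beyond $(w^\top Sw)\tr(S^2)/\ldots$, i.e.\ an extra contribution that is nonnegative and bounded by the main term, inflating the upper constant by roughly a factor $2$.

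**The main obstacle.** The hard part is the \emph{uniformity over all $w\in\RR^p$} (equivalently over all residual vectors $r\in\RR^n$ realizable as $r_i=w^\top\bx_i-y_i$). A naive union bound over a net of the parameter space is hopeless in the over-parameterized regime. The resolution — and the reason the theorem's hypotheses do \emph{not} depend on $p$ — is that $\gamma_1(\btheta)$, $\bar\gamma_1(\btheta)$ depend on $\btheta$ only through the $d$-dimensional vector $w=F(\btheta)$ and, after the reduction, only through the residual direction; so one nets over the unit sphere in $\RR^d$ (or uses a variational/decoupling argument to handle the quadratic-in-$w$ dependence directly), incurring $\exp(O(d))$ complexity, which is absorbed by the $\log(1/\epsilon)$-type and $d\log(1/\epsilon)$-type terms in the sample-complexity bound. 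I would also need to be slightly careful that the net is over the ratio's numerator and denominator \emph{simultaneously}, so that the bad events align; a clean way is to net the matrix $ww^\top$ (rank one, so effectively $d$ parameters) and bound Lipschitz constants of all the relevant polynomial functionals in operator norm. Once uniformity is handled, the remaining estimates are standard Gaussian chaos / Hanson–Wright and matrix Bernstein computations.
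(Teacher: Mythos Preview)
Your reduction to the plain linear model has a genuine gap. You claim that $\gamma_1(\theta)$ and $\bar\gamma_1(\theta)$ depend on $\theta$ only through $w=F(\theta)$, and that the Jacobian $J=\nabla F(\theta)$ cancels in the ratio ``by cyclicity of trace.'' This is false. Writing things out,
\[
\gamma_1(\theta)=\tr\bigl(\Sigma_1(\theta)G(\theta)\bigr)=\frac{1}{n^2}\sum_{i,j} r_i^2\,(x_i^\top JJ^\top x_j)^2,\qquad
\bar\gamma_1(\theta)=\Bigl(\tfrac{1}{n}\sum_i r_i^2\Bigr)\cdot\frac{1}{n^2}\sum_{i,j}(x_i^\top JJ^\top x_j)^2,
\]
so $\mu(\theta)$ depends on $\theta$ through \emph{both} the residual vector $r(\theta)=F(\theta)-F(\theta^*)\in\RR^d$ \emph{and} the PSD matrix $M(\theta)=J(\theta)J(\theta)^\top\in\RR^{d\times d}$. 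Cyclicity of trace does not eliminate $M$; you would have to net over $M$ as well, which is an $O(d^2)$-dimensional object and (after the sub-Weibull union bound) would force $n\gtrsim d^4$ rather than the stated $d^2$.

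The paper sidesteps this by proving the \emph{directional} alignment first (Theorem~\ref{thm: strong: OLM}). For the quadratic form $g(\theta;v)=\dfrac{v^\top\Sigma_1(\theta)v}{2\cL(\theta)\,v^\top G(\theta)v}$, the Jacobian enters only via the single vector $\nabla F(\theta)v\in\RR^d$: after normalization and the whitening $z_i=S^{-1/2}x_i$, one gets
\[
g(\theta;v)=\frac{\tfrac{1}{n}\sum_i(\tilde r^\top z_i)^2(\tilde w^\top z_i)^2}{\bigl(\tfrac{1}{n}\sum_i(\tilde r^\top z_i)^2\bigr)\bigl(\tfrac{1}{n}\sum_i(\tilde w^\top z_i)^2\bigr)},
\]
a function of two unit vectors $\tilde r,\tilde w\in\bbS^{d-1}$. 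Now a net over $\bbS^{d-1}\times\bbS^{d-1}$ ($2d$-dimensional) combined with sub-Weibull concentration for the fourth-order chaos $\tfrac{1}{n}\sum_i(\tilde r^\top z_i)^2(\tilde w^\top z_i)^2$ gives uniform two-sided bounds on $g$ under exactly the stated sample size. The $2+\epsilon$ in the upper bound arises from the exact population identity $\bar\Sigma_1(\theta)=2\bar\cL(\theta)\bar G(\theta)+\nabla\bar\cL(\theta)\nabla\bar\cL(\theta)^\top$ together with the Cauchy--Schwarz bound $(\nabla\bar\cL^\top v)^2\le 2\bar\cL\,v^\top\bar G v$. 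Finally, Theorem~\ref{thm: weak: OLM} follows in two lines: write $\tr(\Sigma_1 G)=\sum_k\lambda_k u_k^\top\Sigma_1 u_k$ and $\|G\|_F^2=\sum_k\lambda_k u_k^\top G u_k$ via the eigendecomposition $G=\sum_k\lambda_k u_ku_k^\top$, and apply the uniform bound on $g(\theta;u_k)$ term by term. The passage through $g(\theta;v)$ is precisely what lets the argument be independent of $p$.
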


This theorem shows that the alignment strength is well-controlled and notably, the condition is {\em independent} from the re-parameterization map and the number of parameters. Hence, it  can be effectively applied to linear networks regardless of the width and depth.

The following theorem shows that the sample size can be further relaxed for linear models.
\begin{theorem}[Linear model]\label{thm: weak: LM}
Consider linear models and suppose Assumption \ref{assumption: input} holds. For any $\epsilon,\delta\in(0,1)$, 
if $n/\log(n/\delta)\gtrsim 1/\epsilon^2$ and $d_{\mathrm{eff}}\gtrsim\log(n/\delta)/\epsilon^2$, then \wp at least $1-\delta$, it holds for any $\theta\in\bbR^d$ that $\frac{(1-\epsilon)^2}{(1+\epsilon)^2}\leq\mu(\theta)\leq\frac{(1+\epsilon)^2}{(1-\epsilon)^2}$.
\end{theorem}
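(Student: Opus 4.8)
\medskip

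For a linear model $f(\bx;\theta)=\theta^\top\bx$, everything becomes explicit: writing $r_i=r_i(\theta)=\theta^\top\bx_i-y_i$ for the residuals, we have $G(\theta)=\frac1n\sum_i\bx_i\bx_i^\top=:\widehat S$ (independent of $\theta$), $\cL(\theta)=\frac1{2n}\sum_i r_i^2$, and $\Sigma_1(\theta)=\frac1n\sum_i r_i^2\,\bx_i\bx_i^\top$. Hence
$\gamma_1(\theta)=\tr(\Sigma_1(\theta)\widehat S)=\frac1n\sum_i r_i^2\,\bx_i^\top\widehat S\bx_i$
and $\bar\gamma_1(\theta)=2\cL(\theta)\|\widehat S\|_F^2=\bigl(\frac1n\sum_i r_i^2\bigr)\|\widehat S\|_F^2$. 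The plan is to show that, uniformly over all residual vectors $(r_1,\dots,r_n)$ (equivalently, over all $\theta$ since $\{(r_1,\dots,r_n):\theta\in\RR^d\}$ is an affine subspace, but we only need the weaker fact that each $r_i^2\ge 0$), the ratio
$\mu(\theta)=\frac{\sum_i r_i^2\,(\bx_i^\top\widehat S\bx_i)}{\|\widehat S\|_F^2\sum_i r_i^2}$
is a weighted average of the quantities $q_i:=\bx_i^\top\widehat S\bx_i/\|\widehat S\|_F^2$ with nonnegative weights $w_i=r_i^2/\sum_j r_j^2$. Therefore it suffices to prove the deterministic bound
$(1-\epsilon)^2/(1+\epsilon)^2\ \le\ q_i\ \le\ (1+\epsilon)^2/(1-\epsilon)^2$
for every $i\in[n]$, with probability at least $1-\delta$ over the draw of the $\bx_i$'s; the theorem then follows immediately since $\mu(\theta)=\sum_i w_i q_i$ lies between $\min_i q_i$ and $\max_i q_i$.

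\medskip

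The core estimate is thus a concentration statement: with high probability, $\bx_i^\top\widehat S\bx_i\approx \|\widehat S\|_F^2$ simultaneously for all $i$. The natural route is to first replace $\widehat S$ by its population counterpart $S$. Conditioned on $\widehat S$, each $\bx_i$ (for the terms $j\ne i$ in $\widehat S$) is independent of the bulk; more cleanly, split $\widehat S=\frac{1}{n}\bx_i\bx_i^\top+\widehat S_{-i}$, where $\widehat S_{-i}$ is independent of $\bx_i$. Then $\bx_i^\top\widehat S\bx_i=\frac1n\|\bx_i\|^4+\bx_i^\top\widehat S_{-i}\bx_i$; the first term is lower-order when $n$ is large, and for the second, since $\bx_i\sim\cN(0,S)$ is independent of $\widehat S_{-i}$, the Hanson–Wright inequality gives
$\bx_i^\top\widehat S_{-i}\bx_i = \tr(S\widehat S_{-i}) + O\bigl(\sqrt{\log(n/\delta)}\,\|S^{1/2}\widehat S_{-i}S^{1/2}\|_F + \log(n/\delta)\,\|S^{1/2}\widehat S_{-i}S^{1/2}\|\bigr)$
with probability $1-\delta/n$, and a union bound over $i\in[n]$ handles all terms. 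One then shows $\tr(S\widehat S_{-i})\approx\tr(S^2)$ and $\|\widehat S\|_F^2=\tr(\widehat S^2)\approx\tr(S^2)$ using a second, matrix-level concentration (e.g. matrix Bernstein or a direct moment computation for Wishart-type matrices), so that both numerator and denominator of $q_i$ concentrate around $\tr(S^2)$. The relative errors are controlled by the fluctuation-to-mean ratios, which are governed precisely by $\|S^{1/2}\widehat S S^{1/2}\|_F/\tr(S^2)\lesssim 1/\sqrt{\srk(S^2)}$ and $\|\widehat S\|\cdot\|S\|/\tr(S^2)\lesssim 1/\srk(S)$ — this is exactly where $d_{\mathrm{eff}}=\min\{\srk(S),\srk(S^2)\}$ enters, and the condition $d_{\mathrm{eff}}\gtrsim\log(n/\delta)/\epsilon^2$ is what forces all these relative errors below $\epsilon$. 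The condition $n/\log(n/\delta)\gtrsim 1/\epsilon^2$ is what controls the sampling error in $\widehat S\to S$ (and kills the $\frac1n\|\bx_i\|^4$ term).

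\medskip

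The main obstacle I anticipate is the \emph{uniformity over $i$ combined with the dependence between $\bx_i$ and $\widehat S$}: the quantity $\bx_i^\top\widehat S\bx_i$ contains the self-term $\frac1n\|\bx_i\|^4$, and one must be careful that the leave-one-out decomposition does not introduce a bias comparable to $\epsilon\,\tr(S^2)$ after the union bound over $n$ events — this is why the sample-size requirement is stated in the form $n/\log(n/\delta)\gtrsim 1/\epsilon^2$ rather than merely $n\gtrsim 1/\epsilon^2$. A secondary technical point is keeping the Hanson–Wright deviation terms (which scale like $\sqrt{\log(n/\delta)}$ times a Frobenius norm and $\log(n/\delta)$ times an operator norm) small relative to the mean $\tr(S^2)$ simultaneously; this again reduces to lower bounds on the two effective ranks, and it is precisely the reason the hypothesis involves $d_{\mathrm{eff}}$ rather than the ambient dimension $d$ — giving the claimed independence from over-parameterization once one recalls that linear models have $p=d$. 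Finally, I would double-check the reduction to residuals: since we need the bound for \emph{all} $\theta$, and the weights $w_i$ depend on $\theta$, the key observation making this painless is that $\mu$ is literally a convex combination of the fixed (data-dependent, $\theta$-independent) numbers $q_i$, so no additional union bound over $\theta$ is needed.
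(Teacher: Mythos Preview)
Your reduction is exactly the one the paper uses: for linear models $\mu(\theta)$ is a convex combination of the $\theta$-independent quantities $q_i=\bx_i^\top\widehat S\bx_i/\|\widehat S\|_F^2$, so a uniform two-sided bound on $q_i$ over $i\in[n]$ suffices. The leave-one-out split $\bx_i^\top\widehat S\bx_i=\tfrac1n\|\bx_i\|^4+\tfrac{1}{n}\sum_{j\ne i}(\bx_i^\top\bx_j)^2$ is also shared. Two points deserve correction or comparison.

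\textbf{The self-term is not lower order.} Under the hypotheses of the theorem, $n$ need only satisfy $n/\log(n/\delta)\gtrsim 1/\epsilon^2$, which can be far smaller than $\tr(S)^2/\tr(S^2)$ (this ratio can be as large as $d$). So $\tfrac1n\|\bx_i\|^4\approx\tfrac1n\tr(S)^2$ is in general comparable to the cross term $\approx\tr(S^2)$, not negligible. The paper handles it explicitly via Hanson--Wright on $\|\bx_i\|^2=z_i^\top S z_i$, obtaining $\|\bx_i\|^2\in(1\pm\epsilon)\tr(S)$ uniformly in $i$ provided $\srk(S)\gtrsim\log(n/\delta)/\epsilon^2$; this is precisely where $\srk(S)$ enters $d_{\mathrm{eff}}$, and your plan omits it.

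\textbf{Conditioning direction for the cross term.} You condition on $\widehat S_{-i}$ and apply Hanson--Wright in $\bx_i$, which forces you to control the random matrix norms $\|S^{1/2}\widehat S_{-i}S^{1/2}\|_F$ and $\|S^{1/2}\widehat S_{-i}S^{1/2}\|$. The Frobenius norm can be handled, but the operator norm is awkward: matrix Bernstein would need $n$ comparable to an effective dimension, which is not assumed. The paper takes the dual route---condition on $\bx_i$, then $\{(\bx_i^\top\bx_j)^2\}_{j\ne i}$ are i.i.d.\ sub-exponential scalars with mean \emph{and} $\psi_1$-norm both equal (up to constants) to $\bx_i^\top S\bx_i$. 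Scalar Bernstein then gives $\tfrac{1}{n-1}\sum_{j\ne i}(\bx_i^\top\bx_j)^2\in(1\pm\epsilon)\,\bx_i^\top S\bx_i$ directly from $n\gtrsim\log(n/\delta)/\epsilon^2$, with no matrix norms to bound. A final Hanson--Wright on $\bx_i^\top S\bx_i=z_i^\top S^2 z_i$ gives $\bx_i^\top S\bx_i\in(1\pm\epsilon)\tr(S^2)$ from $\srk(S^2)\gtrsim\log(n/\delta)/\epsilon^2$. Finally, because $\|\widehat S\|_F^2=\tfrac1n\sum_i\bx_i^\top\widehat S\bx_i$, the denominator is automatically sandwiched by the same bounds as the numerators, so no separate concentration for $\|\widehat S\|_F^2$ (and no matrix Bernstein) is needed.
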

This theorem is established by leveraging the high dimensionality of inputs, as stated by the condition $d_{\mathrm{eff}}\gtrsim \log n$. Notably, this condition includes the important regimes like $n\sim \log (d_{\mathrm{eff}})$ (for sparse recovery) and $n\sim d_{\mathrm{eff}}$ (the proportional scaling). 

We remark that  the  two theorems are complementary for linear models. 
Specifically, consider the isotropic case where $S=I_d$. 
The two theorems together can cover $n\gtrsim 1$
(Theorem \ref{thm: weak: LM} holds for $n\lesssim e^d$, and Theorem \ref{thm: weak: OLM} holds for $n\gtrsim d^2$).

\subsection{Two-layer neural networks}
Consider two-layer neural networks given by $f(x;\theta)=\sum_{k=1}^m a_k\phi(b_k^\top x)$ with $a_k\in\{\pm1\}$ to be fixed. We use $\theta=(b_1^\top,\cdots,b_m^\top)^\top\in\bbR^{md}$ to denote the concatenation of all trainable parameters. Here, $\phi:\RR\mapsto\RR$ is an activation function with a non-degenerate derivative as defined below.

\begin{assumption}\label{ass: two-layer activation}
There exist constants $\beta>\alpha>0$ such that $\alpha\leq\phi'(z)\leq\beta$ holds for any $z\in\RR$.
\end{assumption}

A typical nonlinear activation function that satisfies Assumption \ref{ass: two-layer activation} is $\alpha$-Leacky ReLU: $\!\phi(z)\!=\!\max\{\alpha z,\!z\}$ with $\alpha\!>\!0$. 
    

\begin{theorem}[Two-layer network]\label{thm: weak: 2NN}
Consider the two-layer network $f(\cdot;\theta)$. Suppose Assumption~\ref{ass: two-layer activation} and  Assumption \ref{assumption: input} hold. For any $\epsilon,\delta\in(0,1)$, if $n/\log(n/\delta)\gtrsim 1/\epsilon^2$ and $d_{\mathrm{eff}}\gtrsim\log(n/\delta)/\epsilon^2$, then \wp at least $1-\delta$, it holds that for any $\theta\in\RR^{md}$ that 
$\frac{\alpha^2(1-\epsilon)^2}{\beta^2(1+\epsilon)^2}\leq \mu(\theta)\leq \frac{\beta^2(1+\epsilon)^2}{\alpha^2(1-\epsilon)^2}$.
\end{theorem}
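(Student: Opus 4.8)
\textbf{Proof proposal for Theorem \ref{thm: weak: 2NN}.}

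The plan is to reduce the two-layer network case to the linear-model case (Theorem \ref{thm: weak: LM}) by exploiting that the only object entering $\mu(\theta)$ is the collection of model gradients $\nabla f(x_i;\theta)$, and under Assumption \ref{ass: two-layer activation} these are ``sandwiched'' between linear-model gradients up to the factors $\alpha,\beta$. Concretely, write $g_i:=\nabla f(x_i;\theta)\in\RR^{md}$; then the $k$-th block is $(g_i)_k = a_k\phi'(b_k^\top x_i)\,x_i$, so $\|g_i\|^2=\sum_{k=1}^m \phi'(b_k^\top x_i)^2\|x_i\|^2$ and $\alpha^2 m\|x_i\|^2\le\|g_i\|^2\le\beta^2 m\|x_i\|^2$. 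Recalling $\Sigma(\theta)=\frac1n\sum_i r_i^2 g_i g_i^\top$ with residuals $r_i=f(x_i;\theta)-y_i$, $G(\theta)=\frac1n\sum_i g_i g_i^\top$, and $2\cL(\theta)=\frac1n\sum_i r_i^2$, the two quantities are
\[
\gamma_1(\theta)=\tr(\Sigma G)=\frac{1}{n^2}\sum_{i,j} r_i^2 \langle g_i,g_j\rangle^2,\qquad
\bar\gamma_1(\theta)=2\cL(\theta)\|G\|_F^2=\frac{1}{n^2}\Big(\sum_i r_i^2\Big)\Big(\frac1n\sum_{j,l}\langle g_j,g_l\rangle^2\Big)\cdot\frac{n}{?}
\]
so more carefully $\bar\gamma_1=\big(\frac1n\sum_i r_i^2\big)\cdot\frac{1}{n^2}\sum_{j,l}\langle g_j,g_l\rangle^2$. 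The inner products satisfy $\langle g_i,g_j\rangle=\big(\sum_k \phi'(b_k^\top x_i)\phi'(b_k^\top x_j)\big)\langle x_i,x_j\rangle$, so $\alpha^2 m\,\langle x_i,x_j\rangle \le \langle g_i,g_j\rangle \le \beta^2 m\,\langle x_i,x_j\rangle$ when $\langle x_i,x_j\rangle\ge0$, and the reversed inequalities when $\langle x_i,x_j\rangle<0$ — but since only squared inner products $\langle g_i,g_j\rangle^2$ appear, we get the clean two-sided bound $\alpha^4 m^2 \langle x_i,x_j\rangle^2\le \langle g_i,g_j\rangle^2\le \beta^4 m^2\langle x_i,x_j\rangle^2$ regardless of sign.

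The key steps, in order, are: (1) express $\mu(\theta)=\gamma_1/\bar\gamma_1$ purely in terms of the Gram-type quantities $\{\langle g_i,g_j\rangle^2\}$ and residuals $\{r_i^2\}$ as above; (2) apply the two-sided bound $\alpha^4 m^2\langle x_i,x_j\rangle^2 \le \langle g_i,g_j\rangle^2 \le \beta^4 m^2\langle x_i,x_j\rangle^2$ termwise in numerator and denominator — since all summands are nonnegative and the residual weights $r_i^2$ are the same in both, the $m^2$ factors cancel and we obtain
\[
\frac{\alpha^4}{\beta^4}\cdot\frac{\sum_{i,j} r_i^2 \langle x_i,x_j\rangle^2}{\big(\frac1n\sum_i r_i^2\big)\sum_{j,l}\langle x_j,x_l\rangle^2}\;\le\;\mu(\theta)\;\le\;\frac{\beta^4}{\alpha^4}\cdot\frac{\sum_{i,j} r_i^2 \langle x_i,x_j\rangle^2}{\big(\frac1n\sum_i r_i^2\big)\sum_{j,l}\langle x_j,x_l\rangle^2};
\]
(3) recognize that the ratio in the middle is exactly $\mu^{\mathrm{lin}}(\theta')$, the loss-alignment ratio for a linear model with the same inputs (with a suitable $\theta'$ realizing the residuals $r_i$ — note that for a linear model with $d$-dimensional input and $n\lesssim$ large we cannot realize arbitrary residual vectors, so instead I would not route through an actual linear predictor but rather re-run the \emph{proof} of Theorem \ref{thm: weak: LM}, which I expect controls exactly this input-Gram ratio uniformly over all nonnegative weight vectors $(r_i^2)_i$, giving $1-\epsilon\le \mu^{\mathrm{lin}}\le 1+\epsilon$-type bounds under the stated conditions on $n$ and $d_{\mathrm{eff}}$); (4) combine to get $\frac{\alpha^2(1-\epsilon)^2}{\beta^2(1+\epsilon)^2}\le\mu(\theta)\le\frac{\beta^2(1+\epsilon)^2}{\alpha^2(1-\epsilon)^2}$, absorbing any discrepancy between $\alpha^4/\beta^4$ and the cleaner $\alpha^2/\beta^2$ in the statement by noting the bound we want is weaker (since $\alpha<\beta$, $\alpha^4/\beta^4<\alpha^2/\beta^2$, so our bound is actually tighter and implies theirs).

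The main obstacle I anticipate is step (3): making precise that the probabilistic estimate underlying Theorem \ref{thm: weak: LM} is genuinely a statement about the input Gram matrix $X^\top X$ (equivalently about $\sum_{i,j}r_i^2\langle x_i,x_j\rangle^2$ vs. $\frac1n(\sum r_i^2)\|X^\top X\|_F^2$ and its sandwich) that holds \emph{uniformly in the residual weights}, rather than something tied to the specific linear predictor. This requires inspecting that proof and checking that the concentration it uses — presumably showing $\frac1n\sum_j \langle x_i,x_j\rangle^2 \approx \langle x_i, (\frac1n\sum_j x_jx_j^\top) x_i\rangle$ concentrates around $\|x_i\|^2\cdot(\text{something})$, or more likely that $G^{\mathrm{lin}}=\frac1n\sum x_ix_i^\top$ is well-conditioned on its range and $\langle x_i,x_j\rangle^2$ terms concentrate — is a per-sample or per-pair event independent of how one later weights the samples. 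Once that uniformity is in hand, the reduction is essentially bookkeeping; the activation bounds only cost the harmless constant factor $(\beta/\alpha)^2$ (or even $(\beta/\alpha)^4$, which is stronger). A secondary point to handle carefully: the two-layer gradient lives in $\RR^{md}$ and $G(\theta)$ has a block structure $G(\theta)_{kl}=\frac1n\sum_i \phi'(b_k^\top x_i)\phi'(b_l^\top x_i) a_k a_l x_i x_i^\top$; one must verify that $\|G(\theta)\|_F^2=\sum_{k,l}\|G(\theta)_{kl}\|_F^2$ is indeed reproduced by the scalar computation $\frac1{n^2}\sum_{j,l}\langle g_j,g_l\rangle^2$ via the identity $\|G\|_F^2 = \tr(G^2) = \frac1{n^2}\sum_{j,l}\langle g_j,g_l\rangle^2$, which holds for any $G$ of the form $\frac1n\sum_i g_i g_i^\top$ and needs no structural assumption.
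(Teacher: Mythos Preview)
Your approach is essentially the paper's: write $\mu(\theta)$ in terms of the squared Gram entries $\langle g_i,g_j\rangle^2$, sandwich them via $\alpha^4 m^2\langle x_i,x_j\rangle^2\le\langle g_i,g_j\rangle^2\le\beta^4 m^2\langle x_i,x_j\rangle^2$, and reduce to the input-only ratio controlled in the proof of Theorem~\ref{thm: weak: LM}. Your anticipated obstacle in step~(3) is exactly how that proof proceeds: it bounds the linear-model $\mu$ above and below by $\min_i\big(\tfrac1n\sum_j\langle x_i,x_j\rangle^2\big)\big/\max_i\big(\tfrac1n\sum_j\langle x_i,x_j\rangle^2\big)$ and its reciprocal, quantities depending only on the inputs and hence uniform over all residual weightings.

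One correction: your final sentence has the implication backwards. Since $\alpha<\beta$, we have $\alpha^4/\beta^4<\alpha^2/\beta^2$, so the lower bound you derive, $\mu\ge(\alpha^4/\beta^4)(1-\epsilon)^2/(1+\epsilon)^2$, is \emph{weaker} than the stated $(\alpha^2/\beta^2)(1-\epsilon)^2/(1+\epsilon)^2$; a smaller lower bound does not imply a larger one (and likewise $\beta^4/\alpha^4>\beta^2/\alpha^2$ gives a weaker upper bound). In fact the paper's own arithmetic yields the same factor: the step $(\alpha^2 m\,x_i^\top x_j)^2/(\beta^2 m\,x_i^\top x_j)^2$ equals $\alpha^4/\beta^4$, not $\alpha^2/\beta^2$. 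So the exponent $2$ in the theorem statement appears to be a typo for $4$; your argument, like the paper's, actually establishes the bounds with $\alpha^4/\beta^4$ and $\beta^4/\alpha^4$.
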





This theorem establishes a uniform control for the  alignment strength $\mu(\theta)$. Particularly, the number of samples required is {\em independent} of the network width $m$. The proof follows a similar approach to that of Theorem~\ref{thm: weak: LM} and can be found in Appendix~\ref{appendix: proof: weak align}. We remark that the conditions such as the non-degeneracy in activation function's derivatives are obligatory  for establishing alignment across the {\em entire loss landscape}. 
In practice, such stringent conditions may not be necessary, as the focus is on  regions  navigated by SGD. 

\subsection{Numerical validations}

\begin{figure*}[!hb]
    \centering
    \subfloat[$4$-layer Linear networks]{\label{fig: 1-a}
   \includegraphics[width=0.245\textwidth]{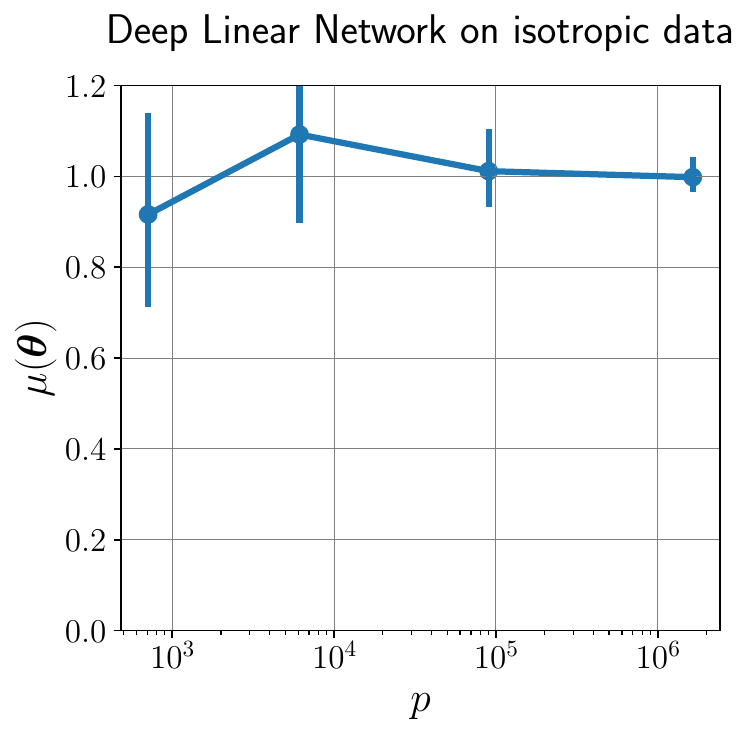}
    \hspace{-.2cm}
    \includegraphics[width=0.25\textwidth]{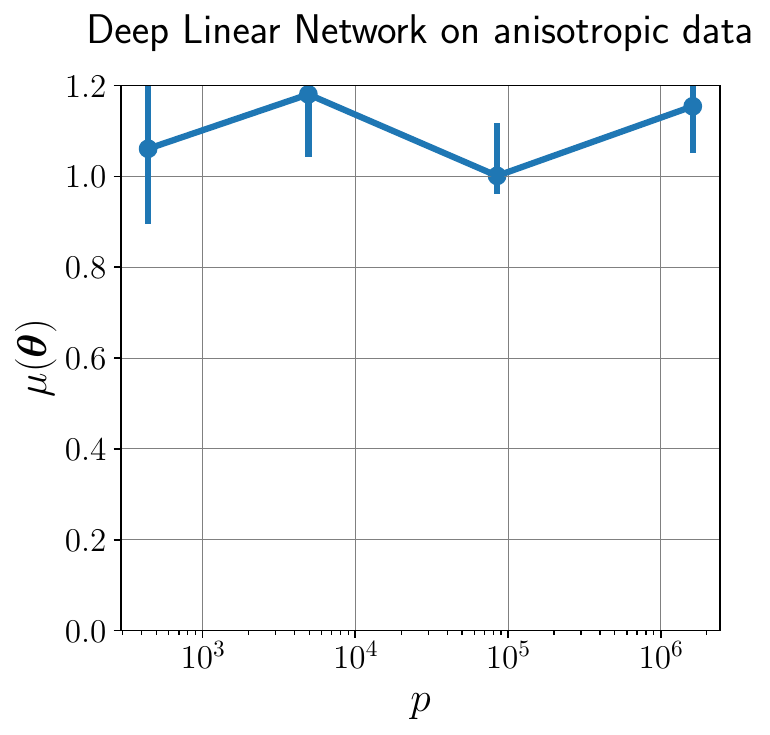}
    }
    \subfloat[Two-layer ReLU networks]{\label{fig: 1-b}
    \includegraphics[width=0.24\textwidth]{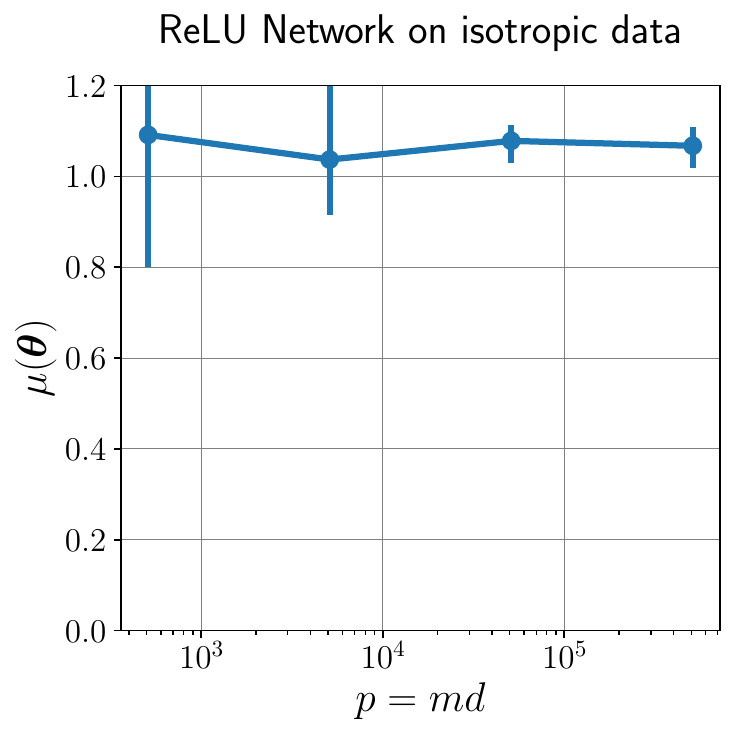}
    \hspace{-.2cm}
    \includegraphics[width=0.235\textwidth]{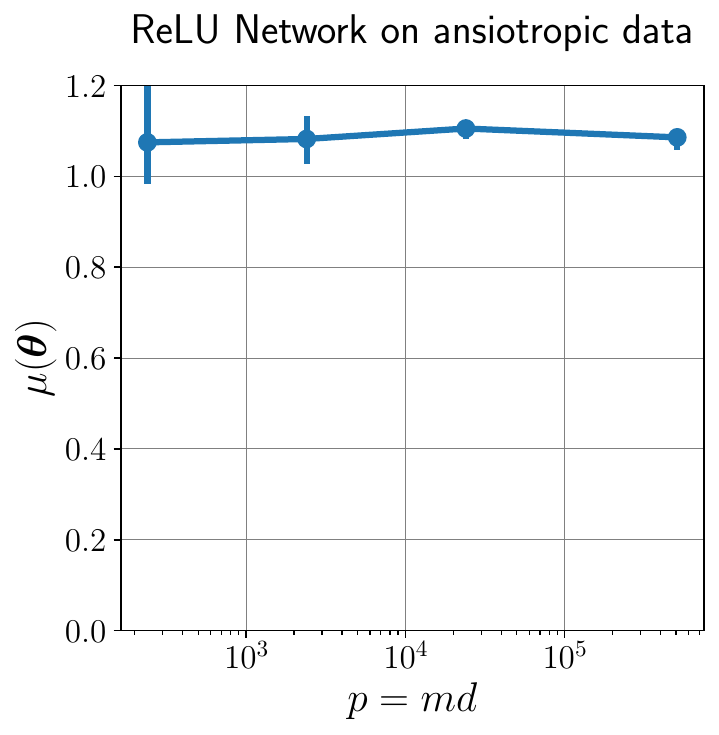}
    }
    \vspace*{-.5em}
    \caption{\small The alignment strength $\mu(\theta)$ is close to $1$ for various models across different model sizes.
    For all experiments, we set $n=5\log(d_{\mathrm{eff}}), d_{\mathrm{eff}}=50$. The input data are drawn from $\cN(0,S)$. For isotropic data,  $S=I_{50}$; for anisotropic data, $S=\diag(\lambda_1,\dots,\lambda_D)$  with $\lambda_k=1/\sqrt{k}$ for $k\in [D]$ where $D$ is chosen such that $d_{\mathrm{eff}}=50$. 
    The error bar corresponds to the standard deviation over $20$ independent runs. 
    The targets are generated by a linear model, i.e., $y_i=\langle w^*,x_i\rangle$, where $w^*\sim N(0,I_d)$.
    We compute $\mu(\theta)$ for randomly chosen $\theta$'s. 
    }
    \label{fig: weak alignment}
    \vspace*{-.5em}
\end{figure*}

Here we present small-scale experiments to validate our theoretical results with a $4$-layer linear network and  two-layer ReLU network (both layers are trainable). Both isotropic and anisotropic input distributions are examined and  we set $n=5\log (d_{\mathrm{eff}})$ to focus on the low-sample regime. The results are reported in Figure \ref{fig: weak alignment} 
and it is evident that across all examined scenarios,  the alignment strength is consistently well-controlled and independent of the model size.

\section{Directional alignment} \label{section: theory, strong align}

In this section, we delve  into a type of directional alignment: {\em whether noise energy along a given direction  is proportional to the curvature of loss landscape along that direction}.  Specifically, we use the following metric to quantify the alignment strength.
\begin{definition}[Directional alignment]\label{equ: def: strong align, any direction}
Given $ v\in\bbR^p$, the alignment along $ v$ is defined as 
$
    g(\theta; v):=\frac{ v^\top\Sigma_1(\theta) v}{2\cL(\theta)\sbracket{ v^\top G(\theta) v}},
$
where $v^\top G(\theta) v$ denotes the curvature of local landscape along $v$.
\end{definition}
Denoting by $z_t = \theta_t^\top v$ the component  along $v$, we have
$$
	\EE[z_{t+1}^2] = \EE[\cG_v(z_t)] + \eta^2\EE[(v^\top\xi_t)^2],
	$$
	where $\cG_v(z)=(z-\eta v^\top\nabla \cL(\theta))^2$ represents the contribution from GD part. The additional term $\EE[(v^\top\xi_t)^2]$ arises due to the injected noise, satisfying  $\EE[|\xi(\theta)^\top v|^2]=v^\top \Sigma_1(\theta_t)v - (v^\top\nabla \cL(\theta_t))^2=2g(\theta_t,v)\cL(\theta_t)v^\top G(\theta_t)v - (v^\top\nabla \cL(\theta_t))^2$ by Definition \ref{equ: def: strong align, any direction}. Therefore, the alignment defined above is crucial in describing the dynamics of SGD in  subspaces.





\begin{theorem}[OLM]\label{thm: strong: OLM}
Consider OLMs and suppose Assumption \ref{assumption: input} holds. For any $\epsilon,\delta\in(0,1)$, if $n\gtrsim\max\{(d^2\log^2\sbracket{1/{\epsilon}}+\log^2(1/\delta))/\epsilon, (d\log\sbracket{1/{\epsilon}}+\log(1/\delta))/\epsilon^2\},$ then \wp at least $1-\delta$,  it holds for any $\theta,v\in\RR^p$ that
$\frac{1-\epsilon}{(1+\epsilon)^2}\leq g(\theta; v)\leq\frac{2+\epsilon}{(1-\epsilon)^2}.$

\end{theorem}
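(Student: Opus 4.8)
\textbf{Proof proposal for Theorem~\ref{thm: strong: OLM}.}
The plan is to reduce the directional quantity $g(\theta;v)$ to a statement about a single rank-one object and then invoke the same concentration machinery that underlies Theorem~\ref{thm: weak: OLM}. For an OLM $f(\bx;\btheta)=F(\btheta)^\top\bx$, writing $w=F(\theta)$ and $r_i=w^\top\bx_i-y_i$ for the fit errors, one has $\nabla f(\bx_i;\theta)=J(\theta)^\top\bx_i$ where $J(\theta)=\partial F/\partial\theta$, so that $\Sigma_1(\theta)=\tfrac1n\sum_i r_i^2 (J^\top\bx_i)(J^\top\bx_i)^\top$, $G(\theta)=\tfrac1n\sum_i (J^\top\bx_i)(J^\top\bx_i)^\top$, and $2\cL(\theta)=\tfrac1n\sum_i r_i^2$. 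Setting $u = J(\theta)v\in\RR^d$, the whole ratio collapses to
\begin{equation*}
g(\theta;v)=\frac{\sum_i r_i^2\,(u^\top\bx_i)^2}{\bigl(\sum_i r_i^2\bigr)\cdot\tfrac1n\sum_i (u^\top\bx_i)^2}.
\end{equation*}
Thus it suffices to prove, uniformly over all $w\in\RR^d$ (which controls the $r_i$'s) and all directions $u\in\RR^d$, that the weighted average $\sum_i r_i^2 (u^\top\bx_i)^2/\sum_i r_i^2$ is within a $(1\pm\epsilon)$-type factor of the unweighted average $\tfrac1n\sum_i (u^\top\bx_i)^2$. Note $u$ ranges over $\RR^d$ regardless of $p$, which is exactly why the bound is independent of the re-parameterization.

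Next I would handle the uniformity in $w$. The weights $r_i^2=(w^\top\bx_i-y_i)^2$ are, up to absorbing $y_i$ into an augmented input, quadratic forms in a fixed vector $\tilde w=(w,-1)$ applied to augmented data $\tilde\bx_i=(\bx_i,y_i)$; so $r_i^2=(\tilde w^\top\tilde\bx_i)^2$. Hence both numerator and denominator are sums of the form $\sum_i (\tilde w^\top\tilde\bx_i)^2 (u^\top\bx_i)^2$, i.e.\ they are quartic in the data but only involve the two directions $\tilde w$ and $u$. The key structural fact to exploit is that for Gaussian $\bx_i\sim\cN(0,S)$ the population version satisfies $\E[(\tilde w^\top\tilde\bx)^2(u^\top\bx)^2]=\E[(\tilde w^\top\tilde\bx)^2]\,\E[(u^\top\bx)^2]+2(\text{cross term})^2$, and the cross term is controlled by $\|S\|$; more simply, the ratio $\E[(\tilde w^\top\tilde\bx)^2(u^\top\bx)^2]/(\E[(\tilde w^\top\tilde\bx)^2]\E[(u^\top\bx)^2])$ lies in $[1,3]$, which already gives the ``$2$'' appearing in the upper bound $\tfrac{2+\epsilon}{(1-\epsilon)^2}$ and the ``$1$'' in the lower bound after the empirical fluctuations are accounted for. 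So the target reduces to a uniform concentration statement: with high probability, for all $\tilde w$ and all $u$,
\begin{equation*}
\Bigl|\tfrac1n\textstyle\sum_i (\tilde w^\top\tilde\bx_i)^2(u^\top\bx_i)^2 - \E[(\tilde w^\top\tilde\bx)^2(u^\top\bx)^2]\Bigr|\ \le\ \epsilon\,\E[(\tilde w^\top\tilde\bx)^2]\,\E[(u^\top\bx)^2],
\end{equation*}
and similarly (with $u$ absent) for $\tfrac1n\sum_i r_i^2$ versus $\E[r_i^2]$ and for $\tfrac1n\sum_i (u^\top\bx_i)^2$ versus $u^\top S u$. These are exactly the covariance-concentration estimates already proved in the paper for Theorem~\ref{thm: weak: OLM} (indeed $\|G(\theta)\|_F^2$ there is $\tfrac1{n^2}\sum_{i,j}(J^\top\bx_i\cdot J^\top\bx_j)^2$, a fourth-moment object of the same flavor), so I would cite those lemmas rather than reprove them; this is why the sample-size condition is identical to that of Theorem~\ref{thm: weak: OLM}.

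Assembling the pieces: divide the concentration bound for the numerator by that for the product of the two denominator factors, track how the $\epsilon$'s propagate through the quotient (a numerator within $\E[\text{num}]\pm\epsilon(\cdots)$ over a denominator within $(1\pm\epsilon)$ of its mean yields the stated $(1-\epsilon)/(1+\epsilon)^2$ and $(2+\epsilon)/(1-\epsilon)^2$ envelopes once the population ratio is pinned to $[1,2]$), and take a union bound over the three events. The supremum over the infinite families of $\tilde w$ and $u$ is handled either by an $\epsilon$-net over the two-dimensional families of unit vectors (the quantities are $1$-homogeneous in each of $\tilde w$ and $u$ separately, so only directions matter) or, more cleanly, by noting that these sup-concentration statements are precisely what the appendix lemmas for Theorem~\ref{thm: weak: OLM} already deliver uniformly. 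The main obstacle I anticipate is the bookkeeping at the boundary $\cL(\theta)\to0$ and, relatedly, making the ``population ratio lies in $[1,2]$'' step genuinely uniform in $\tilde w$ and $u$ including degenerate alignments (e.g.\ $u$ parallel to the $y$-component direction of $\tilde w$), where the cross term is maximal; this is where the constant $2$ rather than $1$ must be carried through carefully, and where one must confirm the degenerate ratio $0/0$ at global minima is consistently assigned the value in the claimed interval.
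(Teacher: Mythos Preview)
Your reduction is exactly the paper's: $g(\theta;v)$ collapses to a ratio over two $\RR^d$-directions (independent of $p$), and one then proves uniform concentration of the fourth-moment numerator and the two second-moment denominators over $\bbS^{d-1}\times\bbS^{d-1}$ via a covering argument with sub-Weibull tails, finally sandwiching with the population ratio. So the strategy is the same.

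Two points to straighten out. First, you have the logical dependency backwards: in the paper, Theorem~\ref{thm: weak: OLM} is a corollary of Theorem~\ref{thm: strong: OLM}, not the other way around. The uniform fourth-moment concentration (Lemma~\ref{lemma: strong: d direction}) and the quadratic-form concentration (Lemma~\ref{lemma: quadratic covarience concentrate}) are proved \emph{for} this theorem and then reused; they are not pre-existing ingredients from the loss-alignment proof that you can simply cite. Second, the augmentation $\tilde\bx_i=(\bx_i,y_i)$ is unnecessary and only works under realizability anyway. The paper avoids it by writing the residual as $r_i=(F(\theta)-F(\theta^*))^\top\bx_i$, so both the ``weight'' direction $r(\theta)$ and the ``test'' direction $\nabla F(\theta)v$ already sit in $\RR^d$; after whitening $z_i=S^{-1/2}\bx_i$ one normalizes to $\tilde r,\tilde w\in\bbS^{d-1}$ and the entire argument runs over standard Gaussians.

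Finally, your Isserlis computation that the population ratio lies in $[1,3]$ is correct; you should not try to massage it down to $2$ by hand-waving about ``empirical fluctuations.'' The paper reaches the constant $2$ via Lemmas~\ref{lemma: OLM equation Gaussian} and~\ref{lemma: OLM nabla-L-v bound}, so if you want the stated upper bound $\tfrac{2+\epsilon}{(1-\epsilon)^2}$ verbatim you should follow that route rather than the raw Isserlis identity.
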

This theorem establishes  a uniform control for the alignment across all directions and the entire parameter space. Notably, the condition is independent of the degree of overparameterization. 
The subsequent theorem shows that if focusing on some specific directions, the sample size $n$ can be further relaxed.

\begin{theorem}[Linear model]\label{thm: strong: LM} 
Consider linear models and suppose Assumption \ref{assumption: input} holds. Let $\cV=\{v_1,\cdots,v_K\}$ be $K$ fixed directions of interest.
For any $\epsilon\in(0,1/2)$, there exist $C_1=C_1(\epsilon)>0$ and $C_2=C_2(\epsilon)>0$ such that if $n\gtrsim C_1 d\log d$, then \wp at least $1-\frac{K C_2}{n^2}-\epsilon^{d}$, it holds for any $\theta\in\RR^p$ and $v\in \cV$ that
\begin{align*}
    \frac{1-\epsilon}{(1+\epsilon)^2}\leq g(\theta; v)\leq\frac{2+\epsilon}{(1-\epsilon)^2}.
\end{align*}
\end{theorem}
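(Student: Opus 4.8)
The plan is to reduce the directional quantity $g(\theta;v)$ to a ratio of two quadratic forms in the (random) empirical Gram matrix and then control this ratio uniformly over $\theta$ for each fixed $v$. For a linear model $f(x;\theta)=\theta^\top x$ we have $\nabla f(x_i;\theta)=x_i$ independent of $\theta$, so $G(\theta)\equiv\widehat S:=\frac1n\sum_i x_ix_i^\top$ and $\Sigma_1(\theta)=\frac1n\sum_i r_i(\theta)^2\, x_ix_i^\top$ with $r_i(\theta)=\langle\theta,x_i\rangle-y_i$, while $2\cL(\theta)=\frac1n\sum_i r_i(\theta)^2$. Hence
\[
g(\theta;v)=\frac{\frac1n\sum_i r_i^2\,\langle v,x_i\rangle^2}{\bigl(\frac1n\sum_i r_i^2\bigr)\bigl(v^\top\widehat S\,v\bigr)}.
\]
Writing $p_i=r_i^2/\sum_j r_j^2$ (a probability vector depending on $\theta$) and $q_i=\langle v,x_i\rangle^2$, the numerator over the first denominator is $\sum_i p_i q_i$, a weighted average of the $q_i$; the second denominator is $v^\top\widehat S v=\frac1n\sum_i q_i$, the uniform average. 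Thus $g(\theta;v)=\frac{\sum_i p_i q_i}{\frac1n\sum_i q_i}$, and the whole problem is: a convex combination of the numbers $q_1,\dots,q_n$ divided by their mean. This lies in $[\,n\min_i q_i/\sum_i q_i,\; n\max_i q_i/\sum_i q_i\,]$, so it suffices to show the $q_i=\langle v,x_i\rangle^2$ are, with high probability, all within a constant factor of their mean — i.e. to rule out any single $\langle v,x_i\rangle^2$ being much larger than the average, and the sum $\frac1n\sum q_i$ being much smaller than $v^\top S v$. This is the same structural idea used for Theorem \ref{thm: weak: LM}, now carried out direction-by-direction, which is why one only needs the union bound factor $K$.

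The key steps, in order, are: (i) Fix $v\in\cV$ and set $g_i=\langle v,x_i\rangle/\sqrt{v^\top S v}\sim\cN(0,1)$, i.i.d. Then $q_i=(v^\top S v)g_i^2$. We must show (a) a lower bound $\frac1n\sum_i g_i^2\ge 1-\epsilon$ and an upper bound $\le 1+\epsilon$, both via a standard $\chi^2$ concentration (Bernstein / Laurent–Massart) with failure probability $\exp(-c n\epsilon^2)$ — absorbed into the $\epsilon^d$ / $C_2/n^2$ budget since $n\gtrsim d\log d$; and (b) a uniform bound $\max_{i\le n} g_i^2\lesssim \log n$, which holds with probability $1-C_2/n^2$ by a Gaussian maximal inequality and a union bound over $i\in[n]$. (ii) Combine: $g(\theta;v)\le \frac{\max_i q_i}{\frac1n\sum_i q_i}$ — but this alone only gives an $O(\log n)$ bound, which is too weak. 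So instead one must be more careful: the weights $p_i$ are not adversarial, because $r_i(\theta)=\langle\theta-w^*,x_i\rangle$ wait — actually $r_i$ need not be small, so one genuinely needs a uniform-over-$\theta$ argument. The right move is to bound $\sum_i p_i q_i$ uniformly in the probability vector $p$ subject to the constraint that $p$ arises as $r_i^2/\sum r_j^2$ with $r=X(\theta-?)-y$; equivalently, decompose $r=\Pi r + \Pi^\perp r$ onto the column space of $X$ and note $\langle v,x_i\rangle$ is itself a coordinate of a vector in that column space, so Cauchy–Schwarz in the appropriate inner product gives $\frac1n\sum_i r_i^2 q_i \le \|\widehat S\|_{\mathrm{op}}\cdot(\text{something})\cdot\frac1n\sum_i r_i^2\cdot(v^\top\widehat S v)$ up to constants — this is exactly the computation behind Theorem \ref{thm: weak: OLM}/\ref{thm: strong: OLM}, here needing only $v^\top\widehat S v\sim v^\top S v$ and $\|\widehat S\|\lesssim\|S\|$, which follow from $n\gtrsim d\log d$ by a matrix concentration (Vershynin-type covariance estimate), contributing the $\epsilon^d$ term. (iii) Take a union bound over the $K$ directions in $\cV$, giving the stated probability $1-\frac{KC_2}{n^2}-\epsilon^d$.

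The main obstacle is step (ii): the weights $p_i\propto r_i(\theta)^2$ are $\theta$-dependent and in principle could concentrate on whichever samples have the largest $\langle v,x_i\rangle^2$, so a naive worst-case bound over $p$ degrades to $O(\log n)$. Getting the clean constant bound $g(\theta;v)\le(2+\epsilon)/(1-\epsilon)^2$ requires exploiting that $r(\theta)$, as $\theta$ ranges over $\RR^d$, sweeps out only the affine family $\{X\theta-y\}$ — a $d$-dimensional object — so that the relevant quantity $\sup_\theta \sum_i r_i^2 q_i/\sum_i r_i^2$ is controlled by the operator norm of $\widehat S$ restricted to directions aligned with $v$, not by a free maximum over $p$. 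I expect this to go through by the same spectral/Cauchy–Schwarz argument as in the proof of Theorem \ref{thm: strong: OLM} (applied with the reparameterization map being the identity), with the high-dimensional input condition doing the work; the only genuinely new ingredient is the per-direction $\chi^2$ concentration of $\frac1n\sum_i\langle v,x_i\rangle^2$ around $v^\top S v$, which is elementary and is what allows the weaker sample-size requirement $n\gtrsim d\log d$ together with the failure budget $\epsilon^d$.
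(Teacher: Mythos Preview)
Your reduction is correct: for the linear model $\nabla F(\theta)=I$, so after whitening ($z_i=S^{-1/2}x_i$) one has
\[
g(\theta;v)=\frac{\tfrac1n\sum_i(\tilde r(\theta)^\top z_i)^2(\tilde w(v)^\top z_i)^2}{\bigl(\tfrac1n\sum_i(\tilde r(\theta)^\top z_i)^2\bigr)\bigl(\tfrac1n\sum_i(\tilde w(v)^\top z_i)^2\bigr)},
\]
with $\tilde w(v)=S^{1/2}v/\|S^{1/2}v\|$ fixed (independent of $\theta$) and $\tilde r(\theta)\in\bbS^{d-1}$ varying. You are also right that the two denominator factors are handled by the uniform covariance estimate (Lemma~\ref{lemma: quadratic covarience concentrate}), and this is indeed what produces the $\epsilon^d$ failure term when one sets $\delta=\epsilon^d$.

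The gap is in your step (ii), the control of the numerator uniformly in $\theta$. You recognize that the naive $\max_i q_i$ bound is too loose, but the replacement you offer---``Cauchy--Schwarz in the appropriate inner product'' together with $\|\widehat S\|\lesssim\|S\|$---does not give what is needed. Operator-norm concentration of $\widehat S$ is a \emph{second-order} statement and does not by itself control the \emph{fourth-order} empirical average $\tfrac1n\sum_i(u^\top z_i)^2(w^\top z_i)^2$. The Cauchy--Schwarz step (Lemma~\ref{lemma: OLM nabla-L-v bound}) in the proof of Theorem~\ref{thm: strong: OLM} is applied only at the \emph{population} level, \emph{after} the empirical quartic form has been shown to concentrate; it is not a device for bounding the empirical quartic form directly. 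And simply invoking ``the same argument as in Theorem~\ref{thm: strong: OLM}'' is not enough: that proof uses Lemma~\ref{lemma: strong: d direction}, which uniformizes over \emph{both} unit vectors and therefore requires $n\gtrsim d^2$---exactly the condition you are trying to beat.

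What the paper actually uses is a separate concentration result (Lemma~\ref{lemma: strong: 1-fix 1-arb bound}, imported from \cite{cai2022nearly}): for a \emph{fixed} unit vector $w$ and $n\ge C_1(\epsilon)\,d\log d$, one has
\[
\sup_{u\in\bbS^{d-1}}\left|\tfrac1n\sum_i(z_i^\top u)^2(z_i^\top w)^2-\bbE\bigl[(z^\top u)^2(z^\top w)^2\bigr]\right|\le\epsilon
\]
with probability at least $1-C_2/n^2$. This lemma is precisely what drives the improved sample size $n\gtrsim d\log d$ and the $C_2/n^2$ failure probability per direction; the union bound over $v\in\cV$ then gives the $KC_2/n^2$ term. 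Once this concentration holds, the population value $\bbE[(u^\top z)^2(w^\top z)^2]=1+2(u^\top w)^2\in[1,3]$ together with Lemmas~\ref{lemma: OLM equation Gaussian}--\ref{lemma: OLM nabla-L-v bound} yields the stated constants. Your proposal is missing this ingredient; the $\chi^2$ concentration you call ``the only genuinely new ingredient'' is in fact the easy part.
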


It is worth noting that the above theorems establish the directional alignment for the {\em entire parameter space}.  However, in practice, what matter are typical regions navigated by SGD. This is the gap between our theory and the practice. 

\begin{figure*}[!ht]
\captionsetup[subfloat]{farskip=.3pt,captionskip=0.2pt}
\centering

\subfloat[Linear models under different regimes]{\label{fig: 2-a}
\hspace*{-1em}
\includegraphics[width=0.245\textwidth]{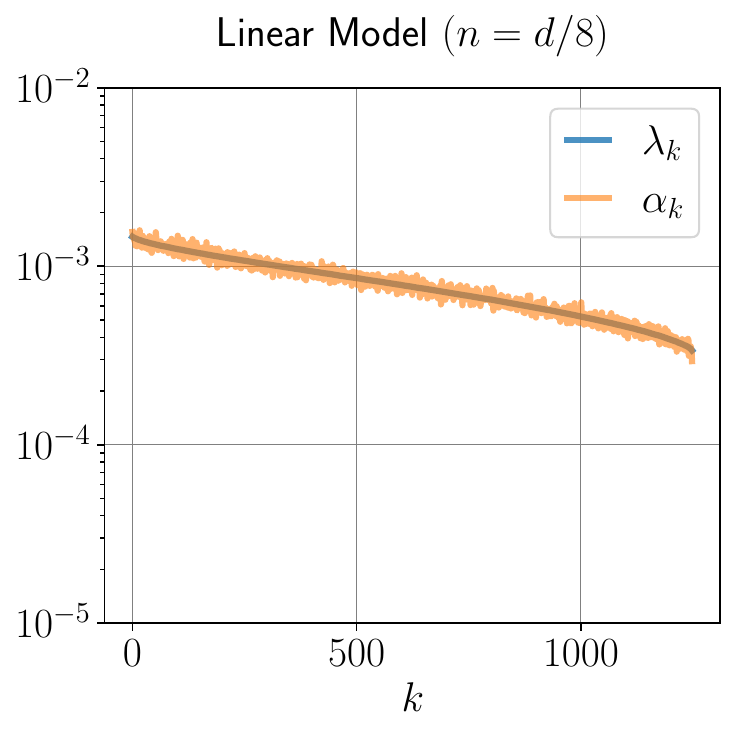}
\includegraphics[width=0.245\textwidth]{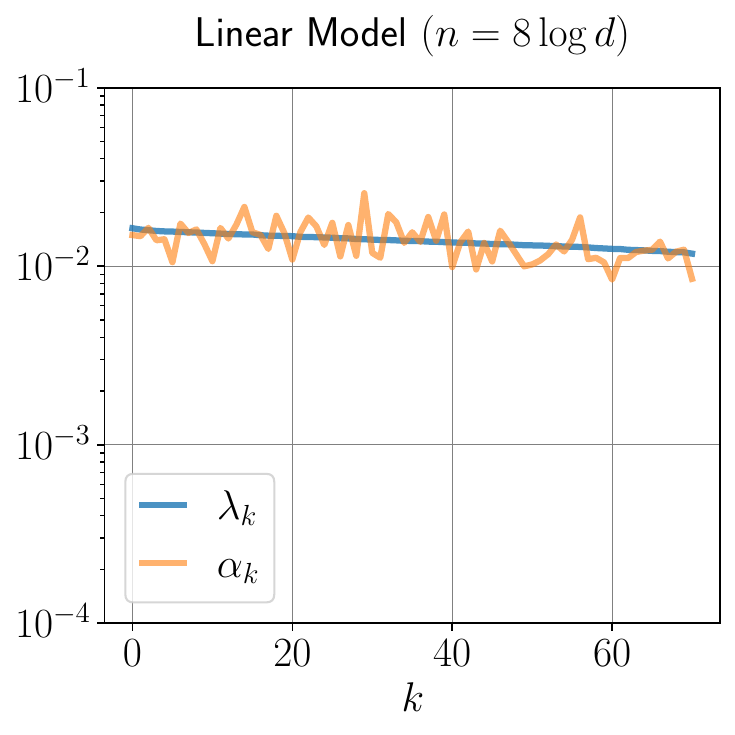}
}
\subfloat[Small neural networks]{\label{fig: 2-b}
   \includegraphics[width=0.245\textwidth]{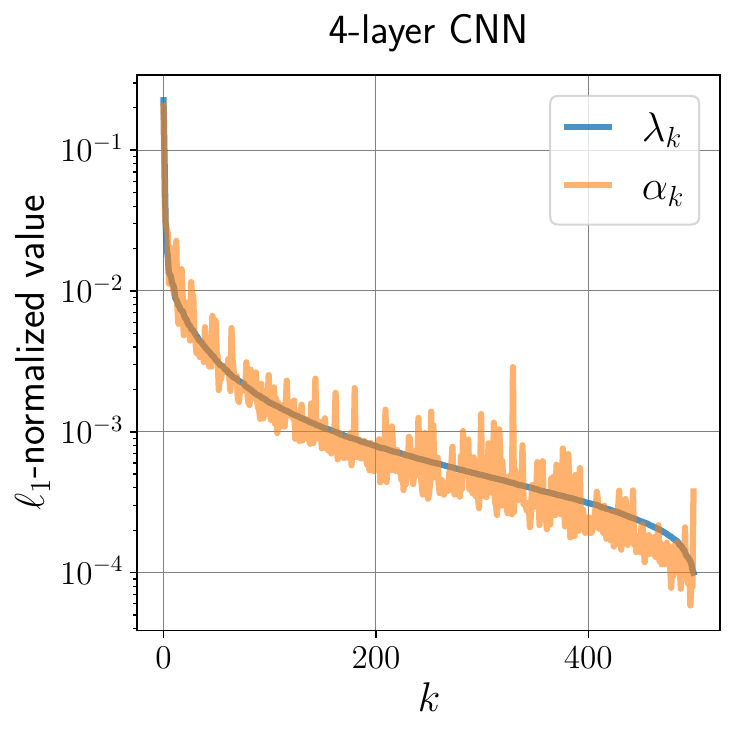}
  \includegraphics[width=0.245\textwidth]{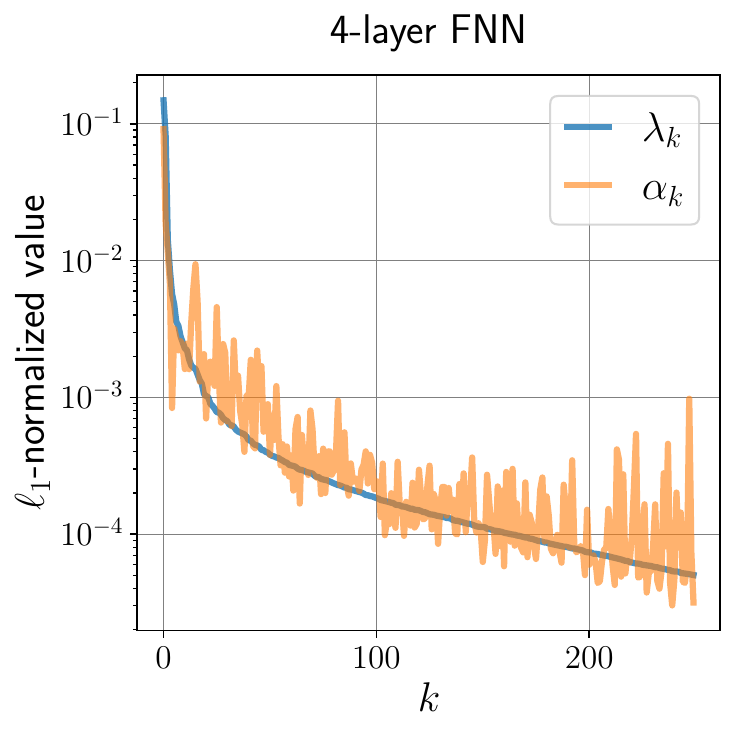}
}

\vspace*{-.2em}
\caption{\small 
How the components of noise energy in \textit{eigen-directions} $\{\alpha_k\}_k$ are proportional to the corresponding curvatures $\{\lambda_k\}_k$. $\alpha_k/\lambda_k$ can reflect the
directional alignment~\eqref{equ: def: strong align, any direction} along the eigen-directions. 
(a) Linear models on Gaussian data in the regimes with limited data, where we fix $d=10^4$ and set $n$ accordingly $(n=d/8,n=8\log d)$.
(b) 4-layer CNN and 4-layer FNN on CIFAR-10 dataset.
For more experimental details, we refer to Appendix~\ref{appendix: exp: setups}.
}
\label{fig: strong-alignment-OLMs}
\end{figure*}

{\bf Numerical validations.} 
In this experiment,   we consider the alignment along the eigen-directions of Hessian matrix. 
Let $G(\theta)=\sum_{k}\lambda_k  u_k  u_k^\top
$ be the eigen-decomposition of $G(\theta)$  where $\{\lambda_k\}_k$ are the eigenvalues in a decreasing order and $\{  u_k\}$ are the corresponding eigen-directions.  Here we omit the dependence of $\theta$ in $\{(\lambda_k,u_k)\}_k$ for brevity. 
Note that $\lambda_k$ is the curvature of local landscape along $  u_k$.  Denote by $\alpha_k = \bbE[|\xi^\top u_k|^2]/(2\cL(\theta))$ the relatively size of noise energy along $u_k$. Then, the directional alignment~\eqref{equ: def: strong align, any direction} satisfies $g(\theta,u_k)=\alpha_k/\lambda_k$.

In Figure \ref{fig: 2-a}, we examine with linear regression under the regimes of limited data, which is beyond our theorems. We still observed that $g(\theta,v)$ is close $1$ for all eigen directions $v$, which is consistent with our theoretical findings. 
In Figure \ref{fig: 2-b}, we further consider the classification of CIFAR-10 with a small convolutional neural  network (CNN) and fully-connected neural network (FNN). We can see that the observation is consistent with Figure \ref{fig: 2-a}, where the alignments in all eigen-directions are well-controlled, though the fluctuations become more significant.
\section{How SGD escapes from sharp minima}
\label{section: escaping}

In this section, we provide a fine-grained analysis of how SGD escapes from sharp minima by leveraging the  directional alignment of noise geometry. In contrast to existing analyses \citep{zhu2019anisotropic,mori2021logarithmic,wu2022does,xie2020diffusion,kleinberg2018alternative} which only considered the escape rate, we also delve into the escape directions. 

Let $\theta^*$ be a minimum of interest and consider the interpolation regime, i.e., $\cL(\theta^*)=0$. The local escape behavior can be characterized by linearizing the SGD dynamics, which corresponds to the linearized model $f(\cdot;\theta)\approx f(\cdot;\theta^*)+\<\nabla f(\cdot;\theta^*),\theta-\theta^*\>$. We refer to \citep[Section 3.2]{wu2022does} for more details. Thus, without loss of generality, we can simply consider the linearized model in the subsequent analysis.
Let $  {w} = \theta-\theta^*$ and $G(\theta^*)=\frac{1}{n}\sum_{i=1}^n  \nabla f(  x_i;\theta^*) \nabla f(  x_i;\theta^*)^\top$. Then, for the linearized model, we have $\cL(  {w})=\frac 1 2  {w}^\top G(\theta^*)  {w}$ and $\nabla \cL(w)=G(\theta^*)w$. Thus, the linearized SGD updates as follows
\[
  {w}_{t+1}=  {w}_t-\eta\left(G(\theta^*)  {w}_t+  \xi_t\right),
\]
where $  \xi_t$ is the SGD noise. 
We make the following assumption on the noise geometry. 
\begin{assumption}[Eigen-directional alignment]\label{assumption: eigen-alignment}
let $G(\theta^*)=\sum_{i=1}^p\lambda_i  u_i  u_i^\top$ be the eigen decomposition of $G(\theta^*)$.
Assume that there exist $A_1, A_2>0$ such that it holds for any $  {w}\in\RR^p$
\[
A_1 \cL(  {w})\lambda_i \leq \EE[|\xi(  {w})^{\top}  u_i|^2]\leq A_2 \cL(  {w}) \lambda_i.
\]
\end{assumption}
Section \ref{section: theory, strong align} has provided both theoretical and experimental evidence in support of this assumption. For the sake of clarity, we explicitly state it here as an underlying assumption.


{\bf Eigen-decomposition of SGD.}~ By leveraging Assumption \ref{assumption: eigen-alignment}, we can analyze the SGD dynamics in the eigenspace. Let $  {w}_t=\sum_{i=1}^d w_{t,i}  u_i$ with  $w_{t,i}=  u_i^{\top}{w}_t$. Then, $w_{t+1,i}=(1-\eta\lambda_i)w_{t,i}+\eta \xi_t^\top u_i$. Taking the expectation of the square of both sides, we obtain
\begin{align}\label{eqn: 000}
     \bbE\big[w_{t+1,i}^2\big]=(1-\eta\lambda_i)^2\bbE\big[w_{t,i}^2\big]+\eta^2\EE[|  u_i^{\top} \xi_t|^2],
\end{align}
where the noise term: $\EE[|u_i^{\top}  \xi_t|^2]\sim \lambda_i \cL(w_t)$ according to Assumption~\ref{assumption: eigen-alignment}.

Let $X_t=\sum_{i=1}^k\lambda_i\EE[w_{t,i}^2], Y_t = \sum_{i=k+1}^d \lambda_i \EE[w_{t,i}^2]$,   denoting the components of loss energy along sharp and flat directions, respectively. Let  $D_{t,k}=Y_t/X_t$, which
 measures the  concentration  of  loss energy along flat directions. Analogously,
let $P_{t,k}=\sum_{i=k+1}^d\EE[w_{t,i}^2]/\sum_{i=1}^k \EE[w_{t,i}^2]$, which measures the concentration of variance along flat directions. It is easy to show that $P_{t,k}\geq D_{t,k}\lambda_k/\lambda_{k+1}$. Therefore, when $\lambda_k/\lambda_{k+1}$ is lower bounded, a concentration of loss energy along flat directions can lead to a similar concentration in terms of variance.





\begin{theorem}[Escape of SGD]\label{thm: escape: SGD}
Suppose Assumption \ref{assumption: eigen-alignment} holds and let $\eta = \frac{\beta}{\|G(\theta^*)\|_{\rm F}}$. Then, there exists absolute constants $c_1,c_2>0$ such that if $\beta\geq c_1$, then SGD will escape from that minima and  for any $k\in [d]$, it holds that  when $t\geq \max\Big\{1, \frac{\log\big(c_2/\eta(\sum_{i=1}^k\lambda_i^2)^{1/2}\big)}{\log\beta}\Big\}$:
$
    D_{t,k}\gtrsim \frac{\sum_{i=k+1}^d\lambda_i^2}{\sum_{i=1}^k\lambda_i^2}.
$
\end{theorem}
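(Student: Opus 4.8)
I would pass to the deterministic recursions for the eigen-mode second moments (note $D_{t,k}$ is defined through expectations, so no concentration is needed). With $r_i:=\eta\lambda_i$, $w_{t,i}:=u_i^\top w_t$ and $b_{t,i}:=\lambda_i\,\bbE[w_{t,i}^2]\ge 0$, squaring $w_{t+1,i}=(1-r_i)w_{t,i}+\eta\,\xi_t^\top u_i$, taking expectations (the cross term vanishes since $\bbE[\xi_t\mid w_t]=0$) and multiplying by $\lambda_i$, Assumption~\ref{assumption: eigen-alignment} gives
\[
b_{t+1,i}=(1-r_i)^2 b_{t,i}+c_{t,i}\,r_i^2\,L_t,\qquad c_{t,i}\in[A_1,A_2],
\]
where $L_t:=\bbE[\cL(w_t)]=\tfrac12\sum_i b_{t,i}$ and, crucially, $\sum_i r_i^2=\eta^2\|G(\theta^*)\|_{\mathrm F}^2=\beta^2$. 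Here $X_t=\sum_{i\le k}b_{t,i}$, $Y_t=\sum_{i>k}b_{t,i}$, $X_t+Y_t=2L_t$; set $R_{\le k}:=\sum_{i\le k}r_i^2=\eta^2\sum_{i\le k}\lambda_i^2$ and $R_{>k}:=\beta^2-R_{\le k}$. Summing the recursion over all $i$ and dropping the nonnegative GD terms yields $2L_{t+1}\ge A_1\beta^2 L_t$, so for $\beta$ above an absolute constant $c_1$ one gets $L_{t+1}\ge 2L_t$ and hence $L_t\to\infty$ --- SGD escapes. (We assume $L_0>0$, i.e., the iterate is initialised off the minimum, as the noise vanishes at $\theta^*$.)

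\textbf{Two one-sided estimates.} For the flat block, keeping only the noise term in each $b_{t,i}$, $Y_t\ge L_{t-1}\sum_{i>k}c_{t-1,i}r_i^2\ge A_1 R_{>k}L_{t-1}$ (equivalently $Y_{t+1}\ge A_1 R_{>k}L_t$). For the sharp block, $X_{t+1}\le\rho_k X_t+A_2 R_{\le k}L_t$ with $\rho_k:=\max_{i\le k}(1-r_i)^2$, and the key elementary bound is $\rho_k\le\max\{1,R_{\le k}\}$, since $(1-r_i)^2\le\max\{1,(r_i-1)^2\}\le\max\{1,r_i^2\}$ and $r_1^2\le R_{\le k}$. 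I would then split on $R_{\le k}$. If $R_{\le k}\ge1$, then $\rho_k\le R_{\le k}$, and combining with the trivial $X_t\le 2L_t$ gives $X_{t+1}\le(2+A_2)R_{\le k}L_t$, hence $D_{t+1,k}=Y_{t+1}/X_{t+1}\ge\frac{A_1}{2+A_2}\,\frac{R_{>k}}{R_{\le k}}$ for all $t$. If $R_{\le k}<1$, then $r_1<1$ so $\rho_k<1$, and $X_{t+1}\le X_t+A_2 R_{\le k}L_t$; unrolling and using $\sum_{s<t}L_s\le 2L_{t-1}$ (from $L_{s+1}\ge 2L_s$) gives $X_t\le X_0+2A_2 R_{\le k}L_{t-1}$. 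Since $L_{t-1}\ge(A_1\beta^2/2)^{t-1}L_0$ and $X_0\le 2L_0$, once $(A_1\beta^2/2)^{t-1}\ge 1/(A_2R_{\le k})$ --- a short computation reduces this to $t\ge t_0:=\max\{1,\log(c_2/\sqrt{R_{\le k}})/\log\beta\}$ for a suitable absolute $c_2$, matching the stated threshold since $\sqrt{R_{\le k}}=\eta(\sum_{i\le k}\lambda_i^2)^{1/2}$ --- one has $X_0\le 2A_2 R_{\le k}L_{t-1}$ and thus $X_t\le 4A_2 R_{\le k}L_{t-1}$, so $D_{t,k}\ge\frac{A_1}{4A_2}\,\frac{R_{>k}}{R_{\le k}}$. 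In both cases $R_{>k}/R_{\le k}=\sum_{i>k}\lambda_i^2/\sum_{i\le k}\lambda_i^2$, which is the claimed bound.

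\textbf{Expected main obstacle.} The single delicate point is the upper bound on $X_t$ when the minimum is sharp enough that the top mode is GD-unstable ($r_1>2$): then $X_t$ itself grows geometrically and its leading term cannot simply be dropped. What rescues the argument is the cheap inequality $(r_1-1)^2< r_1^2\le R_{\le k}$, i.e., the GD amplification is never larger than $R_{\le k}$, so the trivial $X_t\le 2L_t$ is already sharp enough; in the complementary regime ($r_1<1$) the sharp directions contract and a geometric-series sum closes the estimate, with the threshold $t_0$ being exactly the number of steps for the noise-driven growth $L_t\gtrsim(A_1\beta^2)^tL_0$ to overwhelm $1/R_{\le k}$. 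Everything else is routine: the proof runs entirely on the scalar recursions for $b_{t,i}$, requiring no probabilistic concentration.
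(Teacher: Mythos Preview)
Your proposal is correct and follows essentially the same route as the paper: both pass to the scalar recursions $b_{t+1,i}=(1-r_i)^2 b_{t,i}+c_{t,i}r_i^2 L_t$, lower-bound $Y_{t+1}$ by the noise term alone, upper-bound $X_{t+1}$ by $\rho_k X_t+A_2R_{\le k}L_t$, and rely on the key elementary inequality $(1-r_1)^2\le\max\{1,r_1^2\}\le\max\{1,R_{\le k}\}$ to control the GD amplification in the unstable regime. The only tactical differences are that the paper splits on $\eta\lambda_1\lessgtr 1$ rather than on $R_{\le k}\lessgtr 1$, and in the stable case it tracks the ratio $W_t=X_t/(X_t+Y_t)$ via the contraction $W_{t+1}\le AW_t+B$ with $A=\alpha_k/(A_1\beta^2)<1$, whereas you unroll $X_t$ directly against the geometric growth $L_t\ge(A_1\beta^2/2)^t L_0$; both yield the same threshold $t_0\asymp\log(1/\sqrt{R_{\le k}})/\log\beta$ after noting $\log(A_1\beta^2)\sim 2\log\beta$.
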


The proof can be found in Appendix~\ref{appendix: proof: escaping}.
This theorem reveals that during  SGD's escape process, the loss rapidly accumulates a significant component along flat directions of the loss landscape. The precise loss ratio between the flat and sharp directions is governed by the spectrum of Hessian matrix. In particular,  $D_{t,1}\gtrsim\srk(G^2)-1$, indicating that in high dimension, i.e., $\srk(G^2)\gg 1$, the loss energy along the sharpest directions becomes negligible during the SGD's escape process. This stands in stark contrast to GD, which always escapes along the sharpest direction:

\begin{proposition}[Escape of GD]\label{thm: escape GD}
Consider GD with learning rate $\eta=\beta/\lambda_1$. If $\beta>2$, then
$D_{t,1}\leq\sum\limits_{i=2}^d\frac{\lambda_i(1-\eta\lambda_i)^{2t}w_{0,i}^2}{\lambda_1(1-\eta\lambda_1)^{2t}w_{0,1}^2}$.
\end{proposition}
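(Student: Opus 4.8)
The statement concerns GD (no noise), so the dynamics decouple completely in the eigenbasis of $G(\theta^*)$. The plan is to solve the recursion exactly in each eigen-coordinate and then form the ratio. Starting from $w_{t+1,i}=(1-\eta\lambda_i)w_{t,i}$ (the noiseless version of Eq.~\eqref{eqn: 000} applied to the linearized dynamics $w_{t+1}=w_t-\eta G(\theta^*)w_t$), we get $w_{t,i}=(1-\eta\lambda_i)^t w_{0,i}$ for every $i$, and hence $\EE[w_{t,i}^2]=w_{t,i}^2=(1-\eta\lambda_i)^{2t}w_{0,i}^2$ (there is no randomness here, so the expectation is vacuous). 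With $\eta=\beta/\lambda_1$, the first coordinate evolves as $(1-\beta)^{2t}w_{0,1}^2$.

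Next I would plug these into the definitions $X_t=\lambda_1 w_{t,1}^2$ and $Y_t=\sum_{i=2}^d\lambda_i w_{t,i}^2$ for $k=1$, giving
\[
D_{t,1}=\frac{Y_t}{X_t}=\sum_{i=2}^d\frac{\lambda_i(1-\eta\lambda_i)^{2t}w_{0,i}^2}{\lambda_1(1-\eta\lambda_1)^{2t}w_{0,1}^2},
\]
which is exactly the claimed bound, holding in fact with equality. The role of the hypothesis $\beta>2$ is simply to guarantee that the first coordinate does not vanish, i.e.\ that $|1-\beta|=\beta-1>1$ so that $(1-\eta\lambda_1)^{2t}w_{0,1}^2$ stays bounded away from zero and indeed diverges — this is what "escape along the sharpest direction" means, and it ensures the denominator $X_t$ is nonzero so the ratio is well-defined for all $t$. (One should note $w_{0,1}\neq 0$ is implicitly assumed, as is standard in escape analyses; otherwise the statement is trivially about a lower-dimensional problem.)

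There is essentially no obstacle here — the proposition is a one-line consequence of diagonalizing a linear map — so the "proof" is really just recording the exact solution and the inequality it yields. The only thing worth being careful about is matching conventions: confirming that the $D_{t,k}$ notation with $k=1$ reduces the sums in $X_t,Y_t$ to the stated single-term-over-tail form, and that "$\leq$" rather than "$=$" in the statement is just the authors quoting an upper bound (the equality is stronger). The contrast with Theorem~\ref{thm: escape: SGD} is the real content: in GD every summand in the numerator decays relative to the denominator since $(1-\eta\lambda_i)^2<(1-\eta\lambda_1)^2$ for $i\geq 2$ (as $\lambda_i<\lambda_1$ and $\beta>2$ places $1-\eta\lambda_1$ further from $1$ in absolute value), so $D_{t,1}\to 0$, whereas for SGD the injected noise keeps feeding the flat coordinates and $D_{t,k}$ stays bounded below by a spectral ratio.
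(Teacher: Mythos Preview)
Your proposal is correct and follows exactly the same approach as the paper: solve the noiseless recursion $w_{t,i}=(1-\eta\lambda_i)^t w_{0,i}$ in the eigenbasis and substitute into the definition of $D_{t,1}$, obtaining the stated bound with equality. Your additional remarks on the role of $\beta>2$ and the implicit assumption $w_{0,1}\neq 0$ are accurate elaborations that the paper leaves unstated in the proof itself (though it mentions them in the surrounding text).
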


\begin{wrapfigure}{r}{0.49\textwidth}
    \vspace{-2.4em}
    \centering
  \includegraphics[width=0.45\textwidth]{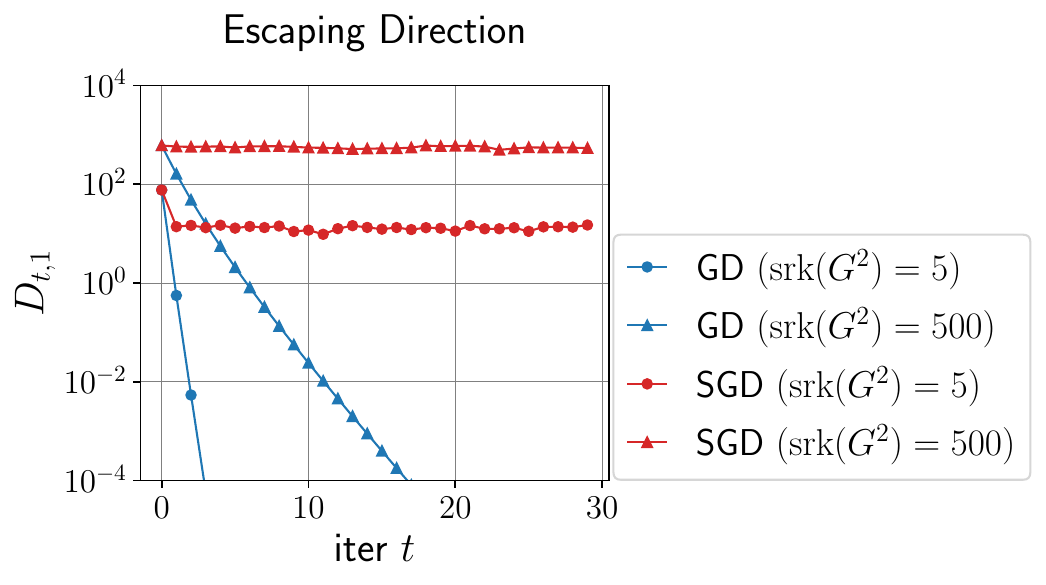}
    \vspace{-.2cm}
    \caption{\small Comparison of escape directions between SGD and GD. The problem is linear regression and both SGD and GD are initialized near the global minimum by $  {w}_0\sim\cN( {w}^*,e^{-10}I_d/d)$. To ensure escape, we choose $\eta=1.2/\norm{G}_{\rF}$ and $\eta=4/(\lambda_1+\lambda_2)$ for SGD and GD, respectively.  Please refer to Appendix~\ref{appendix: exp: setups} for more experimental details.
    }
    \vspace{-.3cm}
    \label{fig: LM-exp escaping}
\end{wrapfigure}
In particular, if $w_{0,1}\neq 0$ and $\lambda_1>\lambda_2$, then the above proposition implies that $D_{t,1}$ decreases to $0$ exponentially fast for GD.

Figure \ref{fig: LM-exp escaping} presents numerical comparisons of the escaping directions between SGD and GD. It is evident that $D_{t,1}$ exponentially decreases to zero for GD, indicating that GD escapes along the sharpest direction. In contrast, for SGD,  $D_{t,1}$ remains significantly large, indicating that SGD retains a substantial component along the flat directions during the escape process. Furthermore, the value of $D_{t,1}$ positively correlates with $\srk(G^2)$, as predicted by our Theorem \ref{thm: escape: SGD}. These observations provide empirical confirmation of our theoretical findings.





{\bf Explaining the implicit bias of cyclical learning rate.}
Gaining insights into the escape direction  can be valuable for understanding how SGD explores the non-convex landscape.
Specifically, we provide a concrete example to illustrate the role of escape direction in enhancing the implicit bias of SGD through Cyclical Learning Rate (CLR) \citep{smith2017cyclical,loshchilovsgdr}. 
As shown in Figure 2 of \citet{huangsnapshot}, utilizing CLR enables SGD to cyclically  escapes from (when increasing LR) and slides into (when decreasing LR) sharp regions, ultimately progressing towards flatter minima. We hypothesize that  escape along flat directions plays a pivotal role in guiding SGD towards flatter region in this process.


\begin{wrapfigure}{r}{0.4\textwidth}
    \vspace{-1em}
    \centering
    \includegraphics[width=4cm]{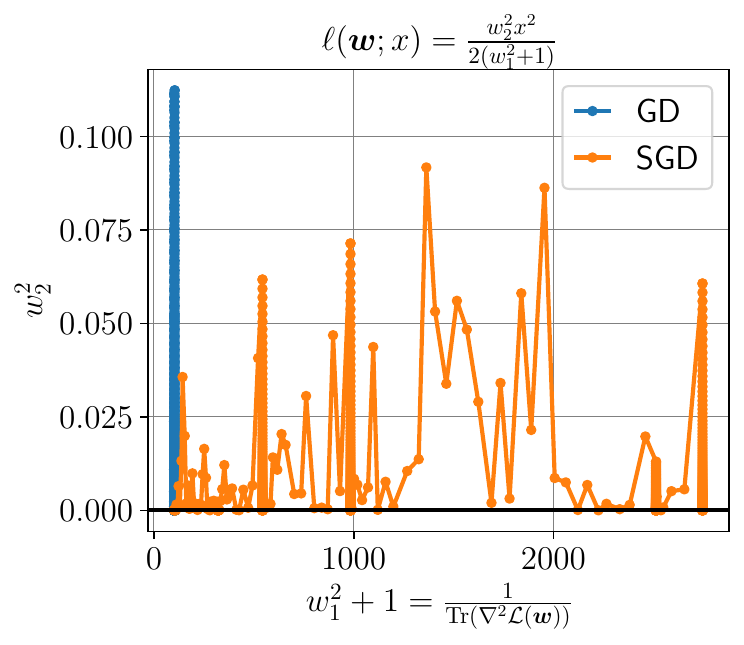}
    \vspace{-0.1cm}
    \caption{\small Visualization of the trajectories of SGD+CLR v.s.~GD+CLR for our toy model. Both cases use the same CLR schedule. We can observe that SGD+CLR moves significantly towards flatter region, while  GD+CLR only oscillates along the sharpest direction. We have extensively tuned the learning rates for GD+CLR but do not observe significant movement towards flatter region in any case.
    }
    \label{fig: cyclical-lr}
\end{wrapfigure}

Following \citet{ma2022beyond}, we 
 consider a toy OLM $f(x; w)=(w_2/\sqrt{w_1^2+1})x$ with $x\sim\mathcal{N}(0,1)$. For simplicity, we consider the online setting, where the  landscape 
 \[
 \mathcal{L}( w)={w_2^2}/{[2(w_1^2+1)]}.
 \]
 The  global minima valley is $S=\{ w : w_2=0\}$ and for ${w}\in S$, $\tr[\nabla^2 \mathcal{L}(w)] = 1/(1+w_1^2)$. Hence, the minimum gets flatter along the valley $S$ when $|w_1|$ grows up. In Figure \ref{fig: cyclical-lr},  we visualize the trajectories for both SGD+CLR and GD+CLR. One can observe that
 \begin{itemize}[leftmargin=2em]
 \item SGD escape from the minima along both the flat direction $e_1$ and sharp direction $e_2$.  The component of along $e_1$ leads to  considerable increase in $w_{1}^2$, facilitating the movement towards flatter region along the minimum valley $S$. 
 \item  On the contrary, GD escapes only along $e_2$, yielding no increase in $w_{1}^2$. Thus, we cannot observe clear movement towards flatter region for GD+CLR. 
 \end{itemize}
 Thus, in this toy model, the fact that SGD escapes along flat directions is crucial in amplifying  the implicit bias towards flat minima. 

 Nonetheless, understanding how the above mechanism manifests in practice remains an open question that warrants further investigation. We defer this topic for future work, as the primary focus of this paper is to understand the noise geometry rather than exhaustively explore its  applications.

\section{Larger-scale experiments for deep neural networks}
\label{section: large-scale exps}

We have already provided  small-scale experiments to confirm our theoretical findings. 
We now turn to justify the practical relevance of theoretical findings by examining the classification of CIFAR-10 dataset~\citep{krizhevsky2009learning} with practical VGG nets~\citep{vgg} and ResNets~\citep{he2016deep}.  Note that larger-scale experiments on loss alignment have been previously presented in~\citet{wu2022does}. Thus, our focus here is on investigating the directional alignment and escape direction of SGD. We refer to Appendix~\ref{appendix: exp: setups} for experimental details.

{\bf The directional alignment along eigen-directions.}
Figure~\ref{fig: strong-alignment-DNNs} presents the directional alignments of SGD noise for ResNet-38 and VGG-13. The alignment is examined along the eigen-directions of the local landscape. The two quantities: ${\lambda_k}$ and ${\alpha_k}$ under $\ell_1$ normalization (i.e., ${\lambda_k/\norm{  \lambda}_1}$ and ${\alpha_k/\norm{ \alpha}_1}$) are plotted. Here, $\lambda_k$ and $\alpha_k$ represent the curvature and the component of noise energy along the $k$-th eigen-direction, respectively. 
One can see that  the alignment between $\alpha_k$ and $\lambda_k$ still exists for ResNet-38 and VGG-13, but the ratio between them becomes significantly larger.  As a comparison, we refer to Figure \ref{fig: 2-b}, where the ratio is well-controlled for small-scale networks trained for classifying the same dataset. We hypothesize that this observation is consistent with our theoretical results in Section \ref{section: theory, strong align}: one-sided bounds require much less samples. 


\begin{figure*}[!ht]
    \centering
 
    \includegraphics[width=0.28\textwidth]{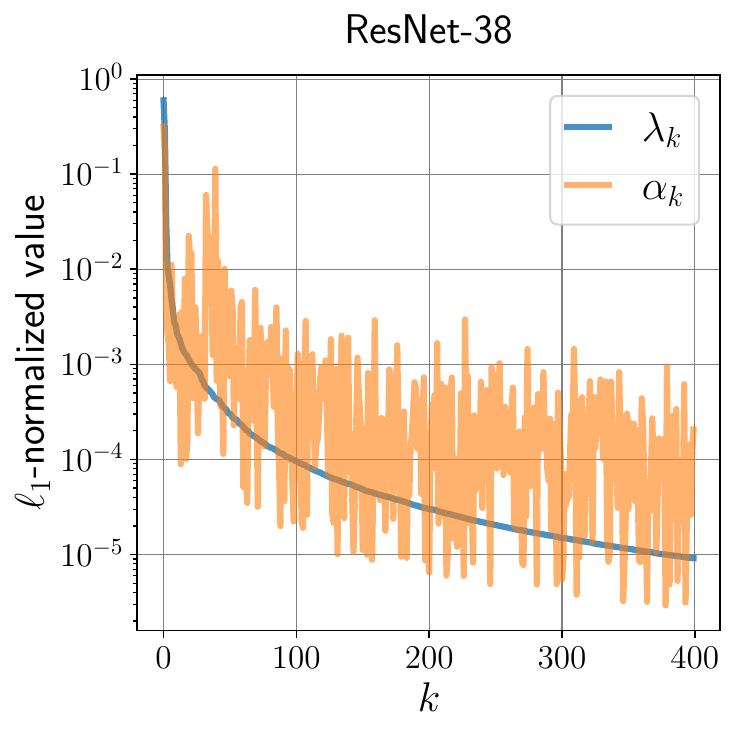}
    \hspace*{2em}
    \includegraphics[width=0.28\textwidth]{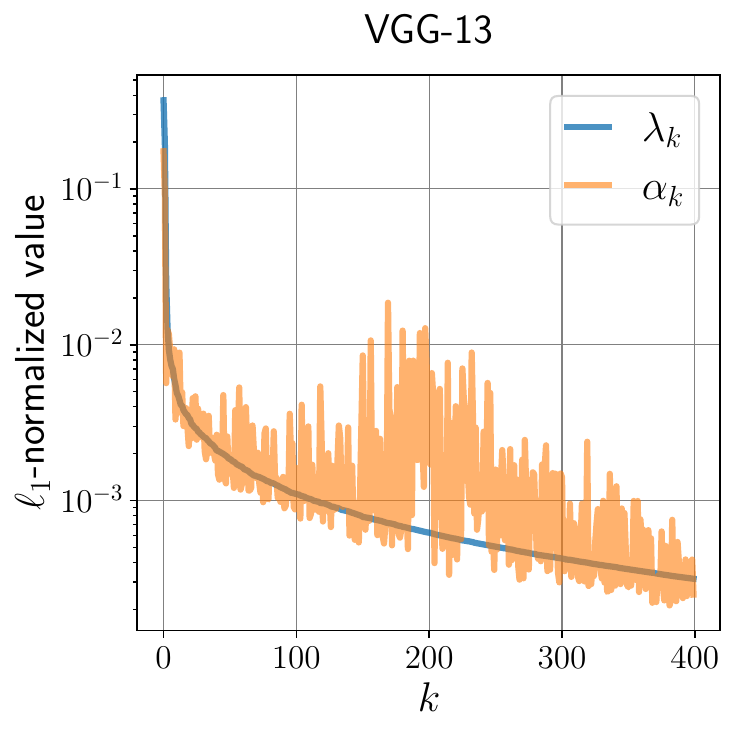}
 \vspace{-.1cm}
    \caption{\small Three distributions ($\{\lambda_k\}_k$ and $\{\alpha_k\}_k$) for larger-scale neural networks, which reflect the directional alignment~\eqref{equ: def: strong align, any direction} along the eigen directions of the local landscape.}
    \label{fig: strong-alignment-DNNs}
\end{figure*}

{\bf The escape direction of SGD.}
For large models, it is computationally prohibitive  to compute the quantity $D_{t,k}$  since it needs to compute the whole spectrum. Thus, we consider to measure the component along different directions without reweighting. 
Let $\theta^*$ be the minimum of interest and $\theta_t$ be SGD/GD solution at step $t$. Define $p_{t,k}=\<\theta_t-\theta^*,  u_1\>$ for $k=1$ and $p_{t,k}=(\sum_{i=1}^k\<\theta_t-\theta^*,  u_i\>^2)^{1/2}$ for $k>1$; $r_{t,k}=(\norm{\theta_t-\theta^*}^2-p_{t,k}^2)^{1/2}$. Notably, $p_{t,k}$ and $r_{t,k}$ represent the component along sharp and flat directions, respectively.

\begin{figure*}[!ht]
    \centering
    \includegraphics[width=0.28\textwidth]{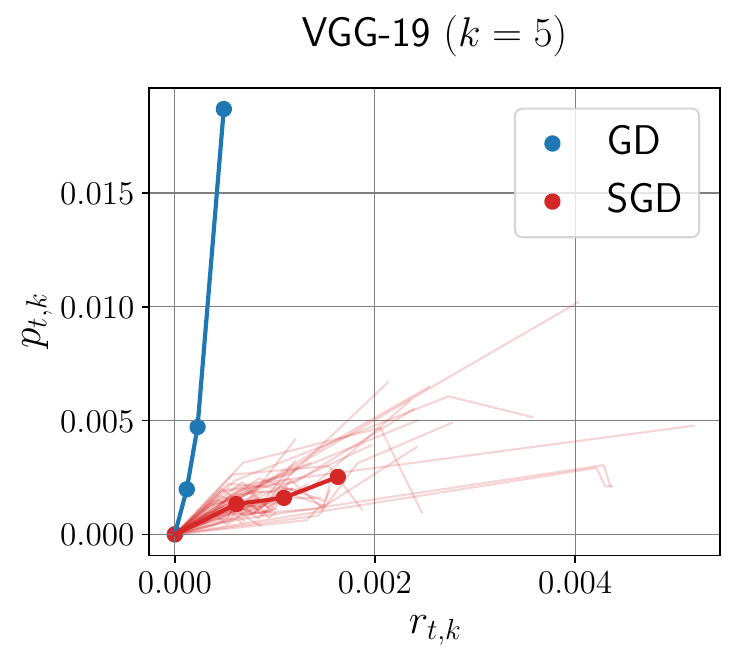}
    \includegraphics[width=0.28\textwidth]{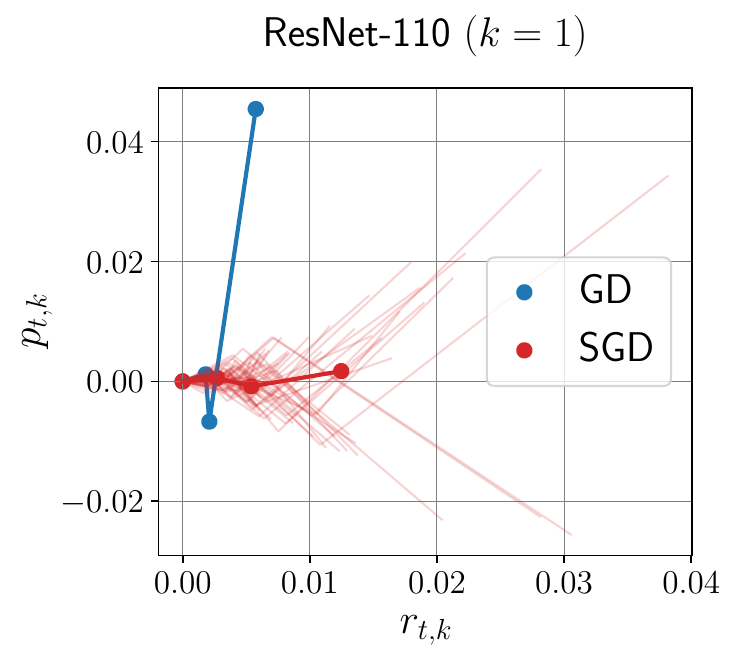}
    \includegraphics[width=0.28\textwidth]{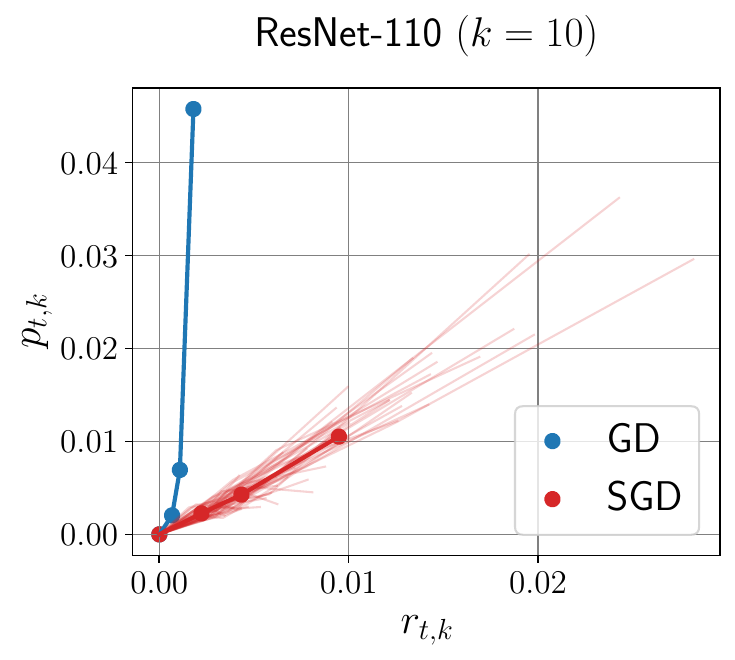}
    \vspace{-.1cm}
    \caption{\small The red curves are $50$ escaping trajectories of SGD and their average; the blue curves corresponding to GD. 
    The sharp minimum $\theta^*$ is found by SGD. Then, we run SGD and GD starting from $\theta^*$ and the learning rates are tuned to ensure escaping.}
    \label{fig: large-exp escaping}
\end{figure*}

In Figure~\ref{fig: large-exp escaping}, we plot $(p_{t,k}, r_{t,k})$  for VGG-19 and ResNet-110, where we examine various $k$ values. The plots clearly demonstrate that the escape direction of SGD exhibits significant components along the flat directions. On the other hand, GD tends to escape along much sharper directions. These empirical findings align well with our theoretical findings in Section \ref{section: escaping}.

\section{Concluding remark} 
\label{sec: discussion}

In this paper, we present a thorough investigation of the geometry of SGD noise, providing quantitative characterizations of how SGD noise aligns well with the landscape's local geometry. Furthermore, we explore the implications of these findings by analyzing the direction of SGD escaping from shapr minima, as well as its role in enhancing the implicit bias toward flatter minima through cyclical learning rate.

Generally speaking, understanding the noise geometry is crucial for comprehending many aspects of SGD dynamics. Our analysis of SGD's escape direction only serves as a preliminary example of this. Looking ahead, there are numerous potential avenues for further research.   One such direction is to investigate how noise geometry influences the convergence and stability of the loss in SGD, as explored in recent works \cite{thomas2020interplay,wu2022does,wu2023implicit}. Furthermore, as indicated by the analysis in Section  \ref{section: escaping}, SGD noise can substantionally alter the unstable directions. This observation opens up the intriguing possibility of studying the impact of noise geometry on the Edge of Stability (EoS) \citep{cohen2021gradient,wu2018sgd} and  the associated unstable convergence phenomena  of SGD \citep{ahn2022understanding}.

\section*{Acknowledgments}
Mingze Wang is supported in part by the National Key Basic Research Program of China (No. 2015CB856000). Lei Wu is supported by a startup fund from Peking University.
We thank anonymous reviewers for their
valuable suggestions.




\newpage
\appendix
\newpage
\appendix

\begin{center}
    \noindent\rule{\textwidth}{1.0pt} \vspace{-0.25cm}
    \LARGE \textbf{Appendix} 
    \noindent\rule{\textwidth}{1.0pt}
\end{center}

\startcontents[sections]
\printcontents[sections]{l}{1}{\setcounter{tocdepth}{2}}

\section{Experimental Setups}\label{appendix: exp: setups}

In this section, we provide the experiment details for directional alignment experiments (in Figure~\ref{fig: strong-alignment-OLMs} and Figure \ref{fig: strong-alignment-DNNs}) and escaping experiments (in Figure~\ref{fig: LM-exp escaping} and Figure \ref{fig: large-exp escaping}).

{\bf Small-scale experiments} (Figure~\ref{fig: strong-alignment-OLMs} and~\ref{fig: LM-exp escaping}).
\begin{itemize}
    \item In Figure~\ref{fig: strong-alignment-OLMs} (a), we conduct directional alignment experiments on linear regression. 
    The inputs $\{\bx_i\}_{i=1}^n$ are drawn from $\cN(0,I_d)$. The targets are generated by a linear model, i.e., $y_i={w^*}^\top x_i$, where $w^*\sim N(0,I_d)$.
    We fix $d=10^4$ and change $n$ accordingly $(n=d/8,n=8\log d)$. Regarding the parameter $ \theta$, it is drawn from $\cN( 0,I_{p})$.
    \item In Figure~\ref{fig: strong-alignment-OLMs} (b), we conduct directional alignment experiments on a 4-layer CNN $(p=43,072)$ and a 4-layer FNN $(p=219,200)$. Specifically, the architecture of 4-layer CNN is $\texttt{Conv}(3,6,5)\to\texttt{ReLU}\to\texttt{MPool}(2,2)\to\texttt{Conv}(6,16,5)\to \texttt{ReLU}\to\texttt{MPool}(2,2)\to \texttt{Linear}(400, 100)\to \texttt{ReLU}\to\texttt{Linear}(100, 2)$.
    and the 4-layer FNN is a ReLU-activated fully-connected network with the architecture: $784\to 256\to 64 \to 32\to 2$.  We use the CIFAR-10 dataset with label=$0,1$. Regarding the parameter $\theta$, it is drawn from $\cN( 0,I_{p})$.

    \item In Figure~\ref{fig: LM-exp escaping}, we conduct escaping experiments on linear regression with $ w^*= 0$.
    Both SGD and GD are initialized near the global minimum by $ w_0\sim\cN( 0,e^{-10}I_d/d)$. To ensure escaping, we choose $\eta=1.2/\norm{G}_{\rF}$ and $\eta=4/(\lambda_1+\lambda_2)$ for SGD and GD, respectively. We fix $n=10^5$ and $d=10^3$, and the inputs $\{\bx_i\}_{i=1}^n$ are drawn from $\cN( 0,\diag(\lambda)/d)$, where $\lambda\in\bbR^d$ and $\lambda_1\geq\lambda_2=\cdots=\lambda_d\geq0$. Moreover, we set $\lambda_1=1$ change $\lambda_2$ accordingly to obtain different $\srk(G^2)$.
\end{itemize}

{\bf Larger-scale experiments} (Figure~\ref{fig: strong-alignment-DNNs} and~\ref{fig: large-exp escaping}).

\begin{itemize}
    \item Dataset. For the experiments in Figure~\ref{fig: strong-alignment-DNNs} and~\ref{fig: large-exp escaping}, we use the CIFAR-10 dataset with label=$0,1$ and the full CIFAR-10 dataset to train our models, respectively.
    \item Models. We conduct experiments on large-scale models: ResNet-38 $(p=558,222)$, VGG-13 $(p=605,458)$, ResNet-110 $(p=1,720,138)$, and VGG-19 $(p=20,091,338)$. 
    Specifically, we use standard ResNets~\citep{he2016deep} and VGG nets~\citep{vgg} without batch normalization. 
    For ResNets, we follow~\cite{fixup} to use the fixup initialization in order to ensure that the model can be trained without batch normalization.

    \item Training. All explicit regularizations (including weight decay, dropout, data augmentation, batch normalization, learning rate decay) are removed, and a simple constant-LR SGD is used to train our models. 
    Specifically, all these models are trained by SGD with learning rate $\eta=0.1$ and batch size $B=32$ until the training loss becomes smaller than $10^{-4}$.

\end{itemize}

{\bf Efficient computations} of the top-$k$ eigen-decomposition of $G$ and $\Sigma$.
We utilize the functions \texttt{eigsh} and \texttt{LinearOperator} in \texttt{scipy.sparse.linalg} to calculate top-$k$ eigenvalues and eigenvectors of $G$ and $\Sigma$, and the key step is to efficiently calculate $G\bv$ and $\Sigma\bv$ for any given $\bv\in\bbR^p$.
\begin{itemize}
\item For small-scale experiments, they can be calculated directly. 
\item For the large-scale models, we need further approximations since the computation complexity $\cO(np)$ is prohibitive in this case. 
To illustrate our method, we will use $G\bv$ as an example and apply a similar approach to $\Sigma\bv$.
Notice that the formulation $G\bv=\frac{1}{n}\sum_{i=1}^n(\bx_i^\top\bv)\bx_i$ are all in the form of sample average, which allows us to perform Monte-Carlo approximation. Specifically, we randomly choose $b$ samples $\{\bx_{i_j}\}_{j=1}^b$ from $\bx_1,\dots,\bx_{n}$ and use $\frac{1}{b}\sum_{j=1}^b(\bx_{i_j}^\top\bv)\bx_{i_j}$ estimate $G\bv$, with the computation complexity $\cO(bp)$.
For the experiments on CIFAR-10, we test $b$’s with different values and find that $b=2k$ is sufficient to obtain a reliable approximation of the top-$k$ eigenvalues and eigenvectors. Hence, for all large-scale
experiments in this paper, we use $b=2k$ to speed up the computation of the top-$k$ eigenvalues and eigenvectors.


\end{itemize}

\section{Proofs of Section \ref{sec: average}}
\label{appendix: proof: weak align}

\subsection{Proof of Theorem~\ref{thm: weak: OLM}}
This result is a direct corollary of Theorem \ref{thm: strong: OLM}, which is proved in Appendix~\ref{appendix: proof: strong align}.

By Theorem~\ref{thm: strong: OLM}, under the same conditions, the following uniform bound holds:
\begin{align*}
\frac{1-\epsilon}{(1+\epsilon)^2}\leq\inf_{\theta,\bv\in\bbR^p}g( \theta;\bv)\leq\sup_{ \theta,\bv\in\bbR^p}g( \theta;\bv)\leq\frac{2+\epsilon}{(1-\epsilon)^2},
\end{align*}
which means that for any $ \theta\in\bbR^{p},\bv\in\bbS^{p-1}$, we have 
\begin{align*}
    \frac{1-\epsilon}{(1+\epsilon)^2}\cdot 2\cL(\theta)\bv^\top G( \theta)\bv\leq\bv^\top\Sigma_1(\theta)\bv\leq\frac{2+\epsilon}{(1-\epsilon)^2}\cdot 2\cL( \theta)\bv^\top G(\theta)\bv.
\end{align*}

Consider the orthogonal decomposition of $ G(\theta)$: $ G( \theta)=\sum_{k=1}^p\lambda_k u_k u_k^\top$. Notice that
\begin{align*}
&\Tr(\Sigma_1(\theta) G( \theta))=\sum_{k=1}^p\lambda_k u_k^\top\Sigma_1(\theta) u_k,
\\
&\norm{G(\theta)}_{\rF}^2=\Tr(G(\theta) G(\theta))=\sum_{k=1}^p\lambda_k u_k^\top G(\theta) u_k.
\end{align*}

Then we obtain
\begin{align*}
    \Tr(\Sigma_1(\theta) G(\theta))\geq&
    \frac{1-\epsilon}{(1+\epsilon)^2}\cdot 2\cL( \theta)\sum_{k=1}^p\lambda_k u^\top_k G( \theta) u_k=\frac{1-\epsilon}{(1+\epsilon)^2}\cdot2\cL( \theta)\norm{ G( \theta)}_{\rF}^2,
    \\
    \Tr(\Sigma_1(\theta) G(\theta))\leq&
    \frac{2+\epsilon}{(1-\epsilon)^2}\cdot 2\cL(\theta)\sum_{k=1}^p\lambda_k u^\top_k G(\theta) u_k=\frac{2+\epsilon}{(1-\epsilon)^2}\cdot2\cL( \theta)\norm{ G( \theta)}_{\rF}^2,
\end{align*}

which means $\frac{1-\epsilon}{(1+\epsilon)^2}\leq\mu( \theta)\leq\frac{2+\epsilon}{(1-\epsilon)^2}$. From the arbitrariness of $ \theta$, we complete the proof.
\qed

\subsection{Proof of Theorem \ref{thm: weak: LM}}

For the linear model, i.e., $ \theta= w$ and $\bF( w)= w$ in OLMs, we have the following lower bound: 
\begin{equation}\label{equ: proof: weak: LM}
    \begin{aligned}
    &\mu ( w)=\frac{\Tr\sbracket{\Sigma_1( w)G( w)}}{2\cL( w)\norm{G( w)}_{\rF}^2}
    \\=&\frac{{\rm Tr}\Bigg(\Big(\frac{1}{n}\sum\limits_{j=1}^n\bx_j\bx_j^\top\Big)\Big(\frac{1}{n}\sum\limits_{i=1}^n(F( \theta)^\top \bx_i)^2 (\nabla F( \theta)^\top\bx_i)(\nabla F( \theta)^\top\bx_i) ^\top\Big)\Bigg)}{\Big(\frac{1}{n}\sum\limits_{i=1}^n(F( \theta)^\top \bx_i)^2\Big)\Big(\frac{1}{n^2}\sum\limits_{i=1}^n\sum\limits_{j=1}^n\big(\bx_i^\top\nabla F( \theta)\nabla F( \theta)^\top \bx_j\big)^2\Big)}
    \\=&\frac{\frac{1}{n^2}\sum\limits_{i=1}^n\sum\limits_{j=1}^n( w^\top \bx_i)^2\big(\bx_i^\top\bx_j\big)^2}
    {\Big(\frac{1}{n}\sum\limits_{i=1}^n( w^\top \bx_i)^2\Big)\Big(\frac{1}{n^2}\sum\limits_{i=1}^n\sum\limits_{j=1}^n\big(\bx_i^\top\bx_j\big)^2\Big)}
    \geq\frac{\Big(\frac{1}{n}\sum\limits_{i=1}^n( w^\top \bx_i)^2\Big)\Big(\min\limits_{i\in[n]}\frac{1}{n}\sum\limits_{j=1}^n\big(\bx_i^\top\bx_j\big)^2\Big)}
    {\Big(\frac{1}{n}\sum\limits_{i=1}^n( w^\top \bx_i)^2\Big)\Big(\frac{1}{n^2}\sum\limits_{i=1}^n\sum\limits_{j=1}^n\big(\bx_i^\top\bx_j\big)^2\Big)}
    \\=&\frac{\min\limits_{i\in[n]}\frac{1}{n}\sum\limits_{j=1}^n\big(\bx_i^\top\bx_j\big)^2}
    {\max\limits_{i\in[n]}\frac{1}{n}\sum\limits_{j=1}^n\big(\bx_i^\top\bx_j\big)^2}
    \geq\frac{\min\limits_{i\in[n]}\norm{\bx_i}^4+(n-1)\min\limits_{i\in[n]}\frac{1}{n-1}\sum\limits_{j\ne i}(\bx_i^\top\bx_j)^2}{\max\limits_{i\in[n]}\norm{\bx_i}^4+(n-1)\max\limits_{i\in[n]}\frac{1}{n-1}\sum\limits_{j\ne i}(\bx_i^\top\bx_j)^2}.
\end{aligned}
\end{equation}

In the same way, the upper bound holds:
\begin{align*}
    \mu (w)=\frac{\Tr\sbracket{\Sigma_1( w)G( w)}}{2\cL( w)\norm{G( w)}_{\rF}^2}
    \leq\frac{\max\limits_{i\in[n]}\frac{1}{n}\sum\limits_{j=1}^n\big(\bx_i^\top\bx_j\big)^2}
    {\min\limits_{i\in[n]}\frac{1}{n}\sum\limits_{j=1}^n\big(\bx_i^\top\bx_j\big)^2}
    \leq\frac{\max\limits_{i\in[n]}\norm{\bx_i}^4+(n-1)\max\limits_{i\in[n]}\frac{1}{n-1}\sum\limits_{j\ne i}(\bx_i^\top\bx_j)^2}{\min\limits_{i\in[n]}\norm{\bx_i}^4+(n-1)\min\limits_{i\in[n]}\frac{1}{n-1}\sum\limits_{j\ne i}(\bx_i^\top\bx_j)^2}.
\end{align*}

Then we only need to estimate $\norm{\bx_i}^4$ and $\frac{1}{n-1}\sum\limits_{j\ne i}(\bx_i^\top\bx_j)^2$ for each $i\in[n]$, respectively.

\paragraph{Step I: Estimation of $\norm{\bx_i}^4$.}

Let $ z_i=S^{-1/2}\bx_i$, then $\norm{\bx_i}^2= z_i^\top S z_i$ and $ z_1,\cdots, z_n\overset{{\rm i.i.d.}}{\sim}\cN( 0,I_d)$.

For a fix $i\in[n]$, by Lemma~\ref{lemma: hanson-wright}, there exists an absolute constant $C_1>0$ such that for any $\epsilon\in(0,1)$, we have
\begin{align*}
    \bbP\sbracket{\Big| z_i^\top S z_i-\Tr(S)\Big|\geq\epsilon\Tr(S)}
    \leq2\exp\sbracket{-C_1\min\bbracket{\frac{\epsilon^2\Tr^2(S)}{\norm{S}_{\rF}^2},\frac{\epsilon\Tr(S)}{\norm{S}_2}}}.
\end{align*}
Noticing that $\Tr(S)\norm{S}_2 = \lambda_1\sum_{i}\lambda_i\geq \sum_i \lambda_i^2=\|S\|_F$, we thus 
have 
\[
\frac{\Tr^2(S)}{\norm{S}_{\rF}^2}\geq\frac{\Tr(S)}{\norm{S}_2}=\srk(S).
\]
Therefore,
\begin{align*}
    \bbP\sbracket{\Big| z_i^\top S z_i-\Tr(S)\Big|\geq\epsilon\Tr(S)}
    \leq2\exp\sbracket{-C_1\frac{\Tr(S)}{\norm{S}_2}\min\bbracket{\epsilon,\epsilon^2}}=2\exp\sbracket{-C_1\epsilon^2\srk(S)}.
\end{align*}
Applying a union bound over all $i\in[n]$, we have
\begin{align*}
    \bbP\sbracket{\Big|\norm{\bx_i}^2-\Tr(S)\Big|\geq\epsilon\Tr(S),\forall i\in[n]}
    \leq2n\exp\sbracket{-C_1\epsilon^2\srk(S)}.
\end{align*}
In the other word, for any $\epsilon,\delta\in(0,1)$, if $\srk(S)\gtrsim\log(n)/\epsilon^2$, then \wp at least $1-\delta/3$, we have
\begin{align*}
    (1-\epsilon)^2\leq\frac{\norm{\bx_i}_2^4}{\Tr^2(S)}\leq(1+\epsilon)^2,\ \forall i\in[n].
\end{align*}

\paragraph{Step II: Estimation of $\frac{1}{n-1}\sum\limits_{j\ne i}(\bx_i^\top\bx_j)^2$.}

First, we fix $i\in[n]$. 
Notice that $(\bx_i^\top\bx_j)^2$ $(j\ne i)$ are not independent, so we need estimate by some decoupling tricks.

We denote $ z_i:=S^{-1/2}\bx_i$, then $ z_1,\cdots, z_n\overset{{\rm i.i.d.}}{\sim}\cN( 0,I_d)$ and $(\bx_i^\top\bx_j)^2=( z_i^\top S z_j)^2$.

For any fixed $\bv\in\bbS^{d-1}$, by Lemma~\ref{lemma: bernstien}, for any $\epsilon\in(0,1)$, we have
\begin{align*}
    &\bbP\sbracket{\Big|\frac{1}{n-1}\sum_{j\ne i}(\bv^\top z_j)^2-1\Big|\geq\epsilon}
    \\\leq&\bbP\sbracket{\Big|\frac{1}{n-1}\sum_{j\ne i}(\bv^\top z_j)^2-1\Big|\geq\epsilon}
    \leq2\exp\sbracket{-C_2(n-1)\epsilon^2},
\end{align*}
where $C_2>0$ is an absolute constant, independent of $\bv$ and $\epsilon$.

Then we have
\begin{align*}
    &\bbP\sbracket{\Big|\frac{1}{n-1}\sum_{j\ne i}(\bx_i^\top\bx_j)^2-\bx_i^\top S\bx_i\Big|\geq\epsilon\bx_i^\top S\bx_i}
    \\=&\bbP\sbracket{\Big|\frac{1}{n-1}\sum_{j\ne i}( z_i^\top S z_j)^2-\norm{S z_i}_2^2\Big|\geq\epsilon\norm{S z_i}_2^2}
    \\\overset{  q_i:=S z_i/\norm{S z_i}_2}{=}&\bbP\sbracket{\Big|\frac{1}{n-1}\sum_{j\ne i}(  q_i^\top z_j)^2-1\Big|\geq\epsilon}
    \\=&\bbE\mbracket{\bbI\bbracket{\Big|\frac{1}{n-1}\sum_{j\ne i}({  q}_i^\top z_j)^2-1\Big|\geq1}}
    \\=&\bbE_{q_i}\mbracket{\bbE\mbracket{\bbI\bbracket{\Big|\frac{1}{n-1}\sum_{j\ne i}({q}_i^\top z_j)^2-1\Big|\geq1}\Bigg|  q_i}}
    \\\leq&
    \bbE_{q_i}\mbracket{2\exp\sbracket{-C_2(n-1)\epsilon^2}}=2\exp\sbracket{-C_2(n-1)\epsilon^2}.
\end{align*}

Applying a union bound over all $i\in[n]$, we have
\begin{align*}
    \bbP\sbracket{\Big|\frac{1}{n-1}\sum_{j\ne i}(\bx_i^\top\bx_j)^2-\bx_i^\top S\bx_i\Big|\geq\epsilon\bx_i^\top S\bx_i,\forall i\in[n]}
    \leq2n\exp\sbracket{-C_2(n-1)\epsilon^2}.
\end{align*}
In the other word, for any $\epsilon,\delta\in(0,1)$, if $n/\log(n/\delta)\gtrsim1/\epsilon^2$, then \wp at least $1-\delta/3$, we have
\begin{align*}
    1-\epsilon\leq\frac{\frac{1}{n-1}\sum_{j\ne i}(\bx_i^\top\bx_j)^2}{\bx_i^\top S\bx_i}\leq1+\epsilon,\ \forall i\in[n].
\end{align*}

\paragraph{Step III: Estimation of $\bx_i^\top S\bx_i$.}

Let $ z_i=S^{-1/2}\bx_i$, then $\bx_i^\top S\bx_i= z_i^\top S^2 z_i$ and $ z_1,\cdots, z_n\overset{{\rm i.i.d.}}{\sim}\cN( 0,I_d)$.

In the same way as Step I(i), we obtain that: for any $\epsilon,\delta\in(0,1)$, if $\srk(S^2)\gtrsim\log(n)/\epsilon^2$, then \wp at least $1-\delta/3$, we have
\begin{align*}
    1-\epsilon\leq\frac{\bx_i^\top S\bx_i}{\Tr(S^2)}\leq1+\epsilon,\ \forall i\in[n].
\end{align*}

Combining our results in Step I, Step II, and Step III, we obtain the result for Linear Model: for any $\epsilon,\delta\in(0,1)$, if $n/\log(n/\delta)\gtrsim 1/\epsilon^2$ and $\min\{\srk(S),\srk(S^2)\}\gtrsim\log(n)/\epsilon^2$, then \wp at least $1-\delta/3-\delta/3-\delta/3=1-\delta$, we have
\begin{align*}
\mu( w)
\geq&\frac{(1-\epsilon)^2\Tr^2(S)+(n-1)(1-\epsilon)\min\limits_{i\in[n]}\bx_i^\top S\bx_i}{(1+\epsilon)^2\Tr^2(S)+(n-1)(1+\epsilon)\max\limits_{i\in[n]}\bx_i^\top S\bx_i}
\\\geq&\frac{(1-\epsilon)^2\Tr^2(S)+(n-1)(1-\epsilon)^2\Tr(S^2)}{(1+\epsilon)^2\Tr^2(S)+(n-1)(1+\epsilon)^2\Tr(S^2)}=\frac{(1-\epsilon)^2}{(1+\epsilon)^2};
\end{align*}
\begin{align*}
\mu(w)
\leq&\frac{(1+\epsilon)^2\Tr^2(S)+(n-1)(1+\epsilon)\max\limits_{i\in[n]}\bx_i^\top S\bx_i}{(1-\epsilon)^2\Tr^2(S)+(n-1)(1-\epsilon)\min\limits_{i\in[n]}\bx_i^\top S\bx_i}
\\\leq&\frac{(1+\epsilon)^2\Tr^2(S)+(n-1)(1+\epsilon)^2\Tr(S^2)}{(1-\epsilon)^2\Tr^2(S)+(n-1)(1-\epsilon)^2\Tr(S^2)}=\frac{(1+\epsilon)^2}{(1-\epsilon)^2}.
\end{align*}
From the arbitrary of $ w$, we obtain:
$$
\frac{(1-\epsilon)^2}{(1+\epsilon)^2}\leq\inf_{ w\in\bbR^d}\mu(w)\leq\sup_{ w\in\bbR^d}\mu(w)\leq\frac{(1+\epsilon)^2}{(1-\epsilon)^2}.
$$

\qed

\subsection{Proof of Theorem \ref{thm: weak: 2NN}}

For two-layer neural networks with fixed output layer, the gradient is
\begin{align*}
    \nabla f(\bx_i; \theta)=\sbracket{a_1\sigma'(  b_1^\top\bx_i)\bx_i^\top,\cdots,a_m\sigma'(  b_m^\top\bx_i)\bx_i^\top}^\top\in\bbR^{md}.
\end{align*}
For simplicity, denote $\nabla f_i( \theta):=\nabla f(\bx_i; \theta)$, $ u_i( \theta):=f_i( \theta)-f_i( \theta^*)$. Then we have:
\begin{align*}
\cL( \theta)=\frac{1}{2n}\sum_{i=1}^n u_i^2( \theta),\quad
 G( \theta)=\frac{1}{n}\sum_{i=1}^n\nabla f_i( \theta)\nabla f_i( \theta)^\top,\quad
\Sigma_1( \theta)=\frac{1}{n}\sum_{i=1}^nu_i^2( \theta)\nabla f_i( \theta)\nabla f_i( \theta)^\top.
\end{align*}

We have the following lower bound for $\mu(\theta)$:

\begin{align*}
    &\mu (\theta)
   =\frac{\Tr\sbracket{\sbracket{\frac{1}{n}\sum\limits_{i=1}^n\nabla f_i( \theta)\nabla f_i( \theta)^\top}\sbracket{\frac{1}{n}\sum\limits_{i=1}^nu_i^2( \theta)\nabla f_i( \theta)\nabla f_i( \theta)^\top}}}{\sbracket{\frac{1}{n}\sum\limits_{i=1}^nu_i^2( \theta)}\sbracket{\frac{1}{n^2}\sum\limits_{i=1}^n\sum\limits_{j=1}^n\sbracket{\nabla f_i( \theta)^\top\nabla f_i( \theta)}^2}}
   \\=&
   \frac{\frac{1}{n}\sum\limits_{i=1}^n u_i^2( \theta)\frac{1}{n}\sum\limits_{j=1}^n\sbracket{\nabla f_i( \theta)^\top\nabla f_j( \theta)}^2}{\sbracket{\frac{1}{n}\sum\limits_{i=1}^nu_i^2( \theta)}\sbracket{\frac{1}{n^2}\sum\limits_{i=1}^n\sum\limits_{j=1}^n\sbracket{\nabla f_i( \theta)^\top\nabla f_i( \theta)}^2}}
   \\\geq&\frac{\min\limits_{i\in[n]}\frac{1}{n}\sum\limits_{j=1}^n\sbracket{\nabla f_i( \theta)^\top\nabla f_j( \theta)}^2}{\frac{1}{n^2}\sum\limits_{i=1}^n\sum\limits_{i=1}^n\sbracket{\nabla f_i( \theta)^\top\nabla f_j( \theta)}^2}
   \geq\frac{\min\limits_{i\in[n]}\frac{1}{n}\sum\limits_{j=1}^n\sbracket{\alpha^2m\bx_i^\top\bx_j}^2}{\frac{1}{n^2}\sum\limits_{i=1}^n\sum\limits_{i=1}^n\sbracket{\beta^2m\bx_i^\top\bx_j}^2}=
   \frac{\alpha^2}{\beta^2}\frac{\min\limits_{i\in[n]}\frac{1}{n}\sum\limits_{j=1}^n\sbracket{\bx_i^\top\bx_j}^2}{\max\limits_{i\in[n]}\frac{1}{n}\sum\limits_{j=1}^n\sbracket{\bx_i^\top\bx_j}^2}.
\end{align*}

In the same way, the upper bound holds:
\begin{align*}
    \mu(\theta)\leq
    \frac{\beta^2}{\alpha^2}\frac{\max\limits_{i\in[n]}\frac{1}{n}\sum\limits_{j=1}^n\sbracket{\bx_i^\top\bx_j}^2}{\min\limits_{i\in[n]}\frac{1}{n}\sum\limits_{j=1}^n\sbracket{\bx_i^\top\bx_j}^2}.
\end{align*}

Notice that the terms $\frac{\min\limits_{i\in[n]}\frac{1}{n}\sum\limits_{j=1}^n\sbracket{\bx_i^\top\bx_j}^2}{\max\limits_{i\in[n]}\frac{1}{n}\sum\limits_{j=1}^n\sbracket{\bx_i^\top\bx_j}^2}$ and $\frac{\max\limits_{i\in[n]}\frac{1}{n}\sum\limits_{j=1}^n\sbracket{\bx_i^\top\bx_j}^2}{\min\limits_{i\in[n]}\frac{1}{n}\sum\limits_{j=1}^n\sbracket{\bx_i^\top\bx_j}^2}$ are independent of $ \theta$ and the same as the Linear Model.

Then repeating the same proof of Theorem~\ref{thm: weak: LM}, the result of this theorem differs from Linear Model by only the factor $\alpha^2/\beta^2$.
In other words, under the same condition with Linear Model, \wp at least $1-\delta$, we have
\begin{align*}
\frac{(1-\epsilon)^2}{(1+\epsilon)^2}\leq\inf_{ \theta\in\bbR^{md}}\mu(\theta)\leq\sup_{ \theta\in\bbR^{md}}\mu(\theta)\leq\frac{\beta^2}{\alpha^2}\frac{(1+\epsilon)^2}{(1-\epsilon)^2}.
\end{align*}
\qed

\section{Proofs of Section \ref{section: theory, strong align}}
\label{appendix: proof: strong align}

For the OLM $f(\bx; \theta)=F( \theta)^\top\bx$, 
let $ r( \theta)= F( \theta)- F( \theta^*)$ and $\nabla F(\theta)\in\RR^{d\times p}$ the Jacobian matrix. 

Then, we have for
the population loss: 
\begin{equation}\label{equ: proof: strong align: def population}
    \begin{aligned}
        \bar{G}(\theta)&=\bbE\Big[\nabla F( \theta)^\top\bx\bx^\top\nabla F( \theta)\Big] = \nabla F(\theta)^\top S\nabla F( \theta)\\
    \bar{\cL}( \theta)&=\frac{1}{2}\bbE\Big[\big( r(\theta)^\top \bx\big)^2\Big]=\frac{1}{2} r( \theta)^\top S r( \theta)\\ 
    \bar{\Sigma}_1(\theta)&=\bbE\Big[\big( r( \theta)^\top\bx\big)^2\nabla F^\top( \theta)\bx\bx^\top\nabla F( \theta)\Big]
    \end{aligned}
\end{equation}

\begin{lemma}[Proposition 2.3 in \cite{wu2022does}]\label{lemma: OLM equation Gaussian}
Let the data distribution be $\cN( 0, S)$. Then we have 
$$
\bar{\Sigma}_1(\theta)=2\bar{\cL}( \theta) \bar{G}(\theta) + \nabla\bar{\cL}( \theta)\nabla\bar{\cL}(\theta)^\top.
$$
\end{lemma}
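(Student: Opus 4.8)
The plan is to verify the identity by a direct moment computation for Gaussian inputs, using the fact that the OLM gradient factorizes as $\nabla f(\bx;\theta) = \nabla F(\theta)^\top \bx$ and the fit error is linear in $\bx$, namely $f(\bx;\theta)-f(\bx;\theta^*) = r(\theta)^\top\bx$ (here we use $f(\bx;\theta^*)=F(\theta^*)^\top\bx$, i.e., the target is realizable by the OLM; otherwise the statement should be read at the population level with $r$ defined accordingly). Writing $J := \nabla F(\theta)$ and $r := r(\theta)$ for brevity, the quantity to compute is
\[
\bar\Sigma_1(\theta) = \bbE\big[(r^\top\bx)^2\, J^\top\bx\bx^\top J\big] = J^\top \bbE\big[(r^\top\bx)^2\,\bx\bx^\top\big]\, J.
\]
So the whole problem reduces to evaluating the $d\times d$ matrix $M := \bbE[(r^\top\bx)^2\,\bx\bx^\top]$ for $\bx\sim\cN(0,S)$, and then conjugating by $J$.

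The key step is a standard Gaussian fourth-moment (Wick/Isserlis) calculation. For $\bx\sim\cN(0,S)$ one has, for any fixed vector $r$,
\[
\bbE\big[(r^\top\bx)^2\,\bx\bx^\top\big] = (r^\top S r)\, S + 2\, (Sr)(Sr)^\top.
\]
This follows by whitening: set $\bx = S^{1/2}\bz$ with $\bz\sim\cN(0,I_d)$, so the $(k,\ell)$ entry becomes $\sum_{a,b} r'_a r'_b (S^{1/2})_{ka}(S^{1/2})_{\ell b}\,\bbE[\bz_a\bz_b\bz_k'\bz_\ell']$ after absorbing $S^{1/2}$ appropriately; applying Isserlis' theorem to the degree-four Gaussian moment produces exactly the three pairings, which reassemble into $(r^\top Sr)S + 2(Sr)(Sr)^\top$. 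Plugging this into the display for $\bar\Sigma_1$ gives
\[
\bar\Sigma_1(\theta) = (r^\top S r)\, J^\top S J + 2\, (J^\top S r)(J^\top S r)^\top.
\]

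It remains to match this with the right-hand side of the claimed identity. By \eqref{equ: proof: strong align: def population} we have $\bar G(\theta) = J^\top S J$ and $\bar{\cL}(\theta) = \tfrac12 r^\top S r$, so $(r^\top S r)\,J^\top S J = 2\bar{\cL}(\theta)\,\bar G(\theta)$, which is the first term. For the second term, note $\nabla\bar{\cL}(\theta) = \nabla_\theta\big(\tfrac12 r(\theta)^\top S r(\theta)\big) = J^\top S r$ by the chain rule (since $\nabla_\theta r(\theta) = \nabla F(\theta) = J$), hence $2(J^\top S r)(J^\top S r)^\top = \nabla\bar{\cL}(\theta)\nabla\bar{\cL}(\theta)^\top$. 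Combining the two pieces yields $\bar\Sigma_1(\theta) = 2\bar{\cL}(\theta)\,\bar G(\theta) + \nabla\bar{\cL}(\theta)\nabla\bar{\cL}(\theta)^\top$, as desired. The only mildly delicate point is keeping the bookkeeping straight in the Wick expansion — making sure the "diagonal" pairing gives the $(r^\top Sr)S$ term and the two "cross" pairings combine to $2(Sr)(Sr)^\top$ — but this is routine; everything else is a one-line substitution. Since this is quoted as Proposition 2.3 of \citet{wu2022does}, one could alternatively just cite it, but the self-contained computation above is short enough to include.
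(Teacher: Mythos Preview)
The paper does not supply its own proof of this lemma; it simply cites \citet{wu2022does}. Your direct Wick/Isserlis computation is the natural route and is carried out correctly up to the last line. The slip is in the final identification: since you (correctly) have $\nabla\bar{\cL}(\theta)=J^\top S r$, it follows that
\[
2\,(J^\top S r)(J^\top S r)^\top \;=\; 2\,\nabla\bar{\cL}(\theta)\nabla\bar{\cL}(\theta)^\top,
\]
not $\nabla\bar{\cL}(\theta)\nabla\bar{\cL}(\theta)^\top$ as you wrote. Your own intermediate formula therefore yields
\[
\bar\Sigma_1(\theta)\;=\;2\bar{\cL}(\theta)\,\bar G(\theta)\;+\;2\,\nabla\bar{\cL}(\theta)\nabla\bar{\cL}(\theta)^\top,
\]
with a coefficient $2$ on the rank-one term. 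A one-line sanity check confirms this: take $d=p=1$, $S=1$, $F(\theta)=\theta$; then $\bar\Sigma_1=r^2\,\bbE[x^4]=3r^2$, whereas the right-hand side as stated in the lemma gives $r^2+r^2=2r^2$, and the version with the extra factor $2$ gives $r^2+2r^2=3r^2$.

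So the issue is not with your method but with a dropped factor of $2$ in your closing sentence, which in turn matches what appears to be a typo in the lemma's transcription here. Downstream this is harmless: in the proof of Theorem~\ref{thm: strong: OLM} the only effect is that the upper bound $\frac{2+\epsilon}{(1-\epsilon)^2}$ becomes $\frac{3+\epsilon}{(1-\epsilon)^2}$, leaving all qualitative conclusions intact.
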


\begin{lemma}\label{lemma: OLM nabla-L-v bound}
Under the same conditions in Lemma \ref{lemma: OLM equation Gaussian}, 
then we have
\begin{align*}
\big(\nabla\bar{\cL}(\theta)^\top\bv\big)^2\leq2\bar{\cL}( \theta)\bv^\top \bar{G}(\theta)\bv.
\end{align*}
\end{lemma}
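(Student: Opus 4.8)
The plan is to derive the bound directly from the closed forms in \eqref{equ: proof: strong align: def population} by applying the Cauchy--Schwarz inequality to the (possibly degenerate) quadratic form $x\mapsto x^\top S x$ on $\RR^d$. No probabilistic input is needed beyond what has already been used to obtain those closed forms.

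First I would write $\nabla\bar{\cL}(\theta)$ explicitly. Since $\bar{\cL}(\theta)=\tfrac12\,r(\theta)^\top S\,r(\theta)$ with $r(\theta)=F(\theta)-F(\theta^*)$ and $\nabla F(\theta)\in\RR^{d\times p}$ the Jacobian of $F$, the chain rule gives $\nabla\bar{\cL}(\theta)=\nabla F(\theta)^\top S\,r(\theta)$, so that for any $\bv\in\RR^p$,
\[
\nabla\bar{\cL}(\theta)^\top\bv \;=\; r(\theta)^\top S\,\bigl(\nabla F(\theta)\,\bv\bigr).
\]
Next I would invoke the elementary inequality $(a^\top S b)^2\le (a^\top S a)(b^\top S b)$, valid for every $S\succeq 0$ (write $a^\top S b=\langle S^{1/2}a,\,S^{1/2}b\rangle$ and apply ordinary Cauchy--Schwarz in $\RR^d$), with $a=r(\theta)$ and $b=\nabla F(\theta)\,\bv$, obtaining
\[
\bigl(\nabla\bar{\cL}(\theta)^\top\bv\bigr)^2 \;\le\; \bigl(r(\theta)^\top S\,r(\theta)\bigr)\Bigl((\nabla F(\theta)\bv)^\top S\,(\nabla F(\theta)\bv)\Bigr).
\]
Finally I would identify the two factors on the right-hand side using \eqref{equ: proof: strong align: def population}: $r(\theta)^\top S\,r(\theta)=2\bar{\cL}(\theta)$ and $(\nabla F(\theta)\bv)^\top S(\nabla F(\theta)\bv)=\bv^\top\nabla F(\theta)^\top S\nabla F(\theta)\,\bv=\bv^\top\bar G(\theta)\,\bv$, which yields exactly $\bigl(\nabla\bar{\cL}(\theta)^\top\bv\bigr)^2\le 2\bar{\cL}(\theta)\,\bv^\top\bar G(\theta)\,\bv$.

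This argument has essentially no obstacle; the only points worth stating carefully are the transpose bookkeeping in the chain-rule expression for $\nabla\bar{\cL}$ and the fact that no invertibility of $S$ is required, since both $S^{1/2}$ and the generalized Cauchy--Schwarz bound make sense for an arbitrary positive semidefinite $S$. I would also remark that the claim is equivalent to the matrix inequality $2\bar{\cL}(\theta)\bar G(\theta)\succeq \nabla\bar{\cL}(\theta)\nabla\bar{\cL}(\theta)^\top$, a positive-semidefiniteness statement in the same spirit as Lemma~\ref{lemma: OLM equation Gaussian}, which serves as a consistency check.
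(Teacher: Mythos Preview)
Your proposal is correct and follows essentially the same argument as the paper: both compute $\nabla\bar{\cL}(\theta)=\nabla F(\theta)^\top S\,r(\theta)$ and then apply Cauchy--Schwarz for the semi-inner product $\langle a,b\rangle_S=a^\top S b$ to bound $(r(\theta)^\top S\,\nabla F(\theta)\bv)^2$ by $\|r(\theta)\|_S^2\,\|\nabla F(\theta)\bv\|_S^2=2\bar{\cL}(\theta)\,\bv^\top\bar G(\theta)\bv$. Your additional remarks on the $S^{1/2}$ justification and the equivalent matrix-inequality formulation are fine but not needed.
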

\begin{proof}
For $a,b\in \RR^p$, define $\langle a,b\rangle_{S}=a^\top S b$ and $\|a\|_S=\sqrt{a^\top Sa}$. Since $S$ a positive semidefinite matrix, $\langle \cdot,\cdot\rangle_S$ is a well-defined inner product.  
Noticing that $\bar{\cL}(\theta)=\frac{1}{2} r( \theta)^\top S r(\theta)$, we have $\nabla\bar{\cL}(\theta)=\nabla F(\theta)^\top S r(\theta)$. Hence,
\begin{align*}
    &\big(\nabla\bar{\cL}( \theta)^\top\bv\big)^2
    =\bv^\top\nabla F(\theta)^\top S r( \theta) r( \theta)^\top S\nabla F( \theta)\bv=\left<\nabla F( \theta)\bv, r( \theta)\right>_{ S}^2
    \\\overset{(i)}{\leq}&\left\|\nabla F( \theta)\bv\right\|_{ S}^2\left\| r( \theta)\right\|_{ S}^2=2\bar{\cL}( \theta)\Big(\bv^\top\nabla F(\theta)^\top S\nabla F(\theta)\bv\Big)=2\bar{\cL}(\theta)\bv^\top \bar{G}(\theta)\bv,
\end{align*}
where $(i)$ follows from Cauchy-Schwarz inequality (see Lemma \ref{lemma: Cauchy-Schwarz}).


\end{proof}

\begin{lemma}\label{lemma: quadratic covarience concentrate}
Let $\bx_1,\cdots,\bx_n\overset{\rm i.i.d.}{\sim}\cN( 0,\bI_d)$. For any $\epsilon,\delta\in(0,1)$, if we choose $n\gtrsim\sbracket{d+\log(1/\delta)}/\epsilon^2$, then \wp at least $1-\delta$, we have:
\begin{align*}
    \sup\limits_{\bv\in\bbS^{d-1}}\left|\frac{1}{n}\sum_{i=1}^n(\bv^\top\bx_i)^2-1\right|\leq\epsilon.
\end{align*}
\end{lemma}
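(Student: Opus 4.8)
## Proof Proposal for Lemma~\ref{lemma: quadratic covarience concentrate}

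The plan is to establish a uniform concentration bound for the empirical second-moment operator $\frac{1}{n}\sum_{i=1}^n \bx_i\bx_i^\top$ around the identity, which is precisely what the statement asserts once we note that $\sup_{\bv\in\bbS^{d-1}}\left|\frac{1}{n}\sum_{i=1}^n(\bv^\top\bx_i)^2 - 1\right| = \left\|\frac{1}{n}\sum_{i=1}^n \bx_i\bx_i^\top - \bI_d\right\|$. This is a textbook result on covariance estimation for sub-Gaussian (here, exactly Gaussian) random vectors, so I would invoke the standard machinery rather than reinvent it.

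First I would recall that each $\bx_i$ is an isotropic Gaussian vector in $\RR^d$, hence an isotropic sub-Gaussian random vector with sub-Gaussian norm bounded by an absolute constant. Then I would apply the standard non-asymptotic covariance concentration theorem (e.g., Theorem 4.6.1 or Exercise 4.7.3 in \cite{vershynin2018high}): for isotropic sub-Gaussian vectors, there is an absolute constant $C$ such that for any $t\geq 0$, with probability at least $1 - 2\exp(-t^2)$,
\[
\left\|\frac{1}{n}\sum_{i=1}^n \bx_i\bx_i^\top - \bI_d\right\| \leq C\max\left\{\delta_0,\delta_0^2\right\}, \qquad \delta_0 = \sqrt{\frac{d}{n}} + \frac{t}{\sqrt{n}}.
\]
Setting $t = \sqrt{\log(2/\delta)}$ makes the failure probability at most $\delta$, and the hypothesis $n\gtrsim (d+\log(1/\delta))/\epsilon^2$ ensures $\delta_0 \lesssim \epsilon$, which (for $\epsilon\in(0,1)$, so $\delta_0^2\leq\delta_0$ in the relevant regime, or simply by absorbing constants) gives the bound $\leq\epsilon$ after adjusting the implied constant in the ``$\gtrsim$''. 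The deterministic identity relating the operator norm to the supremum over the sphere then finishes the argument.

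Alternatively — and this is the route I would actually write out if I wanted the proof self-contained — I would use an $\epsilon$-net argument directly. Fix a $1/4$-net $\cN$ of $\bbS^{d-1}$ with $|\cN|\leq 9^d$. For each fixed $\bv\in\cN$, $\frac{1}{n}\sum_{i=1}^n(\bv^\top\bx_i)^2$ is an average of $n$ i.i.d.\ $\chi^2_1$ variables, so Bernstein's inequality for sub-exponential variables (Lemma~\ref{lemma: bernstien}, already available in the paper) gives $\Pr[|\frac{1}{n}\sum_i(\bv^\top\bx_i)^2 - 1| \geq \epsilon/2] \leq 2\exp(-c n\epsilon^2)$ for an absolute constant $c$. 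A union bound over $\cN$ costs a factor $9^d$, which is dominated once $n\gtrsim (d+\log(1/\delta))/\epsilon^2$; then the standard net-to-sphere passage for quadratic forms (the operator norm of a symmetric matrix is controlled by its values on a $1/4$-net up to a factor of $2$) upgrades the control on $\cN$ to control on all of $\bbS^{d-1}$, losing only constant factors that can be absorbed.

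The only mild obstacle is bookkeeping the constants through the net argument — in particular ensuring that the factor-$2$ slack from the net-to-sphere step and the $\epsilon/2$ vs.\ $\epsilon$ split are handled so that the final bound is exactly $\epsilon$ rather than $C\epsilon$; this is routine and amounts to choosing the net radius and the Bernstein threshold appropriately, or equivalently restating the conclusion with an absolute constant absorbed into the ``$\gtrsim$'' in the sample-size hypothesis. There is no genuine difficulty here: the result is a direct instance of sub-Gaussian covariance concentration, and I expect the write-up to be short.
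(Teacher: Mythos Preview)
Your proposal is correct, and your first route---reducing to the operator-norm deviation $\|\tfrac{1}{n}\sum_i\bx_i\bx_i^\top-\bI_d\|$ and invoking the standard sub-Gaussian covariance concentration bound from \cite{vershynin2018high}---is exactly what the paper does (it cites the same result as Lemma~\ref{lemma: covariance estimate sub Gaussian} and then rewrites the conclusion in the $(\epsilon,\delta)$ form). Your alternative net-plus-Bernstein argument is also sound and would give a self-contained proof, but the paper takes the shorter black-box route.
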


\begin{proof}
By Lemma \ref{lemma: covariance estimate sub Gaussian}, \wp at least $1-2\exp(-u)$, we have
\begin{align*}
    \left\|\frac{1}{n}\sum_{i=1}^n\bx_i\bx_i^\top-\bI_d\right\|\lesssim\sqrt{\frac{d+u}{n}}+\frac{d+u}{n}.
\end{align*}
Equivalently, we can rewrite this conclusion.
For any $\epsilon,\delta\in(0,1)$, if we choose $n\gtrsim\sbracket{d+\log(1/\delta)}/\epsilon^2$, then \wp at least $1-\delta$, we have:
\begin{align*}
    \sup\limits_{\bv\in\bbS^{d-1}}\left|\frac{1}{n}\sum_{i=1}^n(\bv^\top\bx_i)^2-1\right|
    \leq\left\|\frac{1}{n}\sum_{i=1}^n\bx_i\bx_i^\top-\bI_d\right\|
    \leq\epsilon.
\end{align*}
\end{proof}


\begin{lemma}[Lemma 17 in~\citep{cai2022nearly}]\label{lemma: strong: 1-fix 1-arb bound}
    For any $0<\epsilon<1/2$, there are constants $C_1=C_1(\epsilon)>0$ and $C_2=C_2(\epsilon)>0$, such that if $n\geq C_1 d\log d$, then with probability at least $1-\frac{C_2}{n^2}$, it holds
    \begin{align*}
    \sup_{ u\in\bbS^{d-1}}\left|\frac{1}{n}\sum_{i=1}^n(\bx_i^\top u)^2(\bx_i^\top\bv)^2-\bbE\mbracket{(\bx^\top u)^2(\bx^\top\bv)^2}\right|\leq\epsilon.
    \end{align*}
\end{lemma}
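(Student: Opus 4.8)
The plan is to recognize the quantity inside the supremum as a quadratic form in a random matrix, reduce the uniform deviation to an operator-norm concentration bound, and then deal with the heavy tails by truncation. Since $\bx\sim\cN(0,\bI_d)$ and $\norm{\bv}=1$, Isserlis' theorem gives $\bbE[(\bx^\top u)^2(\bx^\top\bv)^2]=\norm{u}^2\norm{\bv}^2+2(u^\top\bv)^2=u^\top(\bI_d+2\bv\bv^\top)u$ for every $u$. Setting $M_n:=\frac1n\sum_{i=1}^n(\bx_i^\top\bv)^2\bx_i\bx_i^\top$, the empirical average in the statement is $u^\top M_n u$ and $\bbE[M_n]=\bI_d+2\bv\bv^\top$, so the quantity to be bounded equals $\norm{M_n-\bbE[M_n]}$; equivalently, $M_n$ is the sample second-moment matrix of the i.i.d.\ vectors $y_i:=(\bx_i^\top\bv)\bx_i$, which are mean zero with covariance $\bI_d+2\bv\bv^\top$, so the claim becomes a sample-covariance concentration statement for $\{y_i\}$.

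For the operator-norm bound I would use a truncated matrix Bernstein argument. The summands $y_iy_i^\top$ have norm $\norm{y_i}^2=(\bx_i^\top\bv)^2\norm{\bx_i}^2$, a heavy-tailed quantity with mean $d+2$; fixing a truncation level $\tau\asymp d\log(d/\epsilon)$, one removes the summands with $\norm{y_i}^2>\tau$. Their total operator-norm contribution is at most $\frac1n\sum_i\norm{y_i}^2\mathbf{1}\{\norm{y_i}^2>\tau\}$, whose mean is $\lesssim d\,e^{-c\tau/d}\le\epsilon/4$ for this $\tau$ and which concentrates below $\epsilon/2$ with the required probability, while the associated bias $\big\|\bbE[y_iy_i^\top\mathbf{1}\{\norm{y_i}^2>\tau\}]\big\|$ is negligible. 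The truncated part satisfies the hypotheses of matrix Bernstein with range $\lesssim\tau$ and matrix variance $\lesssim n\big\|\bbE[\norm{y}^2yy^\top]\big\|\lesssim nd$ (the relevant degree-eight Gaussian expectation $\bbE[(\bx^\top\bv)^4\norm{\bx}^2(\bx^\top w)^2]=\sum_{j=1}^d\bbE[(\bx^\top\bv)^4x_j^2(\bx^\top w)^2]$ is a sum of $d$ terms, each $O(1)$), yielding a deviation bound of the form $2d\exp\big(-c\,n\min\{\epsilon^2/d,\ \epsilon/\tau\}\big)$; for $n\gtrsim C_1(\epsilon)\,d\log d$ this, combined with the estimates on the removed part and the bias, gives $\norm{M_n-\bbE[M_n]}\le\epsilon$ with probability at least $1-C_2/n^2$.

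The main obstacle is the heaviness of the tails of $(\bx^\top u)^2(\bx^\top\bv)^2$: being a product of two squared Gaussians it is only sub-Weibull of order $1/2$, so a direct $\epsilon$-net over $\bbS^{d-1}$ is too lossy (the pointwise deviation tail is only $e^{-c\sqrt n}$ at a constant deviation, which would force $n\gtrsim d^2$), and an off-the-shelf bounded-range matrix Bernstein bound does not apply because the summands are unbounded; the truncation level, the logarithmic factors, and the target failure probability $C_2/n^2$ then have to be balanced carefully against the $n\gtrsim d\log d$ budget, and getting the constants sharp makes essential use of the Gaussianity of the $\bx_i$. This delicate sample-covariance estimate for the non-log-concave vectors $y_i=(\bx_i^\top\bv)\bx_i$ is precisely the content of Lemma~17 of \cite{cai2022nearly}, which we invoke here.
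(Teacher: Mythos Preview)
The paper does not prove this lemma at all: it is simply quoted as Lemma~17 of \cite{cai2022nearly} and used as a black box. Your proposal likewise ends by invoking that same lemma, so at the level of what is actually established you are doing exactly what the paper does; the matrix-Bernstein-with-truncation sketch you add on top is a reasonable outline of how such a result is proved, but it is extra commentary rather than a different route to the statement.
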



\subsection{Proof of Theorem~\ref{thm: strong: OLM}}
We first need a few lemmas.
\begin{lemma}\label{lemma: standard Gaussian upper bound}
Let $ z_1,\cdots, z_n\overset{\rm i.i.d.}{\sim}\cN( 0,I_d)$. If $n\gtrsim d^2+\log^2(1/\delta)$, then \wp at least $1-\delta$, we have
\begin{align*}
    \sup_{\bv\in\bbS^{d-1}}\frac{1}{n}\sum_{i=1}^n( z_i^\top{\bv})^4\leq8.
\end{align*}
\end{lemma}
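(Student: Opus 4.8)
\textbf{Proof proposal for Lemma~\ref{lemma: standard Gaussian upper bound}.}

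The plan is to control the supremum over the sphere by an expectation plus a uniform deviation bound, using a standard $\epsilon$-net argument together with concentration for degree-$4$ polynomials (equivalently, sums of subexponential-squared random variables). First I would note that for a fixed unit vector $\bv\in\bbS^{d-1}$, each $z_i^\top\bv$ is a standard Gaussian, so $(z_i^\top\bv)^4$ has mean $\bbE[(z_i^\top\bv)^4]=3$. The random variable $(z_i^\top\bv)^2$ is subexponential with an absolute Orlicz norm, hence $(z_i^\top\bv)^4$ is sub-Weibull (specifically $\psi_{1/2}$) with absolute parameter; by a Bernstein-type inequality for such heavy-tailed-but-light-enough sums (e.g.\ the general Hanson--Wright / Bernstein machinery in \citet{vershynin2018high}), for a \emph{fixed} $\bv$,
\[
\bbP\left[\frac{1}{n}\sum_{i=1}^n (z_i^\top\bv)^4 \geq 3 + t\right] \leq 2\exp\left(-c\,n\min\{t, t^{1/2}\}\right)
\]
for an absolute constant $c>0$. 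Taking $t$ to be a small absolute constant (so that $3+t \le 4$, say) gives failure probability $\le 2\exp(-c' n)$ pointwise.

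Next I would upgrade to a uniform bound via an $\epsilon$-net $\mathcal{N}$ of $\bbS^{d-1}$ with $|\mathcal{N}|\le (3/\epsilon_0)^d$ for a fixed small $\epsilon_0$ (say $\epsilon_0 = 1/4$). The key analytic step is the Lipschitz-type comparison: the map $\bv\mapsto \frac{1}{n}\sum_i (z_i^\top\bv)^4$ can be controlled in terms of the operator norm of the empirical fourth-moment structure. Concretely, writing $Q(\bv) = \frac1n\sum_i (z_i^\top \bv)^4$, one has for unit $\bv,\bv'$ that $|Q(\bv)-Q(\bv')| \le 4\|\bv-\bv'\| \cdot \sup_{\bw\in\bbS^{d-1}} \frac1n\sum_i |z_i^\top\bw|^3 |z_i^\top\bv''|$ for some intermediate $\bv''$, which by Cauchy--Schwarz is bounded by a constant multiple of $\|\bv - \bv'\|\cdot \sup_{\bw} Q(\bw)$. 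Hence if $\sup_{\bw} Q(\bw) = M$, then $M \le \max_{\bv\in\mathcal N} Q(\bv) + C\epsilon_0 M$, so choosing $\epsilon_0$ small enough that $C\epsilon_0 \le 1/2$ yields $M \le 2\max_{\bv\in\mathcal N} Q(\bv)$. Then a union bound over $\mathcal N$ gives that $\max_{\bv\in\mathcal N} Q(\bv) \le 4$ fails with probability at most $(12)^d \cdot 2\exp(-c'n)$, which is $\le \delta$ provided $n \gtrsim d + \log(1/\delta)$, and a fortiori under the stated hypothesis $n\gtrsim d^2 + \log^2(1/\delta)$. Adjusting the absolute constants so that $2\max_{\bv\in\mathcal N}Q(\bv)$ on the event is $\le 8$ closes the argument. (The slack between $d+\log(1/\delta)$ and $d^2+\log^2(1/\delta)$ is comfortable, so one can afford a crude net and crude constants.)

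The main obstacle I expect is getting the deviation inequality for the degree-$4$ polynomial with the right dependence on $n$ in the subexponential (as opposed to sub-Gaussian) regime: because $(z_i^\top\bv)^4$ has only a $\psi_{1/2}$ tail, the concentration bound has the form $\exp(-cn\min\{t,\sqrt t\})$ rather than $\exp(-cnt^2)$, and one must be careful that for the \emph{fixed-level} event "$Q(\bv) \le 4$" (i.e.\ $t$ an absolute constant) this still gives an $\exp(-cn)$ rate, which suffices against the $e^{O(d)}$ net cardinality as long as $n\gtrsim d$. A secondary technical point is making the net/Lipschitz comparison rigorous without circularity — the bound $|Q(\bv)-Q(\bv')|\lesssim \|\bv-\bv'\| M$ involves $M$ itself, so one needs the "absorb into the left side" trick above, which is why a fixed small $\epsilon_0$ (not shrinking with $n$ or $d$) is used. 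Everything else is routine. Alternatively, one could bypass the polynomial concentration entirely by invoking a fourth-moment operator-norm concentration result (bounding $\|\frac1n\sum_i (z_iz_i^\top)^{\otimes 2} - \bbE[\cdot]\|$), which is exactly the kind of statement that the stated sample complexity $n \gtrsim d^2$ is tailored for; this would be the cleaner route if such a lemma is available elsewhere in the appendix.
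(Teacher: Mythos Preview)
Your overall architecture---pointwise sub-Weibull concentration, $\epsilon$-net, then the ``absorb $M$ into the left side'' trick to pass from net to sphere---is exactly what the paper does. The gap is in the concentration rate you claim. The random variables $(z_i^\top\bv)^4$ are $\psi_{1/2}$, and for i.i.d.\ centered $\psi_{1/2}$ variables the correct tail for the empirical mean is
\[
\bbP\!\left[\Big|\tfrac1n\sum_{i=1}^n X_i - \bbE X_1\Big|>t\right]\;\le\;2\exp\!\big(-c\min\{nt^2,\ (nt)^{1/2}\}\big),
\]
not $\exp(-cn\min\{t,t^{1/2}\})$ as you wrote. At a fixed constant level $t=\Theta(1)$ this gives only $\exp(-c\sqrt{n})$, not $\exp(-cn)$. (Equivalently, in the paper's formulation via Lemma~\ref{lemma: concentration SW} with $\beta=1/2$, the deviation bound is $\sqrt{\log(1/\delta)/n}+\log^2(1/\delta)/n$, and it is the second term that dominates after the union bound.)

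This matters because after the union bound over a net of size $C^d$, the failure probability is $C^d\exp(-c\sqrt{n})$, which is $\le\delta$ only when $\sqrt{n}\gtrsim d+\log(1/\delta)$, i.e.\ $n\gtrsim d^2+\log^2(1/\delta)$. So the hypothesis in the lemma is \emph{not} slack as you suggest---it is precisely what the $\psi_{1/2}$ tail forces. Your claimed sufficient condition $n\gtrsim d+\log(1/\delta)$ would be correct if $(z_i^\top\bv)^4$ were subexponential, but it is not. Once you replace the pointwise rate by $\exp(-c\sqrt n)$, the rest of your argument goes through and coincides with the paper's proof (the paper chooses $\rho=\tfrac12(1-\tfrac1{\sqrt2})$ so that the absorb step gives exactly $P\le 2Q$, and then $Q\le 4$ on the net yields $P\le 8$).
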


\begin{proof}
For $\bbS^{d-1}$, its covering number has the bound:
\[
\left(\frac{1}{\rho}\right)^d\leq\cN(\bbS^{d-1},\rho)\leq\left(\frac{2}{\rho}+1\right)^d,
\]
so there exist a $\rho$-net on $\bbS^{d-1}$: $\cV\subset\bbS^{d-1}$, s.t. $|\cV|\leq\left(\frac{2}{\rho}+1\right)^d$.

\paragraph*{Step I: Bounding the term on the $\rho$-net.}

For a fixed $\bv\in\cV$, 
due to $ z_i\overset{\rm i.i.d.}{\sim}\cN( 0,{I}_d)$, we can verify $( z_i^\top{\bv})^4$ is sub-Weibull random variable:
\begin{align*}
    &\bbE\exp\left(\left(( z_i^\top{\bv})^4\right)^{1/2}\right)
    =\bbE\exp\left(( z_i^\top{\bv})^2\right)\lesssim1,
\end{align*}
which means that there exist an absolute constant $C_1\geq1$ s.t. $\left\|( z_i^\top{\bv})^4\right\|_{\psi_{1/2}}\leq C_1$. 


By the concentration inequality for Sub-Weibull distribution with $\beta=1/2$ (Lemma \ref{lemma: concentration SW}) and $\bbE\Big[( y^\top{\bv})^4\Big]=3$, there exists an absolute constant $C_2\geq1$ s.t.
\begin{align*}
    \mathbb{P}\left(\left|\frac{1}{n}\sum_{i=1}^n\Big[( z_i^\top{\bv})^4\Big]-3\right|>\phi(n;\delta)\right)
    \leq2\delta,
\end{align*}
where $\phi(n;\delta)=C_2(\sqrt{\frac{\log(1/\delta)}{n}}+\frac{\log^2(1/\delta)}{n})$.
Applying a union bound over $\bv\in\cV$, we have:
\begin{align*}
    &\mathbb{P}\left(\exists\bv\in\cV s.t. \left|\frac{1}{n}\sum_{i=1}^n\Big[( z_i^\top{\bv})^4\Big]-3\right|>\phi(n;\delta)\right)
    \\
    \leq&
    \mathbb{P}\left(\bigcup\limits_{\bv\in\cV} \left\{\left|\frac{1}{n}\sum_{i=1}^n\Big[( z_i^\top{\bv})^4\Big]-3\right|>\phi(n;\delta)\right\}\right)
    \leq\sum\limits_{\bv\in\cV}\mathbb{P}\left(\left|\frac{1}{n}\sum_{i=1}^n\Big[( z_i^\top{\bv})^4\Big]-3\right|>\phi(n;\delta)\right)
    \\\leq&2|\cV|\exp\left(-\frac{n}{C_2^2}\right)=2\left(\frac{2}{\rho}+1\right)^d\delta.
\end{align*}

So \wp at least $1-2\left(\frac{2}{\rho}+1\right)^d\delta$, we have:
\[
\max\limits_{\bv\in\cV}\frac{1}{n}\sum_{i=1}^n\Big[( z_i^\top{\bv})^4\Big]\leq3+\phi(n;\delta).
\]

\paragraph{Step II: Estimate the error of the $\rho$-net approximation.}

For simplicity, we denote
\begin{align*}
    P:=\max\limits_{\bv\in\bbS^{d-1}}\frac{1}{n}\sum_{i=1}^n\Big[( z_i^\top{\bv})^4\Big],\quad 
    Q:=\max\limits_{\bv\in\cV}\frac{1}{n}\sum_{i=1}^n\Big[( z_i^\top{\bv})^4\Big].
\end{align*}

Let $\bv\in\bbS^{d-1}$ such that $\frac{1}{n}\sum_{i=1}^n\Big[( z_i^\top{\bv})^4\Big]=P$, then there exist $\bv_0\in\cV$, s.t. $\left\|\bv-\bv_0\right\|\leq\rho$. 

On the one hand,
\begin{align*}
    &\left|\frac{1}{n}\sum_{i=1}^n( z_i^\top{\bv})^4-\frac{1}{n}\sum_{i=1}^n( z_i^\top{\bv_0})^4\right|
    =\left|\frac{1}{n}\sum_{i=1}^n\Big(( z_i^\top{\bv})^4-( z_i^\top{\bv_0})^4\Big)\right|
    \\=&
    \left|\frac{1}{n}\sum_{i=1}^n\sbracket{ z_i^\top(\bv-\bv_0)}\sbracket{ z_i^\top(\bv+\bv_0)}\sbracket{( z_i^\top\bv)^2+( z_i^\top\bv_0)^2}\right|
    \\\leq&
    \left|\frac{1}{n}\sum_{i=1}^n\sbracket{ z_i^\top(\bv-\bv_0)}\sbracket{ z_i^\top(\bv+\bv_0)}( z_i^\top\bv)^2\right|+\left|\frac{1}{n}\sum_{i=1}^n\sbracket{ z_i^\top(\bv-\bv_0)}\sbracket{ z_i^\top(\bv+\bv_0)}( z_i^\top\bv_0)^2\right|
    \\\leq&
    \sqrt{\frac{1}{n}\sum_{i=1}^n\sbracket{ z_i^\top(\bv-\bv_0)}^2\sbracket{ z_i^\top(\bv+\bv_0)}^2}\sbracket{\sqrt{\frac{1}{n}\sum_{i=1}^n( z_i^\top\bv)^4}+\sqrt{\frac{1}{n}\sum_{i=1}^n( z_i^\top\bv_0)^4}}
    \\\leq&\sqrt[4]{\frac{1}{n}\sum_{i=1}^n\sbracket{ z_i^\top(\bv-\bv_0)}^4}\sqrt[4]{\frac{1}{n}\sum_{i=1}^n\sbracket{ z_i^\top(\bv+\bv_0)}^4}\sbracket{\sqrt{\frac{1}{n}\sum_{i=1}^n( z_i^\top\bv)^4}+\sqrt{\frac{1}{n}\sum_{i=1}^n( z_i^\top\bv_0)^4}}
    \\\leq&\norm{\bv-\bv_0}P^{1/4}\norm{\bv+\bv_0}P^{1/4}(\sqrt{P}+\sqrt{Q})\leq2\rho\sqrt{P}(\sqrt{P}+\sqrt{Q})
\end{align*}

On the other hand, 
\begin{align*}
    \left|\frac{1}{n}\sum_{i=1}^n( z_i^\top{\bv})^4-\frac{1}{n}\sum_{i=1}^n( z_i^\top{\bv_0})^4\right|\geq P-\sum_{i=1}^n( z_i^\top{\bv_0})^4\geq P- Q.
\end{align*}

Hence, we obtain
\begin{align*}
    P-Q\leq 2\rho\sqrt{P}(\sqrt{P}+\sqrt{Q}),
\end{align*}
which means that
\begin{align*}
    P\leq\sbracket{\frac{1}{1-2\rho}}^2 Q.
\end{align*}


\paragraph{Step III: The bound for any $\bv\in\bbS^{d-1}$.}

Select $\rho=\frac{1}{2}(1-\frac{1}{\sqrt{2}})$ and denote $\delta'=2(\frac{2}{\rho}+1)^d\delta$. And we choose $n\gtrsim d^2+\log^2(1/\delta')$, which ensures $\phi(n;\delta)\leq1$.

Then combining the results in Step I and Step II, we know that: \wp at least $1-\delta'$, we have:
\[
\max\limits_{\bv\in\cV}\frac{1}{n}\sum_{i=1}^n\Big[( z_i^\top{\bv})^4\Big]\leq3+1=4;\quad 
\max\limits_{\bv\in\bbS^{d-1}}\frac{1}{n}\sum_{i=1}^n\Big[( z_i^\top{\bv})^4\Big]\leq2\max\limits_{\bv\in\cV}\frac{1}{n}\sum_{i=1}^n\Big[( z_i^\top{\bv})^4\Big],
\]
which means
\begin{align*}
    \max\limits_{\bv\in\bbS^{d-1}}\frac{1}{n}\sum_{i=1}^n\Big[( z_i^\top{\bv})^4\Big]\leq2\cdot 4=8.
\end{align*}

\end{proof}

\begin{lemma}\label{lemma: strong: d direction}
Let $\bx_1,\cdots,\bx_n\overset{\rm i.i.d.}{\sim}\cN( 0,\bI_d)$. For any $\epsilon,\delta\in(0,1)$, if we choose
\begin{align*}   n\gtrsim\max\bbracket{\sbracket{d^2\log^2\sbracket{1/{\epsilon}}+\log^2(1/\delta)}/\epsilon, \sbracket{d\log\sbracket{1/\epsilon}+\log(1/\delta)}/\epsilon^2},
\end{align*}
then \wp at least $1-\delta$, we have:
\begin{align*}
    \sup\limits_{ w,\bv\in\bbS^{d-1}}\left|\frac{1}{n}\sum_{i=1}^n( w^\top\bx_i)^2(\bv^\top\bx_i)^2-\bbE\Big[( w^\top\bx_1)^2(\bv^\top\bx_1)^2\Big]\right|\leq\epsilon.
\end{align*}
\end{lemma}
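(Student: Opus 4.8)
The plan is to run a covering–number argument over the product sphere $\bbS^{d-1}\times\bbS^{d-1}$, mirroring the proof of Lemma~\ref{lemma: standard Gaussian upper bound} but now with the random variable $Z(w,v;x):=(w^\top x)^2(v^\top x)^2$ and with the discretization error controlled simultaneously in \emph{both} arguments. We may assume $\epsilon\le 1/2$ (otherwise apply the statement with $\epsilon=1/2$). Fix $\rho\in(0,1/2)$, to be taken of order $\epsilon$, and let $\mathcal{V}\subset\bbS^{d-1}$ be a $\rho$-net with $|\mathcal{V}|\le(2/\rho+1)^d$; then $\mathcal{V}\times\mathcal{V}$ is a net of $\bbS^{d-1}\times\bbS^{d-1}$ of cardinality at most $(2/\rho+1)^{2d}$.

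\textbf{Step I: concentration on the net.} For fixed unit $w,v$, the variable $Z(w,v;x)$ is sub-Weibull with parameter $1/2$: using $|ab|\le\tfrac12(a^2+b^2)$ and convexity of $\exp$, $\mathbb{E}\exp\bigl(|w^\top x|\,|v^\top x|/\sqrt t\bigr)\le\tfrac12\mathbb{E}\exp\bigl((w^\top x)^2/\sqrt t\bigr)+\tfrac12\mathbb{E}\exp\bigl((v^\top x)^2/\sqrt t\bigr)$, which is at most $2$ once $\sqrt t$ exceeds an absolute constant; hence $\|Z(w,v;x)\|_{\psi_{1/2}}\le C$ for an absolute $C$. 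Moreover $\mathbb{E}[Z(w,v;x)]=1+2\langle w,v\rangle^2\in[1,3]$. Applying the sub-Weibull concentration bound (Lemma~\ref{lemma: concentration SW} with $\beta=1/2$) to $\tfrac1n\sum_i Z(w,v;x_i)$ for each net pair and taking a union bound over the $(2/\rho+1)^{2d}$ pairs, we obtain that with probability at least $1-\delta/2$,
\[
\max_{w_0,v_0\in\mathcal{V}}\Bigl|\tfrac1n\textstyle\sum_{i=1}^n Z(w_0,v_0;x_i)-\mathbb{E}Z(w_0,v_0;x)\Bigr|\;\lesssim\;\sqrt{\tfrac{u}{n}}+\tfrac{u^2}{n},\qquad u:=d\log(2/\rho+1)+\log(1/\delta).
\]

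\textbf{Step II: net approximation.} For arbitrary unit $w,v$ with net neighbours $w_0,v_0$, telescoping gives
\[
Z(w,v;x)-Z(w_0,v_0;x)=\bigl[(w^\top x)^2-(w_0^\top x)^2\bigr](v^\top x)^2+(w_0^\top x)^2\bigl[(v^\top x)^2-(v_0^\top x)^2\bigr],
\]
and each bracket factors as $\bigl((w-w_0)^\top x\bigr)\bigl((w+w_0)^\top x\bigr)$. Summing over $i$ and applying Cauchy--Schwarz repeatedly, exactly as in Step~II of the proof of Lemma~\ref{lemma: standard Gaussian upper bound}, bounds the averaged difference by a constant multiple of $\rho\,M$, where $M:=\sup_{v\in\bbS^{d-1}}\tfrac1n\sum_{i=1}^n(v^\top x_i)^4$. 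By Lemma~\ref{lemma: standard Gaussian upper bound}, whenever $n\gtrsim d^2+\log^2(1/\delta)$ we have $M\le8$ on an event of probability at least $1-\delta/2$; on the intersection of the two events, together with $|\mathbb{E}Z(w,v;x)-\mathbb{E}Z(w_0,v_0;x)|\lesssim\rho$,
\[
\sup_{w,v\in\bbS^{d-1}}\Bigl|\tfrac1n\textstyle\sum_{i=1}^n Z(w,v;x_i)-\mathbb{E}Z(w,v;x)\Bigr|\;\lesssim\;\rho+\sqrt{\tfrac{u}{n}}+\tfrac{u^2}{n}.
\]

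\textbf{Step III: choosing parameters, and the main obstacle.} Taking $\rho\sim\epsilon$ makes $u\sim d\log(1/\epsilon)+\log(1/\delta)$, and the conditions $\rho\lesssim\epsilon$, $\sqrt{u/n}\lesssim\epsilon$, $u^2/n\lesssim\epsilon$ become $n\gtrsim(d\log(1/\epsilon)+\log(1/\delta))/\epsilon^2$ and $n\gtrsim(d^2\log^2(1/\epsilon)+\log^2(1/\delta))/\epsilon$ (the cross term $d\log(1/\epsilon)\log(1/\delta)/\epsilon$ being dominated by the sum of the two squares), which is precisely the hypothesis; it also subsumes $n\gtrsim d^2+\log^2(1/\delta)$ once $\epsilon\le1/2$. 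Combining the two events yields the claim with total failure probability at most $\delta$. The delicate part is Step~II: one must obtain a discretization error that is honestly $O(\rho)$ rather than $O(\rho\cdot\mathrm{poly}(d))$, and this is exactly what the uniform fourth–moment control $M\le8$ of Lemma~\ref{lemma: standard Gaussian upper bound} buys — it lets every Cauchy--Schwarz step be closed with an absolute constant — while also being the origin of the extra $d^2$ in the sample–size requirement. The remaining bookkeeping (net cardinalities, the sub-Weibull tail) is routine.
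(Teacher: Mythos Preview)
Your proposal is correct and follows essentially the same approach as the paper: a $\rho$-net over $\bbS^{d-1}\times\bbS^{d-1}$, sub-Weibull concentration (Lemma~\ref{lemma: concentration SW} with $\beta=1/2$) on net points, and a discretization error controlled via the uniform fourth-moment bound of Lemma~\ref{lemma: standard Gaussian upper bound}. The only differences are organizational---the paper separates the population and empirical net-approximation errors into two steps while you merge them, and the paper folds the net cardinality into the failure probability rather than into the deviation level $u$---but these are equivalent bookkeeping choices.
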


\begin{proof}
For $\bbS^{d-1}$, its covering number has the bound:
\[
\left(\frac{1}{\rho}\right)^d\leq\cN(\bbS^{d-1},\rho)\leq\left(\frac{2}{\rho}+1\right)^d,
\]

so there exist two $\rho$-nets on $\bbS^{d-1}$: $\cW\subset\bbS^{d-1}$ and $\cV\subset\bbS^{d-1}$, s.t.
\[
|\cW|\leq\left(\frac{2}{\rho}+1\right)^d,\quad
|\cV|\leq\left(\frac{2}{\rho}+1\right)^d.
\]

\paragraph{Step I: Bounding the term on the $\rho$-net.}

In this step, will estimate the term 
\[
\left|\frac{1}{n}\sum_{i=1}^n( w^\top\bx_i)^2(\bv^\top\bx_i)^2-\bbE\Big[( w^\top\bx)^2(\bv^\top\bx)^2\Big]\right| 
\]
for any $ w\in\cW$ and $\bv\in\cV$.

For fixed $ w\in\cW$ and $\bv\in\cV$, we denote $X_i^{ w,\bv}:=( w^\top\bx_i)^2(\bv^\top\bx_i)^2$. We can verify $X_i$ is a sub-Weibull random variable with $\beta=1/2$ (Definition \ref{def: sub-Weibull}):
\begin{align*}
    &\bbE\Bigg[\exp\Big(\left|( w^\top\bx_i)^2(\bv^\top\bx_i)\right|^{1/2}\Big)\Bigg]=
    \bbE\Bigg[\exp\Big(| w^\top\bx_i||\bv^\top\bx_i|\Big)\Bigg]
    \\\leq&
    \bbE\Bigg[\exp\Bigg(\frac{( w^\top\bx_i)^2+(\bv^\top\bx_i)^2}{2}\Bigg)\Bigg]
    =\bbE\Bigg[\exp\Big(\frac{( w^\top\bx_i)^2}{2}\Big)\exp\Big(\frac{(\bv^\top\bx_i)^2}{2}\Big)\Bigg]
    \\\overset{\text{Lemma \ref{lemma: Cauchy-Schwarz}}}{\leq}&\sqrt{\bbE\Bigg[\exp\Big(( w^\top\bx_i)^2}\Big)\cdot\sqrt{\bbE\Bigg[\exp\Big((\bv^\top\bx_i)^2\Big)\Bigg]}
    \overset{\left\|(\bv^\top\bx_i)^2\right\|_{\psi_1}\leq C_3}{\lesssim}1,
\end{align*}
which means that there exists an absolute constant $C_4\geq1$, s.t. $\left\|X_i^{ w,\bv}\right\|_{\psi_{1/2}}\leq C_4$. By the concentration inequality for Sub-Weibull distribution with $\beta=1/2$ (Lemma \ref{lemma: concentration SW}), there exists an absolute constant $C_5\geq1$, s.t.
\begin{align*}
\mathbb{P}\left(\left|\frac{1}{n}\sum_{i=1}^n X_i^{ w,\bv}-\frac{1}{n}\sum_{i=1}^n\bbE\big[X_i^{ w,\bv}\big]\right|>\psi(n;\delta)\right)\leq\delta.
\end{align*}
where $\psi(n;\delta)=C_5\sbracket{\sqrt{\frac{\log(1/\delta)}{n}}+\frac{(\log(1/\delta))^2}{n}}$.

Applying an union bound over $ w\in\cW$ and $\bv\in\cV$, we have:
\begin{align*}
    &\mathbb{P}\left(\exists  w\in\cW,\bv\in\cV,{\rm s.t.}\left|\frac{1}{n}\sum_{i=1}^n X_i^{ w,\bv}-\frac{1}{n}\sum_{i=1}^n\bbE\big[X_i^{ w,\bv}\big]\right|>\psi(n;\delta)\right)
    \\\leq&\mathbb{P}\left(\bigcup\limits_{( w,\bv)\in\cW\times\cV}\left\{\exists  w\in\cW,\bv\in\cV,{\rm s.t.}\left|\frac{1}{n}\sum_{i=1}^n X_i^{ w,\bv}-\frac{1}{n}\sum_{i=1}^n\bbE\big[X_i^{ w,\bv}\big]\right|>\psi(n;\delta)\right\}\right)
    \\\leq&
    \sum\limits_{( w,\bv)\in\cW\times\cV}\mathbb{P}\left(\exists  w\in\cW,\bv\in\cV,{\rm s.t.}\left|\frac{1}{n}\sum_{i=1}^n X_i^{ w,\bv}-\frac{1}{n}\sum_{i=1}^n\bbE\big[X_i^{ w,\bv}\big]\right|>\psi(n;\delta)\right)
    \\\leq&
    2|\cW||\cV|\delta
    \leq2\left(\frac{2}{\rho}+1\right)^{2d}\delta.
\end{align*}

So \wp at least  $1-2\left(\frac{2}{\rho}+1\right)^{2d}\delta$, we have:
\[
\sup\limits_{ w\in\cW,\bv\in\cV}\left|\frac{1}{n}\sum_{i=1}^n( w^\top\bx_i)^2(\bv^\top\bx_i)^2-\bbE\Big[( w^\top\bx)^2(\bv^\top\bx)^2\Big]\right|\leq\psi(n;\delta).
\]

\paragraph{Step II: Estimate the population error of the $\rho$-net approximation.}

Let $ w,\bv, w_0,\bv_0\in\bbS^{d-1}$, s.t. $\left\| w- w_0\right\|\leq\rho$ and $\left\|\bv-\bv_0\right\|\leq\rho$. For the population error, we have
\begin{align*}
    &\left|\bbE\Big[( w^\top\bx)^2(\bv^\top\bx)^2\Big]-\bbE\Big[( w_0^\top\bx)^2(\bv_0^\top\bx)^2\Big]\right|
    \\=&
    \left|\bbE\Big[\Big(( w^\top\bx)^2-( w_0^\top\bx)^2\Big)(\bv^\top\bx)^2\Big]+\bbE\Big[( w_0^\top\bx)^2\Big((\bv^\top\bx)^2-(\bv_0^\top\bx)^2\Big)\Big]\right|
    \\\leq&
    \left|\bbE\Big[\Big(( w^\top\bx)^2-( w_0^\top\bx)^2\Big)(\bv^\top\bx)^2\Big]\right|+\left|\bbE\Big[( w_0^\top\bx)^2\Big((\bv^\top\bx)^2-(\bv_0^\top\bx)^2\Big)\Big]\right|
\end{align*}

We first bound $\left|\bbE\Big[\Big(( w^\top\bx)^2-( w_0^\top\bx)^2\Big)(\bv^\top\bx)^2\Big]\right|$:
\begin{align*}
    &\left|\bbE\Big[\Big(( w^\top\bx)^2-( w_0^\top\bx)^2\Big)(\bv^\top\bx)^2\Big]\right|=\left|\bbE\Big[\Big(( w- w_0)^\top\bx\bx^\top( w+ w_0)(\bv^\top\bx)^2\Big]\right|
    \\\leq&
    \sbracket{\bbE\mbracket{\sbracket{( w- w_0)^\top\bx\bx^\top( w+ w_0)}^2}}^{1/2}\sbracket{\bbE\mbracket{(\bv^\top\bx)^4}}^{1/2}
    \\\leq&
    \sbracket{\bbE\mbracket{\sbracket{( w- w_0)^\top\bx}^4}}^{1/4}\sbracket{\bbE\mbracket{\sbracket{( w+ w_0)^\top\bx}^4}}^{1/4}\sbracket{\bbE\mbracket{(\bv^\top\bx)^4}}^{1/2}
    \\\leq&3\norm{( w- w_0)}\norm{( w+ w_0)}\norm{\bv}^2\leq6\rho.
\end{align*}

Repeating the proof above, we also have:
\begin{align*}
\left|\bbE\Big[\Big(( w^\top\bx)^2-( w_0^\top\bx)^2\Big)(\bv^\top\bx)^2\Big]\right|
    \leq6\rho.
\end{align*}
Combining these two inequalities, we have:
\begin{align*}
    \left|\bbE\Big[( w^\top\bx)^2(\bv^\top\bx)^2\Big]-\bbE\Big[( w_0^\top\bx)^2(\bv_0^\top\bx)^2\Big]\right|\leq
    6\rho+6\rho =12\rho.
\end{align*}
Due to the arbitrariness of $ w,\bv, w_0,\bv_0$, we obtain
\begin{align*}
    \sup_{\substack{ w,\bv, w_0,\bv_0\in\bbS^{d-1}\\\left\| w- w_0\right\|\leq\rho, \left\|\bv-\bv_0\right\|\leq\rho}}\left|\bbE\Big[( w^\top\bx)^2(\bv^\top\bx)^2\Big]-\bbE\Big[( w_0^\top\bx)^2(\bv_0^\top\bx)^2\Big]\right|\leq12\rho.
\end{align*}

\paragraph{Step III: Estimate the empirical error of the $\rho$-net approximation.}

Let $ w,\bv, w_0,\bv_0\in\bbS^{d-1}$, s.t. $\left\| w- w_0\right\|\leq\rho$ and $\left\|\bv-\bv_0\right\|\leq\rho$. For the empirical error, we have
\begin{align*}
    &\left|\frac{1}{n}\sum_{i=1}^n( w^\top\bx_i)^2(\bv^\top\bx_i)^2-\frac{1}{n}\sum_{i=1}^n( w_0^\top\bx_i)^2(\bv_0^\top\bx_i)^2\right|
    \\=&
    \left|\frac{1}{n}\sum_{i=1}^n\Big[\Big(( w^\top\bx_i)^2-( w_0^\top\bx_i)^2\Big)(\bv^\top\bx_i)^2\Big]+\frac{1}{n}\sum_{i=1}^n\Big[( w_0^\top\bx_i)^2\Big((\bv^\top\bx_i)^2-(\bv_0^\top\bx_i)^2\Big)\Big]\right|
    \\\leq&
    \left|\frac{1}{n}\sum_{i=1}^n\Big[\Big(( w^\top\bx_i)^2-( w_0^\top\bx_i)^2\Big)(\bv^\top\bx_i)^2\Big]\right|+\left|\frac{1}{n}\sum_{i=1}^n\Big[( w_0^\top\bx_i)^2\Big((\bv^\top\bx_i)^2-(\bv_0^\top\bx_i)^2\Big)\Big]\right|
\end{align*}
We first bound $\left|\frac{1}{n}\sum_{i=1}^n\Big[\Big(( w^\top\bx_i)^2-( w_0^\top\bx_i)^2\Big)(\bv^\top\bx_i)^2\Big]\right|$:
\begin{align*}
    \left|\frac{1}{n}\sum_{i=1}^n\Big[\Big(( w^\top\bx_i)^2-( w_0^\top\bx_i)^2\Big)(\bv^\top\bx_i)^2\Big]\right|
    =&\left|\frac{1}{n}\sum_{i=1}^n\Big[\Big(( w- w_0)^\top\bx_i\bx_i^\top( w+ w_0)(\bv^\top\bx_i)^2\Big]\right|
    \\\leq&2\rho\sup\limits_{ u\in\bbS^{d-1}}\frac{1}{n}\sum_{i=1}^n(\bx_i^\top{ u})^4.
\end{align*}

Repeating the proof above, we also have $\left|\frac{1}{n}\sum_{i=1}^n\Big[( w_0^\top\bx_i)^2\Big((\bv^\top\bx_i)^2-(\bv_0^\top\bx_i)^2\Big)\Big]\right|\leq2\rho\sup\limits_{ u\in\bbS^{d-1}}\frac{1}{n}\sum_{i=1}^n(\bx_i^\top{ u})^4$. Combining these two bounds, we have:
\begin{align*}
    &\left|\frac{1}{n}\sum_{i=1}^n( w^\top\bx_i)^2(\bv^\top\bx_i)^2-\frac{1}{n}\sum_{i=1}^n( w_0^\top\bx_i)^2(\bv_0^\top\bx_i)^2\right|
    \leq4\rho\sup\limits_{ u\in\bbS^{d-1}}\frac{1}{n}\sum_{i=1}^n(\bx_i^\top{ u})^4.
\end{align*}

Using Lemma \ref{lemma: standard Gaussian upper bound}, if $n\gtrsim d^2+\log^2(1/\delta')$, then \wp at least $1-\delta'/2$, we have
$\sup\limits_{ u\in\bbS^{d-1}}\frac{1}{n}\sum_{i=1}^n(\bx_i^\top{ u})^4\leq8$. 

Hence, \wp at least $1-\delta'/2$, we have
\begin{align*}
    \left|\frac{1}{n}\sum_{i=1}^n( w^\top\bx_i)^2(\bv^\top\bx_i)^2-\frac{1}{n}\sum_{i=1}^n( w_0^\top\bx_i)^2(\bv_0^\top\bx_i)^2\right|
    \leq32\rho.
\end{align*}
Due to the arbitrariness of $ w,\bv, w_0,\bv_0$, we obtain
\begin{align*}
    \sup_{\substack{ w,\bv, w_0,\bv_0\in\bbS^{d-1}\\\left\| w- w_0\right\|\leq\rho, \left\|\bv-\bv_0\right\|\leq\rho}}\left|\frac{1}{n}\sum_{i=1}^n( w^\top\bx_i)^2(\bv^\top\bx_i)^2-\frac{1}{n}\sum_{i=1}^n( w_0^\top\bx_i)^2(\bv_0^\top\bx_i)^2\right|\leq 32\rho.
\end{align*}

\paragraph{Step IV: The bound for any $ w,\bv\in\bbS^{d-1}$.}

Combining the results in Step I, II, and II, we know that \wp at least 
$1-\frac{\delta'}{2}-(\frac{2}{\rho}+1)^d$, we have
\begin{gather*}
\sup\limits_{ w\in\cW,\bv\in\cV}\left|\frac{1}{n}\sum_{i=1}^n( w^\top\bx_i)^2(\bv^\top\bx_i)^2-\bbE\Big[( w^\top\bx)^2(\bv^\top\bx)^2\Big]\right|\leq\psi(n;\delta),
\\
 \sup_{\substack{ w,\bv, w_0,\bv_0\in\bbS^{d-1}\\\left\| w- w_0\right\|\leq\rho, \left\|\bv-\bv_0\right\|\leq\rho}}\left|\bbE\Big[( w^\top\bx)^2(\bv^\top\bx)^2\Big]-\bbE\Big[( w_0^\top\bx)^2(\bv_0^\top\bx)^2\Big]\right|\leq12\rho,
\\
 \sup_{\substack{ w,\bv, w_0,\bv_0\in\bbS^{d-1}\\\left\| w- w_0\right\|\leq\rho, \left\|\bv-\bv_0\right\|\leq\rho}}\left|\frac{1}{n}\sum_{i=1}^n( w^\top\bx_i)^2(\bv^\top\bx_i)^2-\frac{1}{n}\sum_{i=1}^n( w_0^\top\bx_i)^2(\bv_0^\top\bx_i)^2\right|\leq 32\rho.
\end{gather*}
Then for any $ w,\bv\in\bbS^{d-1}$, there exists $ w_0\in\cW,\bv_0\in\cV$ s.t. $\left\| w- w_0\right\|\leq\rho$ and $\left\|\bv-\bv_0\right\|\leq\rho$, so
\begin{align*}
&\left|\frac{1}{n}\sum_{i=1}^n( w^\top\bx_i)^2(\bv^\top\bx_i)^2-\bbE\Big[( w^\top\bx)^2(\bv^\top\bx)^2\Big]\right|
\\=&
\Bigg|\frac{1}{n}\sum_{i=1}^n( w^\top\bx_i)^2(\bv^\top\bx_i)^2-\frac{1}{n}\sum_{i=1}^n( w_0^\top\bx_i)^2(\bv_0^\top\bx_i)^2
+\frac{1}{n}\sum_{i=1}^n( w_0^\top\bx_i)^2(\bv_0^\top\bx_i)^2
\\&-\bbE\Big[( w_0^\top\bx)^2(\bv_0^\top\bx)^2\Big]+\bbE\Big[( w_0^\top\bx)^2(\bv_0^\top\bx)^2\Big]-\bbE\Big[( w^\top\bx)^2(\bv^\top\bx)^2\Big]\Bigg|
\\\leq&\Bigg|\frac{1}{n}\sum_{i=1}^n( w^\top\bx_i)^2(\bv^\top\bx_i)^2-\frac{1}{n}\sum_{i=1}^n( w_0^\top\bx_i)^2(\bv_0^\top\bx_i)^2\Bigg|
\\&+\Bigg|\frac{1}{n}\sum_{i=1}^n( w_0^\top\bx_i)^2(\bv_0^\top\bx_i)^2-\bbE\Big[( w_0^\top\bx)^2(\bv_0^\top\bx)^2\Big]\Bigg|+\Bigg|\bbE\Big[( w_0^\top\bx)^2(\bv_0^\top\bx)^2\Big]-\bbE\Big[( w^\top\bx)^2(\bv^\top\bx)^2\Big]\Bigg|
\\\leq&\sup_{\substack{ w,\bv, w_0,\bv_0\in\bbS^{d-1}\\\left\| w- w_0\right\|\leq\rho, \left\|\bv-\bv_0\right\|\leq\rho}}\left|\frac{1}{n}\sum_{i=1}^n( w^\top\bx_i)^2(\bv^\top\bx_i)^2-\frac{1}{n}\sum_{i=1}^n( w_0^\top\bx_i)^2(\bv_0^\top\bx_i)^2\right|
\\&+
\sup\limits_{ w\in\cW,\bv\in\cV}\left|\frac{1}{n}\sum_{i=1}^n( w^\top\bx_i)^2(\bv^\top\bx_i)^2-\bbE\Big[( w^\top\bx)^2(\bv^\top\bx)^2\Big]\right|
\\
&+\sup_{\substack{ w,\bv, w_0,\bv_0\in\bbS^{d-1}\\\left\| w- w_0\right\|\leq\rho, \left\|\bv-\bv_0\right\|\leq\rho}}\left|\bbE\Big[( w^\top\bx)^2(\bv^\top\bx)^2\Big]-\bbE\Big[( w_0^\top\bx)^2(\bv_0^\top\bx)^2\Big]\right|
\\\leq&
32\rho+\psi(n;\delta)+12\rho
=44\rho+\psi(n;\delta).
\end{align*}
Due to the arbitrariness of $ w,\bv$, we have
\[
\sup_{ w,\bv\in\bbS^{d-1}}\left|\frac{1}{n}\sum_{i=1}^n( w^\top\bx_i)^2(\bv^\top\bx_i)^2-\bbE\Big[( w^\top\bx)^2(\bv^\top\bx)^2\Big]\right|\leq
44\rho+\psi(n;\delta)
\]

Select $\rho=\frac{\epsilon}{66}$ and $\delta'/2=2(1+\frac{2}{\rho})^{2d}\delta$. And we choose
\begin{align*}
  n\gtrsim\max\bbracket{\sbracket{d^2\log^2\sbracket{1/{\epsilon}}+\log^2(1/\delta)}/\epsilon,\sbracket{d\log\sbracket{1/{\epsilon}}+\log(1/\delta)}/\epsilon^2},
\end{align*}
which satisfies $\psi(n;\delta)\leq\epsilon/3$.

Then \wp at least $1-\delta'/2-\delta'/2=1-\delta'$, we have
\[
\sup_{ w,\bv\in\bbS^{d-1}}\left|\frac{1}{n}\sum_{i=1}^n( w^\top\bx_i)^2(\bv^\top\bx_i)^2-\bbE\Big[( w^\top\bx)^2(\bv^\top\bx)^2\Big]\right|
\leq\frac{44}{66}\epsilon+\frac{1}{3}\epsilon=\epsilon.
\]

\end{proof}

With the preparation of Lemma~\ref{lemma: OLM equation Gaussian},~\ref{lemma: quadratic covarience concentrate}, and~\ref{lemma: strong: d direction}, now we give the proof of Theorem~\ref{thm: strong: OLM}.

\paragraph*{Proof of Theorem \ref{thm: strong: OLM}.}
Let $ z_i= S^{-1/2}\bx_i$. Then $ z_1,\cdots, z_n\overset{{\rm i.i.d.}}{\sim}\cN( 0,I_d).$
\begin{align*}
g( \theta;\bv)
=&
\frac{\frac{1}{n}\sum\limits_{i=1}^n\Big( r( \theta)^\top\bx_i\Big)^2\Big(\big(\nabla F( \theta)\bv\big)^\top\bx_i\Big)^2}{\frac{1}{n}\sum\limits_{i=1}^n\Big( r( \theta)^\top\bx_i\Big)^2\cdot\frac{1}{n}\sum\limits_{i=1}^n\Big(\big(\nabla F( \theta)\bv\big)^\top\bx_i\Big)^2}
\\=&\frac{\frac{1}{n}\sum\limits_{i=1}^n\Big(( S^{1/2} r( \theta))^\top z_i\Big)^2\Big(\big( S^{1/2}\nabla F( \theta)\bv\big)^\top z_i\Big)^2}{\frac{1}{n}\sum\limits_{i=1}^n\Big(( S^{1/2} r( \theta))^\top z_i\Big)^2\cdot\frac{1}{n}\sum\limits_{i=1}^n\Big(\big( S^{1/2}\nabla F( \theta)\bv\big)^\top z_i\Big)^2},
\end{align*}

Case (i). If $ S^{1/2} r( \theta)= 0$ or $ S^{1/2}\nabla F( \theta)\bv=0$, we have $g( \theta;\bv)=\frac{0}{0}=1$, this theorem holds.

Case (ii). If $ S^{1/2} r( \theta)\ne 0$ and $ S^{1/2}\nabla F( \theta)\bv\ne0$, we define the following normalized vectors:
\[
\tilde{ r}( \theta):=\frac{ S^{1/2} r( \theta)}{\left\| S^{1/2} r( \theta)\right\|}\in\bbS^{d-1}\quad \tilde{ w}( \theta;\bv):=\frac{ S^{1/2}\nabla F( \theta)\bv}{\left\| S^{1/2}\nabla F( \theta)\bv\right\|}\in\bbS^{d-1}.
\]
From the homogeneity of $g( \theta;\bv)$, we have:
\[
g( \theta;\bv)=\frac{\frac{1}{n}\sum\limits_{i=1}^n\Big(\tilde{ r}( \theta)^\top z_i\Big)^2\Big(\tilde{ w}( \theta;\bv)^\top z_i\Big)^2}{\frac{1}{n}\sum\limits_{i=1}^n\Big(\tilde{ r}( \theta)^\top z_i\Big)^2\cdot\frac{1}{n}\sum\limits_{i=1}^n\Big(\tilde{ w}( \theta;\bv)^\top z_i\Big)^2}.
\]

By Lemma~\ref{lemma: quadratic covarience concentrate} and~\ref{lemma: strong: d direction}, for any $\epsilon,\delta\in(0,1)$, if we choose
\begin{align*}   n\gtrsim\max\bbracket{\sbracket{d^2\log^2\sbracket{1/{\epsilon}}+\log^2(1/\delta)}/\epsilon, \sbracket{d\log\sbracket{1/\epsilon}+\log(1/\delta)}/\epsilon^2},
\end{align*}
then \wp at least $1-\delta$, the following inequalities hold:
\begin{gather*}
\sup\limits_{\bv\in\bbS^{d-1}}\left|\frac{1}{n}\sum_{i=1}^n(\bv^\top z_i)^2-1\right|\leq\epsilon,
\\
\sup\limits_{ w,\bv\in\bbS^{d-1}}\left|\frac{1}{n}\sum_{i=1}^n( w^\top z_i)^2(\bv^\top z_i)^2-\bbE\Big[( w^\top z_1)^2(\bv^\top z_1)^2\Big]\right|\leq\epsilon;
\end{gather*}
These imply that for any $ \theta,\bv\in\bbR^p$, we have:
\begin{equation}\label{equ of proof: thm: strong: arb dir:: emp-pop}
\frac{\bbE\Big[(\tilde{ r}( \theta)^\top y)^2(\tilde{ w}( \theta;\bv)^\top y)^2\Big]-\epsilon}{(1+\epsilon)^2}\leq g( \theta;\bv)\leq\frac{\bbE\Big[(\tilde{ r}( \theta)^\top z_1)^2(\tilde{ w}( \theta;\bv)^\top z_1)^2\Big]+\epsilon}{(1-\epsilon)^2}.
\end{equation}

First, we derive the upper bound for~\eqref{equ of proof: thm: strong: arb dir:: emp-pop}:
\begin{align*}
    {\rm RHS}=&\frac{\epsilon}{(1-\epsilon)^2}+\frac{\bbE\Big[(\tilde{ r}( \theta)^\top y)^2(\tilde{ w}( \theta;\bv)^\top y)^2\Big]}{(1-\epsilon)^2\Big(\tilde{ r}( \theta)^\top\tilde{ r}( \theta)\Big)\Big(\tilde{ w}( \theta;\bv)^\top\tilde{ w}( \theta;\bv)\Big)}
    \\\overset{\text{Homogeneity}}{=}&
    \frac{\epsilon}{(1-\epsilon)^2}+\frac{\bbE\Big[(( S^{1/2} r( \theta))^\top y)^2(( S^{1/2}\nabla F( \theta)\bv)^\top y)^2\Big]}{(1-\epsilon)^2\Big(\big( S^{1/2} r( \theta)\big)^\top S^{1/2} r( \theta)\Big)\Big(\big( S^{1/2}\nabla F( \theta)\bv\big)^\top\big( S^{1/2}\nabla F( \theta)\bv\big)\Big)}
    \\=&\frac{\epsilon}{(1-\epsilon)^2}+\frac{\bbE\left[(r(\theta)^\top x)^2\left(\bv^\top\nabla F(\theta)^\top xx^\top \nabla F(\theta)\bv\right)\right]}{(1-\epsilon)^2\left(r(\theta)^\top S r(\theta)\right)\left(\bv^\top\nabla F( \theta)^\top S\nabla F(\theta)\bv\right)}
    \\\overset{\eqref{equ: proof: strong align: def population}}{=}&\frac{\epsilon}{(1-\epsilon)^2}+\frac{\bv^\top\bar{\Sigma}_1(\theta)\bv}{2(1-\epsilon)^2\bar{\cL}(\theta)\bv^\top \bar{G}(\theta)\bv}
    \overset{\text{Lemma \ref{lemma: OLM equation Gaussian}}}{=}\frac{\epsilon}{(1-\epsilon)^2}+\frac{2\bar{\cL}(\theta)\bv^\top \bar{G}(\theta)\bv+\big(\nabla\bar{\cL}( \theta)^\top\bv\big)^2}{2(1-\epsilon)^2\bar{\cL}( \theta)\bv^\top\bar{G}(\theta)\bv}
    \\=&\frac{1+\epsilon}{(1-\epsilon)^2}+\frac{\big(\nabla\bar{\cL}( \theta)^\top\bv\big)^2}{2(1-\epsilon)^2\bar{\cL}( \theta)\bv^\top\bar{G}( \theta)\bv}\overset{\text{Lemma \ref{lemma: OLM nabla-L-v bound}}}{\leq}\frac{1+\epsilon}{(1-\epsilon)^2}+\frac{1}{(1-\epsilon)^2}=\frac{2+\epsilon}{(1-\epsilon)^2}.
\end{align*}

Moreover, if $\<\bv,\cL( \theta)\>=0$, then the bound is
\begin{align*}
    {\rm RHS}\leq\frac{1+\epsilon}{(1-\epsilon)^2}.
\end{align*}

In the similar way, we can derive the lower bound for~\eqref{equ of proof: thm: strong: arb dir:: emp-pop}:
\begin{align*}
    {\rm LHS}=&\frac{\bv^\top\bar{\Sigma}_1(\theta)\bv}{2(1+\epsilon)^2\bar{\cL}(\theta)\bv^\top \bar{G}( \theta)\bv}-\frac{\epsilon}{(1+\epsilon)^2}
    \overset{\text{Lemma \ref{lemma: OLM equation Gaussian}}}{=}\frac{2\bar{\cL}(\theta)\bv^\top \bar{G}(\theta)\bv+\big(\nabla\bar{\cL}( \theta)^\top\bv\big)^2}{2(1+\epsilon)^2\bar{\cL}( \theta)\bv^\top \bar{G}(\theta)\bv}-\frac{\epsilon}{(1+\epsilon)^2}
    \\\geq&\frac{1}{(1+\epsilon)^2}-\frac{\epsilon}{(1+\epsilon)^2}=\frac{1-\epsilon}{(1+\epsilon)^2}.
\end{align*}
So for any $ S^{1/2} u( \theta)\ne 0, S^{1/2}\nabla F( \theta)\bv\ne0$, we have
\[
\frac{1-\epsilon}{(1+\epsilon)^2}\leq g( \theta;\bv)\leq\frac{2+\epsilon}{(1-\epsilon)^2}.
\]

Hence, we have proved this theorem:
For any $\epsilon,\delta>0$, if 
\[
n\gtrsim\max\bbracket{\sbracket{d^2\log^2\sbracket{1/{\epsilon}}+\log^2(1/\delta)}/\epsilon,   \sbracket{d\log\sbracket{1/{\epsilon}}+\log(1/\delta)}/\epsilon^2}, 
\]
then \wp at least $1-\delta$, the strong alignment holds uniformly:
\begin{align*}
    \ \frac{1-\epsilon}{(1+\epsilon)^2}\leq\inf_{ \theta,\bv\in\bbR^p}g( \theta;\bv)\leq\sup_{ \theta,\bv\in\bbR^p}g( \theta;\bv)\leq\frac{2+\epsilon}{(1-\epsilon)^2},
\end{align*}
\qed

\subsection{Proof of Theorem~\ref{thm: strong: LM}}

With the preparation of 
Lemma~\ref{lemma: quadratic covarience concentrate} and Lemma~\ref{lemma: strong: 1-fix 1-arb bound}, now we give the proof of Theorem~\ref{thm: strong: LM}.

We follow the proof of Theorem~\ref{thm: strong: OLM}. 
Notice that for linear model,
\begin{align*}
    \tilde{w}(\theta,v)=\frac{ S^{1/2}\nabla F( \theta)\bv}{\left\| S^{1/2}\nabla F( \theta)\bv\right\|}=\frac{ S^{1/2}\bv}{\left\|S^{1/2}\bv\right\|},
\end{align*}
independent of $\theta$. Thus, we denote it as $\tilde{w}(v):=\frac{ S^{1/2}\bv}{\left\|S^{1/2}\bv\right\|}$. Then,
\begin{align*}
    g(\theta;\bv)=\frac{\frac{1}{n}\sum\limits_{i=1}^n\Big(\tilde{r}(\theta)^\top z_i\Big)^2\Big(\tilde{ w}(\bv)^\top z_i\Big)^2}{\frac{1}{n}\sum\limits_{i=1}^n\Big(\tilde{ r}( \theta)^\top z_i\Big)^2\cdot\frac{1}{n}\sum\limits_{i=1}^n\Big(\tilde{w}(\bv)^\top z_i\Big)^2}.
\end{align*}

By Lemma~\ref{lemma: quadratic covarience concentrate}, for any $\epsilon\in(0,1)$, select $\delta=\epsilon^d$, if we choose $n\gtrsim\sbracket{d\log\sbracket{1/\epsilon}+\log(1/\delta)}/\epsilon^2\gtrsim d\log\sbracket{1/\epsilon}/\epsilon^2$,
then \wp at least $1-\delta=1-\epsilon^d$, the following inequalities hold:
\begin{gather*}
\sup\limits_{\theta\in\bbS^{d-1}}\left|\frac{1}{n}\sum\limits_{i=1}^n\Big(\tilde{\theta}(\bv)^\top z_i\Big)^2-1\right|\leq\epsilon,
\quad\sup\limits_{\bv\in\bbS^{d-1}}\left|\frac{1}{n}\sum\limits_{i=1}^n\Big(\tilde{w}(\bv)^\top z_i\Big)^2-1\right|\leq\epsilon.
\end{gather*}

Then we consider the estimate about the numerator of $g(\theta,v)$ for $K$ fixed direction $v\in\cV=\{v_1,\cdots,v_K\}$. 
By Lemma~\ref{lemma: strong: 1-fix 1-arb bound}, for any $0<\epsilon<1/2$, there are constants $C_1=C_1(\epsilon)>0$ and $C_2=C_2(\epsilon)>0$, such that if $n\geq C_1 d\log d$, then with probability at least $1-\frac{C_2}{n^2}$, it holds

\begin{align*}
    &\bbP\left(\sup_{v\in\cV}\sup_{\theta\in\bbS^{d-1}}\left|\frac{1}{n}\sum\limits_{i=1}^n\Big(\tilde{r}(\theta)^\top z_i\Big)^2\Big(\tilde{w}(\bv)^\top z_i\Big)^2-\bbE\left[\Big(\tilde{r}(\theta)^\top z_1\Big)^2\Big(\tilde{ w}(\bv)^\top z_1\Big)^2\right]\right|\geq\epsilon\right)
    \\\leq&\sum_{k=1}^K\bbP\left(\sup_{\theta\in\bbS^{d-1}}\left|\frac{1}{n}\sum\limits_{i=1}^n\Big(\tilde{r}(\theta)^\top z_i\Big)^2\Big(\tilde{w}(\bv_k)^\top z_i\Big)^2-\bbE\left[\Big(\tilde{r}(\theta)^\top z_1\Big)^2\Big(\tilde{ w}(\bv_k)^\top z_1\Big)^2\right]\right|\geq\epsilon\right)
    \\\leq&\frac{KC_2}{n^2}.
\end{align*}

Now we combine these two bounds. For any $\epsilon\in(0,1/2)$, there exist $\tilde{C}_1(\epsilon):=\max\{C(\epsilon),\log(1/\epsilon)/\epsilon^2\}>0$ and $C_2=C_2(\epsilon)>0$, such that: if we choose $n\gtrsim \tilde{C}_1 d\log d$, then \wp at least $1-\frac{C_2}{n^2}-\epsilon^d$,
\begin{gather*}
    \sup\limits_{\theta\in\bbS^{d-1}}\left|\frac{1}{n}\sum\limits_{i=1}^n\Big(\tilde{\theta}(\bv)^\top z_i\Big)^2-1\right|\leq\epsilon,
    \quad\sup\limits_{\bv\in\bbS^{d-1}}\left|\frac{1}{n}\sum\limits_{i=1}^n\Big(\tilde{w}(\bv)^\top z_i\Big)^2-1\right|\leq\epsilon,
    \\
    \sup_{v\in\cV}\sup_{\theta\in\bbS^{d-1}}\left|\frac{1}{n}\sum\limits_{i=1}^n\Big(\tilde{r}(\theta)^\top z_i\Big)^2\Big(\tilde{w}(\bv)^\top z_i\Big)^2-\bbE\left[\Big(\tilde{r}(\theta)^\top z_1\Big)^2\Big(\tilde{ w}(\bv)^\top z_1\Big)^2\right]\right|\geq\epsilon.
\end{gather*}

Therefore, for any $\theta\in\bbR^p$ and $v\in\cV$, 
\begin{align*}
    \frac{\bbE\Big[(\tilde{ r}(\theta)^\top z_1)^2(\tilde{w}( \bv)^\top z_1)^2\Big]-\epsilon}{(1+\epsilon)^2}\leq g( \theta;\bv)\leq\frac{\bbE\Big[(\tilde{r}(\theta)^\top z_1)^2(\tilde{ w}(\bv)^\top z_1)^2\Big]+\epsilon}{(1-\epsilon)^2}.
\end{align*}
Then in the same way as the proof of Lemma~\ref{thm: strong: OLM}, it holds that
\begin{align*}
\frac{1-\epsilon}{(1+\epsilon)^2}\leq g( \theta;\bv)\leq\frac{2+\epsilon}{(1-\epsilon)^2}.
\end{align*}

Thus, we complete the proof.

\qed

\section{Proofs in Section \ref{section: escaping}: Escape direction of SGD}
\label{appendix: proof: escaping}

\subsection{Proof of Theorem \ref{thm: escape: SGD}}
Recall that $ w(t)=\sum_{i=1}^d w_i(t) u_i$ with  $w_i(t)= u_i^{\top} w(t)$.
Then, $w_i(t+1)=(1-\eta\lambda_i)w_i(t)+\eta \xi(t)^\top u_i$. Taking the expectation of the  square of both sides, we obtain
\begin{align*}
     \bbE\big[w_i^2(t+1)\big]=(1-\eta\lambda_i)^2\bbE\big[w_i^2(t)\big]+\eta^2\EE[| u_i^{\top} \xi(t)|^2],
\end{align*}
According to Assumption~\ref{assumption: eigen-alignment}, there exists $A_1,A_2>0$ such that for any $i\in[d]$, 
\begin{align*}
    A_1\lambda_i \cL( w_t)\leq \EE[| u_i^T \xi(t)|]\leq A_2\lambda_i \cL( w_t).
\end{align*}

Let $X_t=\sum_{i=1}^k\lambda_i \EE[w_i^2(t)], Y_t = \sum_{i=k+1}^d \lambda_i \EE[w_i^2(t)]$  denote the components of loss energy along sharp and flat directions, respectively. And we denote $D_k(t):=Y_t/X_t$.

Plugging the fact that $2\cL( w(t))=X_t+Y_t$ into the  two formulations above, we can obtain the following component dynamics:
\begin{equation}\label{equ: proof: escape: estimate X Y}
\begin{aligned}
    X_{t+1}&\leq \alpha_k X_t + A_2 \eta^2 (\sum_{i=1}^k\lambda_i^2)(X_t+Y_t),\\
    X_{t+1}&\geq  A_1 \eta^2 (\sum_{i=1}^k\lambda_i^2)(X_t+Y_t),\\ 
    Y_{t+1}&\geq A_1 \eta^2\big(\sum_{i=k+1}^d\lambda_i^2\big) (X_t+Y_t),
\end{aligned}
\end{equation}
where $\alpha_k\leq\max_{i=1,\dots,k} |1-\eta\lambda_i|^2$. The terms $\alpha_k X_t$ and $\beta_k Y_t$ capture the impact of the gradient, while the remaining terms originate from the noise.

From~\eqref{equ: proof: escape: estimate X Y}, we have the following estimate about $D_k(t+1)$:
\begin{equation}\label{equ: proof: escape: general lower bound D_k}
\begin{aligned}
    &D_k(t+1)=\frac{Y_{t+1}}{X_{t+1}}\geq\frac{A_1\eta^2\big(\sum_{i=k+1}^d\lambda_i^2\big) (X_t+Y_t)}{\alpha_k X_t + A_2 \eta^2 (\sum_{i=1}^k\lambda_i^2)(X_t+Y_t)}
    \\=&\frac{A_1\sum_{i=k+1}^d\lambda_i^2}{A_2\sum_{i=1}^k\lambda_i^2}\cdot\frac{1}{1+\frac{\alpha_k}{A_2\eta^2\sum_{i=k+1}^d\lambda_i^2}\frac{X_t}{X_t+Y_t}}
    \\\geq&\frac{A_1\sum_{i=k+1}^d\lambda_i^2}{A_2\sum_{i=1}^k\lambda_i^2}\cdot\frac{1}{1+\frac{\max\limits_{1\leq i\leq k}|1-\eta\lambda_i|^2}{A_2\eta^2\sum_{i=1}^k\lambda_i^2}\frac{X_t}{X_t+Y_t}}.
\end{aligned}
\end{equation}

We will prove this theorem for the learning rate $\eta=\frac{\beta}{\norm{G( \theta^*)}_F}$, where $\beta\geq\frac{1.1}{\sqrt{A_1}}$.

\paragraph{Case (I). Small learning rate $\eta\in[\frac{1.1}{\sqrt{A_1}\norm{G( \theta^*)}_{\rF}},\frac{1}{\lambda_1}]$.}

In this step, we consider $\eta=\frac{\beta}{\norm{G( \theta^*)}_F}$ such that $\beta\geq\frac{1.1}{\sqrt{A_1}}$ and $\eta\leq\frac{1}{\lambda_1}$. Then we have:
\begin{align*}
    \frac{\max\limits_{1\leq i\leq k}|1-\eta\lambda_i|^2}{A_2\eta^2\sum_{i=k+1}^d\lambda_i^2}
    \leq\frac{1}{A_2\eta^2\sum_{i=1}^k\lambda_i^2}.
\end{align*}
Notice that~\eqref{equ: proof: escape: estimate X Y} also ensures:
\begin{align*}
    (X_{t+1}+Y_{t+1})\geq A_1\eta^2\big(\sum_{i=1}^d\lambda_i^2\big)(X_t+Y_t).
\end{align*}
Combining this inequality with~\eqref{equ: proof: escape: estimate X Y}, we have the estimate:
\begin{align*}
    &\frac{X_{t+1}}{X_{t+1}+Y_{t+1}}
    \leq\frac{\alpha_k X_t + A_2 \eta^2 (\sum_{i=1}^k\lambda_i^2)(X_t+Y_t)}{X_{t+1}+Y_{t+1}}
    \\\leq&\frac{\alpha_k X_t}{A_1\eta^2\big(\sum_{i=1}^d\lambda_i^2\big)(X_{t}+Y_{t})}+\frac{A_2(\sum_{i=1}^k\lambda_i^2)}{A_1\big(\sum_{i=1}^d\lambda_i^2\big)}
\end{align*}

For simplicity, we denote $W_t:=\frac{X_t}{X_t+Y_t}$, $A:=\frac{\alpha_k}{A_1\eta^2\big(\sum_{i=1}^d\lambda_i^2\big)}$, and $B:=\frac{A_2(\sum_{i=1}^k\lambda_i^2)}{A_1\big(\sum_{i=1}^d\lambda_i^2\big)}$. 

From $\eta\leq1/3$, we have $\alpha_k\leq1$ and $A\leq\frac{1}{A_1\eta^2\big(\sum_{i=1}^d\lambda_i^2\big)}=\frac{1}{A_1\beta^2}<1$. 
Moreover, it holds that
\begin{align*}
    W_{t+1}\leq& A W_t+B
    \leq A(AW_{t-1}+B)+B=A^2 W_{t-1}+B(1+A)
    \\\leq&\cdots\leq A^{t+1}W_0+B(1+A+\cdots+A^t)=A^{t+1}W_0+\frac{1-A^{t+1}}{1-A}B
\end{align*}
On the one hand, if we choose
\begin{align*}
t\geq\frac{\log\sbracket{1/W_0A_2\eta^2\sum_{i=1}^k\lambda_i^2}}{\log\sbracket{A_1\beta^2}},
\end{align*}
then we have
\begin{align*}
    A^{t}W_0\leq\sbracket{\frac{\alpha_k}{A_1\eta^2(\sum_{i=1}^d\lambda_i^2)}}^{t}W_0{\leq}\sbracket{\frac{1}{A_1\beta^2}}^{t}W_0\leq A_2\eta^2\sum_{i=1}^k\lambda_i^2.
\end{align*}

On the other hand, if we choose $t\geq1$,
then it holds that
\begin{align*}
    \frac{1-A^{t}}{1-A}B\leq B=\frac{A_2(\sum_{i=1}^k\lambda_i^2)}{A_1\big(\sum_{i=1}^d\lambda_i^2\big)}\leq A_2\eta^2\sum_{i=1}^k\lambda_i^2.
\end{align*}

Hence, if we choose
\begin{align*}
t\geq\max\bbracket{1,\frac{\log\sbracket{1/W_0A_2\eta^2\sum_{i=1}^k\lambda_i^2}}{\log\sbracket{A_1\beta^2}}},
\end{align*}
then we have
\begin{align*}
    \frac{X_t}{X_t+Y_t}=W_t\leq A^t W_0+\frac{1-A^{t}}{1-A}B\leq2A_2\eta^2\sum_{i=1}^k\lambda_i^2,
\end{align*}
 
which implies that
\begin{align*}
    &\text{RHS of~\eqref{equ: proof: escape: general lower bound D_k}}\geq\frac{A_1\sum_{i=k+1}^d\lambda_i^2}{A_2\sum_{i=1}^k\lambda_i^2}\cdot\frac{1}{1+\frac{\max\limits_{1\leq i\leq k}|1-\eta\lambda_i|^2}{A_2\eta^2\sum_{i=1}^k\lambda_i^2}\frac{X_t}{X_t+Y_t}}
    \\\geq&
    \frac{A_1\sum_{i=k+1}^d\lambda_i^2}{A_2\sum_{i=1}^k\lambda_i^2}\cdot\frac{1}{1+\frac{1}{A_2\eta^2\sum_{i=1}^k\lambda_i^2}\cdot 2A_2\eta^2\sum_{i=1}^k\lambda_i^2}=\frac{A_1\sum_{i=k+1}^d\lambda_i^2}{3A_2\sum_{i=1}^k\lambda_i^2}.
\end{align*}

\paragraph{Case (II). Large learning rate $\eta\geq1/\lambda_1$.}

In this step, we consider $\eta\geq\frac{1}{\lambda_1}$. Then for any $t\geq0$, we have:

\begin{align*}
    &\text{RHS of~\eqref{equ: proof: escape: general lower bound D_k}}=\frac{A_1\sum_{i=k+1}^d\lambda_i^2}{A_2\sum_{i=1}^k\lambda_i^2}\cdot\frac{1}{1+\frac{\alpha_k}{\sum_{i=k+1}^d\lambda_i^2}\frac{X_t}{X_t+Y_t}}
    \geq\frac{A_1\sum_{i=k+1}^d\lambda_i^2}{A_2\sum_{i=1}^k\lambda_i^2}\cdot\frac{1}{1+\frac{\max\limits_{i\in[k]} |1-\eta\lambda_i|^2}{A_2\eta^2\sum_{i=1}^k\lambda_i^2}}
    \\\geq&\frac{A_1\sum_{i=k+1}^d\lambda_i^2}{A_2\sum_{i=1}^k\lambda_i^2}\cdot\frac{1}{1+\frac{\max\bbracket{1,|1-\eta\lambda_1|^2}}{A_2\eta^2\sum_{i=1}^k\lambda_i^2}}\geq\frac{A_1\sum_{i=k+1}^d\lambda_i^2}{A_2\sum_{i=1}^k\lambda_i^2}\cdot\frac{1}{1+\frac{1}{A_2}}=\frac{A_1\sum_{i=k+1}^d\lambda_i^2}{(A_2+1)\sum_{i=1}^k\lambda_i^2}.
\end{align*}

Combining Case (I) and (II), we obtain this theorem: 
If we choose the learning rate $\eta=\frac{\beta}{\norm{G( \theta)}_F}$, where $\beta\geq\frac{1.1}{\sqrt{A_1}}$, 
then for any
\begin{align*}
t\geq\max\bbracket{1,\frac{\log\sbracket{1/W_0A_2\eta^2\sum_{i=1}^k\lambda_i^2}}{\log\sbracket{A_1\beta^2}}},
\end{align*}
we have
\begin{align*}
    D_k(t+1)\geq\frac{A_1\sum_{i=k+1}^d\lambda_i^2}{\max\{3A_2,A_2+1\}\sum_{i=1}^k\lambda_i^2}.
\end{align*}
\qed

\subsection{Proof of Proposition \ref{thm: escape GD}}
Recall that $ w(t)=\sum_{i=1}^d w_i(t) u_i$ with  $w_i(t)= u_i^{\top} w(t)$.
Then, for GD, $w_i(t+1)=(1-\eta\lambda_i)w_i(t)$, which implies:
\begin{align*}
    w_i(t)=(1-\eta\lambda_i)^t w_i(0).
\end{align*}
Therefore, for $\eta=\beta/\lambda_1$ ($\beta>2$), it holds that
\begin{align*}
    D_1(t)=\frac{\sum_{i=2}^d\lambda_i w_i^2(t)}{\lambda_1 w_1^2(t)}=\frac{\sum_{i=2}^d \lambda_i (1-\eta\lambda_i)^{2t}w_i^2(0)}{\lambda_1 (1-\eta\lambda_1)^{2t}w_1^2(0)}.
\end{align*}
\qed

\section{Useful Inequalities}

\begin{lemma}[Bernstein's Inequality \citep{vershynin2018high}]\label{lemma: bernstien}
Suppose $\{X_1,\cdots,X_n\}$ are independent sub-Exponential random variables with $\left\|X_i\right\|_{\psi_1}\leq K$. Then there exists an absolute constant $c>0$ such that for any $t\geq0$, we have:
\[
\mathbb{P}\left(\left|\frac{1}{n}\sum_{i=1}^n X_i-\frac{1}{n}\sum_{i=1}^n\bbE\big[X_i\big]\right|>t\right)\leq2\exp\left(-cn\min\Big\{\frac{t}{K},\frac{t^2}{K^2}\Big\}\right).
\]

\end{lemma}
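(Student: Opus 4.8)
The plan is to follow the classical Chernoff (exponential-moment) route, exactly as in \citet{vershynin2018high}. First I would reduce to the centered case: set $Y_i = X_i - \mathbb{E}[X_i]$, so that $\frac{1}{n}\sum_i X_i - \frac{1}{n}\sum_i \mathbb{E}[X_i] = \frac{1}{n}\sum_i Y_i$, and observe that centering preserves sub-exponentiality up to a constant, $\|Y_i\|_{\psi_1} \le 2\|X_i\|_{\psi_1} \le 2K$. It then suffices to bound $\mathbb{P}(\tfrac1n \sum_i Y_i > t)$ and $\mathbb{P}(\tfrac1n \sum_i Y_i < -t)$ separately and add a factor $2$ via a union bound; by symmetry the two are handled identically (apply the one-sided argument to $-Y_i$).

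Second, I would invoke the moment-generating-function characterization of centered sub-exponential variables: there exist absolute constants $c_0, C_0 > 0$ such that $\mathbb{E}[e^{\lambda Y_i}] \le \exp(C_0 \lambda^2 K^2)$ whenever $|\lambda| \le c_0/K$. By independence, writing $S_n := \sum_{i=1}^n Y_i$, for any $0 \le \lambda \le c_0/K$,
\[
\mathbb{P}(S_n > nt) \le e^{-\lambda n t}\, \mathbb{E}\big[e^{\lambda S_n}\big] = e^{-\lambda n t}\prod_{i=1}^n \mathbb{E}\big[e^{\lambda Y_i}\big] \le \exp\!\big(-\lambda n t + C_0 \lambda^2 K^2 n\big).
\]

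Third, I would optimize over $\lambda$, which is exactly what produces the two regimes in the statement. The unconstrained minimizer of $\lambda \mapsto -\lambda t + C_0 \lambda^2 K^2$ is $\lambda^\star = t/(2C_0 K^2)$. If $\lambda^\star \le c_0/K$ (equivalently $t \lesssim K$), substituting $\lambda = \lambda^\star$ gives the sub-Gaussian-type bound $\exp\big(-n t^2/(4 C_0 K^2)\big)$. Otherwise ($t \gtrsim K$), I take the boundary value $\lambda = c_0/K$, so that $-\lambda t + C_0 \lambda^2 K^2 = -c_0 t/K + C_0 c_0^2 \le -c_0 t/(2K)$ once $t \ge 2 C_0 c_0 K$, yielding $\exp\big(-c_0 n t/(2K)\big)$. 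Combining the two cases gives $\mathbb{P}(S_n > nt) \le \exp\big(-c_1 n \min\{t^2/K^2,\, t/K\}\big)$ for a suitable absolute constant $c_1$; together with the lower-tail bound and the factor $2$, this is precisely the claim with $c = c_1$.

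There is no serious obstacle here; the only non-elementary ingredient is the MGF bound $\mathbb{E}[e^{\lambda Y}] \le \exp(C_0 \lambda^2 K^2)$ for $|\lambda| \le c_0/K$ when $\mathbb{E}[Y]=0$ and $\|Y\|_{\psi_1}\le K$, which is one of the standard equivalent characterizations of sub-exponential random variables (e.g.\ Proposition~2.7.1 in \citet{vershynin2018high}) and can, if desired, be reconstructed by Taylor-expanding $e^{\lambda Y}$ and using the moment bound $\mathbb{E}|Y|^p \le (C K p)^p$. Since the lemma as stated already attributes the result to \citet{vershynin2018high}, I would simply cite that reference for this step rather than reprove it; everything else is the one-line Chernoff estimate and the two-case optimization above.
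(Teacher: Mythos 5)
Your proposal is correct: it reproduces the standard Chernoff/MGF proof of Bernstein's inequality for sub-exponential variables (centering, the MGF bound for $|\lambda|\le c_0/K$, and the two-regime optimization), which is exactly the argument in the cited reference. The paper itself does not reprove this lemma — it is stated with a citation to \citet{vershynin2018high} — so there is nothing to compare beyond noting that your route is the canonical one.
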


\begin{lemma}[Hanson-Wright's Inequality \citep{vershynin2018high}]\label{lemma: hanson-wright}
Let $   X=(X_1,\cdots,X_n)\in\bbR^{n}$ be a random vector with independent mean zero sub-Gaussian coordinates. Let $ A$ be an $n\times n$ matrix. Then, there exists an absolute constant $c$ such that for every $t\geq0$, we have
\begin{align*}
    \bbP\sbracket{\left|   X^\top A   X-\bbE[   X^\top A   X]\right|\geq t}\leq2\exp\sbracket{-c\min\bbracket{\frac{t^2}{K^4\norm{ A}_\rF^2},\frac{t}{K^2\norm{ A}_2}}},
\end{align*}
where $K=\max_{i}\norm{X_i}_{\psi_2}$.
\end{lemma}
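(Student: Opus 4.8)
I will establish the Hanson--Wright inequality by the classical route (decoupling together with exponential-moment bounds): decompose the quadratic form into its diagonal and off-diagonal parts, estimate each separately, and finish by a union bound. Using independence of the coordinates to annihilate the mean of the off-diagonal terms, write
\begin{equation*}
X^\top A X - \bbE[X^\top A X] \;=\; \underbrace{\textstyle\sum_{i} A_{ii}\bigl(X_i^2-\bbE X_i^2\bigr)}_{=:D} \;+\; \underbrace{\textstyle\sum_{i\neq j} A_{ij}X_iX_j}_{=:S},
\end{equation*}
so that it suffices to bound each of $\bbP[|D|\geq t/2]$ and $\bbP[|S|\geq t/2]$ by half of the asserted probability.

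For the diagonal part $D$: the summands $X_i^2-\bbE X_i^2$ are independent, mean zero, and sub-exponential with $\psi_1$-norm at most a constant multiple of $\norm{X_i}_{\psi_2}^2\leq K^2$. Applying Bernstein's inequality for weighted sums (the weighted analogue of Lemma~\ref{lemma: bernstien}, with weights $A_{ii}$) and invoking the elementary bounds $\sum_i A_{ii}^2\leq\norm{A}_{\rF}^2$ and $\max_i|A_{ii}|\leq\norm{A}_2$ yields $\bbP[|D|\geq t/2]\leq 2\exp\bigl(-c\min\{\,t^2/(K^4\norm{A}_{\rF}^2),\ t/(K^2\norm{A}_2)\,\}\bigr)$, which already has the required shape.

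The off-diagonal part $S$ is the substantive step. First, by the decoupling inequality for convex functions, $\bbE\exp(\lambda S)\leq\bbE\exp(4\lambda\,X^\top A X')$ with $X'$ an independent copy of $X$; since this merely rescales $\lambda$, it suffices to control the moment generating function of $X^\top A X'$. Conditioning on $X$, the quantity $X^\top A X'=\langle A^\top X, X'\rangle$ is a linear combination of independent sub-Gaussians, so $\bbE_{X'}\exp(\lambda\langle A^\top X, X'\rangle)\leq\exp(CK^2\lambda^2\norm{A^\top X}^2)$; taking $\bbE_X$ reduces matters to bounding $\bbE_X\exp(\mu\,X^\top(AA^\top)X)$ with $\mu=CK^2\lambda^2$. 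Since $AA^\top$ is positive semidefinite, this exponential moment is handled by a standard elementary estimate for quadratic forms in sub-Gaussian vectors, giving $\bbE_X\exp(\mu\,X^\top(AA^\top)X)\leq\exp(C'K^2\mu\,\tr(AA^\top))$ whenever $K^2\mu\norm{AA^\top}_2$ lies below an absolute constant. Combining, and using $\tr(AA^\top)=\norm{A}_{\rF}^2$ and $\norm{AA^\top}_2=\norm{A}_2^2$, one obtains $\bbE\exp(\lambda S)\leq\exp(C''K^4\lambda^2\norm{A}_{\rF}^2)$ for all $|\lambda|\leq c/(K^2\norm{A}_2)$. A Chernoff optimization of $\lambda$ over this range produces the two-regime tail for $S$, and a final union bound with the diagonal estimate, after shrinking $c$, completes the proof.

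I expect the delicate point to be this last chain for $S$: ensuring the decoupling step is applied to a genuinely convex functional (it is, as $u\mapsto e^{\lambda u}$ is convex) and, more importantly, carrying out the nested quadratic-form estimate for a sub-Gaussian rather than Gaussian vector so that the admissible range $|\lambda|\lesssim 1/(K^2\norm{A}_2)$ lines up with the variance proxy $K^4\norm{A}_{\rF}^2$ — this matching is exactly what forces the sharp exponent $\min\{t^2/(K^4\norm{A}_{\rF}^2),\ t/(K^2\norm{A}_2)\}$ rather than a weaker bound.
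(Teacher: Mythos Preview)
The paper does not prove this lemma; it is stated in the ``Useful Inequalities'' appendix with a citation to \cite{vershynin2018high} and invoked as a black box. Your proposal is the standard textbook proof (essentially Vershynin's own argument): split into diagonal and off-diagonal, handle the diagonal via Bernstein, and treat the off-diagonal via decoupling followed by conditioning and an MGF bound. So there is nothing to compare against on the paper's side, and your route is correct and canonical.

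One small remark on the step you yourself flag as delicate. After conditioning on $X'$ you reduce to bounding $\bbE_X\exp\bigl(\mu\,X^\top(AA^\top)X\bigr)$ for sub-Gaussian $X$, and you appeal to ``a standard elementary estimate for quadratic forms in sub-Gaussian vectors''. Be careful not to argue by diagonalizing $AA^\top$ and treating the resulting squares as independent --- that uses rotation invariance, which sub-Gaussian vectors need not enjoy. The clean way to discharge this step is Gaussian comparison: write $\exp(\mu\,X^\top B X)=\bbE_g\exp(\sqrt{2\mu}\,g^\top B^{1/2}X)$ for $g\sim\cN(0,I_d)$, swap expectations, bound the inner $\bbE_X$ by the sub-Gaussian MGF, and then evaluate the remaining Gaussian quadratic form explicitly. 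This yields exactly the range $|\lambda|\lesssim 1/(K^2\norm{A}_2)$ and variance proxy $K^4\norm{A}_{\rF}^2$ you need, and is how Vershynin closes the argument.
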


\begin{lemma}[Covariance Estimate for sub-Gaussian Distribution \citep{vershynin2018high}]\label{lemma: covariance estimate sub Gaussian}
Let $\bx,\bx_1,\cdots,\bx_n$ be i.i.d. random vectors in $\mathbb{R}^d$. More precisely, assume that there exists $K\geq1$ s.t. $\left\|\left<\bx,\bv\right>\right\|_{\psi_2}\leq K\left\|\left<\bx,\bv\right>\right\|_{L_2}\text{ for any } \bv\in\bbS^{d-1}$,
Then for any $u\geq0$, \wp at least $1-2\exp(-u)$ one has
\begin{align*}
    \left\|\frac{1}{n}\sum_{i=1}^n\bx_i\bx_i^\top-\bbE\big[\bx\bx^\top\big]\right\|\leq CK^2\left(\sqrt{\frac{d+u}{n}}+\frac{d+u}{n}\right)\left\|\bbE\big[\bx\bx^\top\big]\right\|,
\end{align*}
where $C$ is an absolute positive constant.

\end{lemma}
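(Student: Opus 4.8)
The plan is to deduce the result from a uniform deviation bound over the unit sphere, using a standard $\varepsilon$-net discretization together with Bernstein's inequality (Lemma~\ref{lemma: bernstien}). Write $\emM=\bbE[\bx\bx^\top]$; after dividing the asserted inequality through by $\norm{\emM}$ we may assume $\norm{\emM}=1$, so that $\norm{\left<\bx,\bv\right>}_{L_2}\leq1$ and hence, by hypothesis, $\norm{\left<\bx,\bv\right>}_{\psi_2}\leq K$ for every $\bv\in\bbS^{d-1}$. Since $\emA_n:=\frac1n\sum_{i=1}^n\bx_i\bx_i^\top-\emM$ is symmetric, $\norm{\emA_n}=\sup_{\bv\in\bbS^{d-1}}\left|\bv^\top\emA_n\bv\right|$, so it suffices to control this supremum.

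First I would fix a $\tfrac14$-net $\cN$ of $\bbS^{d-1}$ with $|\cN|\leq9^d$ and use the standard comparison $\norm{\emA_n}\leq2\max_{\bv\in\cN}\left|\bv^\top\emA_n\bv\right|$. For a fixed $\bv\in\cN$, the variables $\left<\bx_i,\bv\right>^2$ are i.i.d.\ and $\norm{\left<\bx_i,\bv\right>^2}_{\psi_1}\lesssim\norm{\left<\bx_i,\bv\right>}_{\psi_2}^2\leq K^2$, so $\bv^\top\emA_n\bv=\frac1n\sum_{i=1}^n\bigl(\left<\bx_i,\bv\right>^2-\bbE\left<\bx,\bv\right>^2\bigr)$ is an average of centered sub-exponential variables of $\psi_1$-norm $\lesssim K^2$. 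Lemma~\ref{lemma: bernstien} then gives, for every $t\geq0$,
\[
\bbP\sbracket{\left|\bv^\top\emA_n\bv\right|>t}\leq2\exp\sbracket{-c\,n\min\bbracket{t^2/K^4,\;t/K^2}}.
\]

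Next I would take a union bound over $\cN$ and set $t=CK^2\max\{\delta,\delta^2\}$ with $\delta=\sqrt{(d+u)/n}$, chosen so that $n\min\{t^2/K^4,\,t/K^2\}\gtrsim d+u$; for $C$ large enough the entropy factor $9^d=e^{d\log 9}$ is dominated by the exponent, leaving failure probability at most $2\exp(-u)$. On the complementary event $\norm{\emA_n}\leq2t\lesssim K^2\bigl(\sqrt{(d+u)/n}+(d+u)/n\bigr)$, and multiplying back by $\norm{\emM}$ recovers the stated bound. The one delicate point is the bookkeeping that produces the two summands $\sqrt{(d+u)/n}$ and $(d+u)/n$ simultaneously --- these are exactly the small-$t$ (sub-Gaussian) and large-$t$ (sub-exponential) regimes of Bernstein's inequality --- together with verifying that the net-entropy cost $d\log 9$ is absorbed into the exponent; the rest is routine.
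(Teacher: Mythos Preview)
The paper does not prove this lemma; it is quoted without proof from Vershynin's textbook as one of the auxiliary ``Useful Inequalities.'' Your proposal is correct and is essentially the standard argument given in that reference: normalize so that $\norm{\emM}=1$, pass to a $\tfrac14$-net of the sphere, apply Bernstein's inequality to the centered sub-exponential variables $\left<\bx_i,\bv\right>^2-\bbE\left<\bx,\bv\right>^2$ at each net point, and union-bound; the two regimes of Bernstein produce exactly the $\sqrt{(d+u)/n}$ and $(d+u)/n$ terms.
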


\begin{definition}[Sub-Weibull Distribution]\label{def: sub-Weibull} 
We define $X$ as a sub-Weibull random variable if it has a
bounded $\psi_\beta$-norm. The $\psi_\beta$-norm of $X$ for any $\beta>0$ is defined as
\[
\left\|X\right\|_{\psi_\beta}:=\inf\Big\{C>0:\bbE\big[\exp(|X|^\beta/C^\beta)\big]\leq2\Big\}.
\]
Particularly, when $\beta=1$ or $2$, sub-Weibull random variables reduce to sub-Exponential or sub-Gaussian random variables, respectively.
\end{definition}

\begin{lemma}[Concentration Inequality for Sub-Weibull Distribution, Theorem 3.1 in \citep{hao2019bootstrapping}]\label{lemma: concentration SW} Suppose $\{X_i\}_{i=1}^n$ are independent sub-Weibull random variables with $\left\|X_i\right\|_{\psi_\beta}\leq K$. Then there exists an absolute constant $C_\beta$
only depending on $\beta$ such that for any $\delta\in(0,1/e^2)$, \wp at least $1-\delta$, we have
\[
\left|\frac{1}{n}\sum_{i=1}^n X_i-\frac{1}{n}\sum_{i=1}^n\bbE\big[X_i\big]\right|\leq C_\beta K
\Bigg(\Big(\frac{\log(1/\delta)}{n}\Big)^{1/2}+\frac{\big(\log(1/\delta)\big)^{1/\beta}}{n}\Bigg).
\]

\end{lemma}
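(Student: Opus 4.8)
The plan is to reconstruct this well-known sub-Weibull concentration estimate by reducing to a normalized setting, establishing a Bernstein-type tail bound for a sum of independent sub-Weibull$(\beta)$ variables, and then inverting it at confidence $\delta$. \emph{Normalization.} By homogeneity we may take $K=1$; and since $|\bbE X_i|\le\norm{X_i}_{L^1}\lesssim_\beta\norm{X_i}_{\psi_\beta}$ while recentering by a constant inflates the $\psi_\beta$-(quasi)norm by at most a $\beta$-dependent factor, we may assume each $X_i$ is centered with $\norm{X_i}_{\psi_\beta}\le1$. Markov applied to $\bbE\exp(|X_i|^\beta)\le2$ then gives $\bbP(|X_i|\ge s)\le2e^{-s^\beta}$ and $\norm{X_i}_{L^p}\lesssim_\beta p^{1/\beta}$. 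It now suffices to prove
\[
\bbP\!\left(\left|\sum_{i=1}^n X_i\right|\ge t\right)\le 2\exp\!\left(-c_\beta\min\{t^2/n,\ t^\beta\}\right),\qquad t\ge 0,
\]
because setting the right-hand side equal to $\delta$ forces $t\lesssim_\beta\sqrt{n\log(1/\delta)}+(\log(1/\delta))^{1/\beta}$, and dividing by $n$ produces exactly the two terms of the claim (the hypothesis $\delta<1/e^2$ guarantees $\log(1/\delta)\gtrsim1$, so nothing of lower order is lost). The two branches of the exponent reflect the two ways the sum can become large: a Gaussian-type fluctuation of order $\sqrt n$ (rate $t^2/n$) and one anomalously large summand (rate $t^\beta$, which is optimal since $\bbP(|X_1|\ge t)\asymp e^{-t^\beta}$).

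\textbf{The light-tailed case $\beta\ge1$.} Here the $X_i$ are at least sub-exponential, with $\bbE e^{\lambda X_i}\le e^{C_\beta\lambda^2}$ for $|\lambda|\le c_\beta$ and, when $\beta>1$, the MGF is finite for all $\lambda$ with $\log\bbE e^{\lambda X_i}\lesssim_\beta|\lambda|^{\beta/(\beta-1)}$ in the far range. A Chernoff bound on $\sum X_i$ followed by optimization over $\lambda$ delivers the exponent $\min\{t^2/n,t^\beta\}$; for $\beta=1$ this is precisely Bernstein's inequality (Lemma~\ref{lemma: bernstien}).

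\textbf{The heavy-tailed case $0<\beta<1$, and the main obstacle.} Now $\bbE e^{\lambda X_i}=\infty$ for every $\lambda\ne0$, so the MGF must be truncated: write $X_i=X_i'+X_i''$ with $X_i'=X_i\,\mathbf{1}\{|X_i|\le M\}$. The bounded part $\sum_i(X_i'-\bbE X_i')$ is controlled by Bennett/Bernstein for increments of size $\le M$ and total variance $\le n$, giving $\exp(-c\min\{t^2/n,\ t/M\})$; the de-biasing term obeys $\sum_i|\bbE X_i-\bbE X_i'|\lesssim_\beta n\,M^{1-\beta}e^{-M^\beta}$; and the excess $\sum_i X_i''$ is handled by peeling the range $|X_i|>M$ into dyadic shells $\{2^jM\le|X_i|<2^{j+1}M\}$ and bounding each shell's aggregate contribution, the geometric decay in $j$ keeping the total cost an $O_\beta(1)$ factor rather than the $O(\log n)$ that a single union bound over all $n$ coordinates would incur. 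Optimizing the free level $M$ against $t$ so that the three contributions balance then yields the exponent $\min\{t^2/n,t^\beta\}$. This last step is where all the difficulty lies: a naive one-level truncation at $M\asymp(\log(n/\delta))^{1/\beta}$ pays an unwanted polylogarithmic-in-$n$ factor on the $n^{-1}$ term, so the delicate part is the bookkeeping showing that the dyadic families of union bounds, variance corrections, and mean corrections all telescope to a constant depending only on $\beta$; everything else is a routine Chernoff/Bernstein computation.
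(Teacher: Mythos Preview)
The paper does not prove this lemma at all: it is stated in the appendix of auxiliary inequalities and attributed directly to Theorem~3.1 of \cite{hao2019bootstrapping}, with no argument supplied. So there is no in-paper proof to compare your proposal against.

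Your outline is the standard route to sub-Weibull concentration---normalize and center, establish a two-regime tail bound of the form $\exp(-c_\beta\min\{t^2/n,\,t^\beta\})$ via the Chernoff method for $\beta\ge 1$ and via truncation-plus-peeling for $0<\beta<1$, then invert at level $\delta$---and is correct at the level of a proof sketch. You also correctly identify that the heavy-tailed branch is where the real work sits: a single truncation level produces an extraneous $\mathrm{polylog}(n)$ factor, and removing it requires the dyadic/peeling bookkeeping you describe. If the goal is merely to use the inequality, citing the source as the paper does is sufficient; if you want a self-contained proof, the $\beta<1$ portion would need to be written out in full, since your sketch stops short of actually carrying out the balancing of the three error terms.
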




\begin{lemma}[Cauchy-Schwarz Inequalities]\label{lemma: Cauchy-Schwarz}\ \\
(1) Let $ S\in\mathbb{R}^{n\times n}$ be a positive symmetric definite matrix. For any $\bx, y\in\mathbb{R}^n$, we denote $\left<\bx, y\right>_{ S}:=\bx^\top S y$ and $\left\|\bx\right\|_{ S}:=\sqrt{\left<\bx,\bx\right>_{ S}}$, then we have $\left|\left<\bx, y\right>_{ S}\right|\leq\left\|\bx\right\|_{ S}\left\| y\right\|_{ S}$.
\\
(2) Given two random variables $X$ and $Y$, it holds that $\left|\bbE[XY]\right|\leq\sqrt{\bbE[X^2]}\sqrt{\bbE[Y^2]}$.
\end{lemma}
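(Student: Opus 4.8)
The plan is to treat both parts as instances of the classical Cauchy--Schwarz inequality for a positive semidefinite bilinear form, established by the standard one-parameter (discriminant) argument.

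For part (1): first I would observe that, since $S$ is symmetric and positive definite, the map $(\bx,\by)\mapsto\langle\bx,\by\rangle_S=\bx^\top S\by$ is a genuine inner product on $\mathbb{R}^n$: bilinearity is immediate, symmetry follows from $S=S^\top$, and $\langle\bx,\bx\rangle_S=\bx^\top S\bx>0$ for $\bx\neq\bzero$. The cleanest route is then to pass to the unique positive definite square root $S^{1/2}$: writing $\langle\bx,\by\rangle_S=(S^{1/2}\bx)^\top(S^{1/2}\by)$ and $\|\bx\|_S=\|S^{1/2}\bx\|_2$, the claim reduces verbatim to the Euclidean Cauchy--Schwarz inequality $|\ba^\top\bb|\le\|\ba\|_2\|\bb\|_2$. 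If a self-contained argument is preferred, I would instead note that for $\by=\bzero$ the inequality is trivial, while for $\by\neq\bzero$ the quadratic $q(t):=\|\bx-t\by\|_S^2=\|\bx\|_S^2-2t\langle\bx,\by\rangle_S+t^2\|\by\|_S^2\ge 0$ holds for every $t\in\mathbb{R}$; evaluating at $t=\langle\bx,\by\rangle_S/\|\by\|_S^2$ gives $0\le\|\bx\|_S^2-\langle\bx,\by\rangle_S^2/\|\by\|_S^2$, which rearranges to the assertion.

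For part (2): the argument is identical with the inner product replaced by $(X,Y)\mapsto\bbE[XY]$ on square-integrable random variables. If $\bbE[X^2]$ or $\bbE[Y^2]$ is infinite the right-hand side is $+\infty$ and there is nothing to prove; if $\bbE[Y^2]=0$ then $Y=0$ almost surely, so $\bbE[XY]=0$ and the inequality holds trivially. Otherwise, for every $t\in\mathbb{R}$ the nonnegativity $\bbE[(X-tY)^2]\ge 0$ expands to $t^2\bbE[Y^2]-2t\bbE[XY]+\bbE[X^2]\ge 0$; this is a quadratic in $t$ with positive leading coefficient that is nonnegative everywhere, so its discriminant is nonpositive, i.e. $4(\bbE[XY])^2-4\bbE[X^2]\bbE[Y^2]\le 0$, which is exactly $|\bbE[XY]|\le\sqrt{\bbE[X^2]}\,\sqrt{\bbE[Y^2]}$.

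I do not anticipate any substantive obstacle, as this is a textbook fact; the only points demanding a modicum of care are the degenerate cases ($\by=\bzero$ in part (1), and $\bbE[Y^2]\in\{0,\infty\}$ in part (2)) and, in the slick version of part (1), the appeal to the existence of a positive definite square root of a positive definite matrix.
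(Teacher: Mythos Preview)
Your proof is correct and entirely standard. The paper does not actually supply a proof of this lemma at all: it is listed in the appendix of ``Useful Inequalities'' alongside Bernstein, Hanson--Wright, etc., and is simply stated as a known fact to be invoked elsewhere. So there is nothing to compare against; your discriminant argument (and the $S^{1/2}$ reduction for part~(1)) is exactly the textbook treatment one would expect.
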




\begin{thebibliography}{50}
\providecommand{\natexlab}[1]{#1}
\providecommand{\url}[1]{\texttt{#1}}
\expandafter\ifx\csname urlstyle\endcsname\relax
  \providecommand{\doi}[1]{doi: #1}\else
  \providecommand{\doi}{doi: \begingroup \urlstyle{rm}\Url}\fi

\bibitem[Ahn et~al.(2022)Ahn, Zhang, and Sra]{ahn2022understanding}
Kwangjun Ahn, Jingzhao Zhang, and Suvrit Sra.
\newblock Understanding the unstable convergence of gradient descent.
\newblock In \emph{International Conference on Machine Learning}, pages 247--257. PMLR, 2022.

\bibitem[Arora et~al.(2019)Arora, Cohen, Hu, and Luo]{arora2019implicit}
Sanjeev Arora, Nadav Cohen, Wei Hu, and Yuping Luo.
\newblock Implicit regularization in deep matrix factorization.
\newblock \emph{Advances in Neural Information Processing Systems}, 32, 2019.

\bibitem[Azulay et~al.(2021)Azulay, Moroshko, Nacson, Woodworth, Srebro, Globerson, and Soudry]{azulay2021implicit}
Shahar Azulay, Edward Moroshko, Mor~Shpigel Nacson, Blake~E Woodworth, Nathan Srebro, Amir Globerson, and Daniel Soudry.
\newblock On the implicit bias of initialization shape: Beyond infinitesimal mirror descent.
\newblock In \emph{International Conference on Machine Learning}, pages 468--477. PMLR, 2021.

\bibitem[Bassily et~al.(2018)Bassily, Belkin, and Ma]{bassily2018exponential}
Raef Bassily, Mikhail Belkin, and Siyuan Ma.
\newblock On exponential convergence of sgd in non-convex over-parametrized learning.
\newblock \emph{arXiv preprint arXiv:1811.02564}, 2018.

\bibitem[Bottou et~al.(2018)Bottou, Curtis, and Nocedal]{bottou2018optimization}
L{\'e}on Bottou, Frank~E Curtis, and Jorge Nocedal.
\newblock Optimization methods for large-scale machine learning.
\newblock \emph{SIAM review}, 60\penalty0 (2):\penalty0 223--311, 2018.

\bibitem[Cai et~al.(2022)Cai, Huang, Li, and Wang]{cai2022nearly}
Jian-Feng Cai, Meng Huang, Dong Li, and Yang Wang.
\newblock Nearly optimal bounds for the global geometric landscape of phase retrieval.
\newblock \emph{arXiv preprint arXiv:2204.09416}, 2022.

\bibitem[Cohen et~al.(2020)Cohen, Kaur, Li, Kolter, and Talwalkar]{cohen2021gradient}
Jeremy Cohen, Simran Kaur, Yuanzhi Li, J~Zico Kolter, and Ameet Talwalkar.
\newblock Gradient descent on neural networks typically occurs at the edge of stability.
\newblock In \emph{International Conference on Learning Representations}, 2020.

\bibitem[Damian et~al.(2021)Damian, Ma, and Lee]{damian2021label}
Alex Damian, Tengyu Ma, and Jason~D Lee.
\newblock Label noise {SGD} provably prefers flat global minimizers.
\newblock \emph{Advances in Neural Information Processing Systems}, 34:\penalty0 27449--27461, 2021.

\bibitem[Daneshmand et~al.(2018)Daneshmand, Kohler, Lucchi, and Hofmann]{daneshmand2018escaping}
Hadi Daneshmand, Jonas Kohler, Aurelien Lucchi, and Thomas Hofmann.
\newblock Escaping saddles with stochastic gradients.
\newblock In \emph{International Conference on Machine Learning}, pages 1155--1164. PMLR, 2018.

\bibitem[Fang et~al.(2020)Fang, Fan, and Friedlander]{fang2020fast}
Huang Fang, Zhenan Fan, and Michael Friedlander.
\newblock Fast convergence of stochastic subgradient method under interpolation.
\newblock In \emph{International Conference on Learning Representations}, 2020.

\bibitem[Faw et~al.(2022)Faw, Tziotis, Caramanis, Mokhtari, Shakkottai, and Ward]{faw2022power}
Matthew Faw, Isidoros Tziotis, Constantine Caramanis, Aryan Mokhtari, Sanjay Shakkottai, and Rachel Ward.
\newblock The power of adaptivity in {SGD}: Self-tuning step sizes with unbounded gradients and affine variance.
\newblock In \emph{Conference on Learning Theory}, pages 313--355. PMLR, 2022.

\bibitem[Feng and Tu(2021)]{feng2021inverse}
Yu~Feng and Yuhai Tu.
\newblock The inverse variance--flatness relation in stochastic gradient descent is critical for finding flat minima.
\newblock \emph{Proceedings of the National Academy of Sciences}, 118\penalty0 (9), 2021.

\bibitem[Goodfellow et~al.(2016)Goodfellow, Bengio, and Courville]{goodfellow2016deep}
Ian Goodfellow, Yoshua Bengio, and Aaron Courville.
\newblock \emph{Deep learning}.
\newblock MIT press, 2016.

\bibitem[Hao et~al.(2019)Hao, Abbasi~Yadkori, Wen, and Cheng]{hao2019bootstrapping}
Botao Hao, Yasin Abbasi~Yadkori, Zheng Wen, and Guang Cheng.
\newblock Bootstrapping upper confidence bound.
\newblock \emph{Advances in neural information processing systems}, 32, 2019.

\bibitem[HaoChen et~al.(2021)HaoChen, Wei, Lee, and Ma]{haochen2021shape}
Jeff~Z HaoChen, Colin Wei, Jason Lee, and Tengyu Ma.
\newblock Shape matters: Understanding the implicit bias of the noise covariance.
\newblock In \emph{Conference on Learning Theory}, pages 2315--2357. PMLR, 2021.

\bibitem[Hazan et~al.(2016)]{hazan2016introduction}
Elad Hazan et~al.
\newblock Introduction to online convex optimization.
\newblock \emph{Foundations and Trends{\textregistered} in Optimization}, 2\penalty0 (3-4):\penalty0 157--325, 2016.

\bibitem[He et~al.(2016)He, Zhang, Ren, and Sun]{he2016deep}
Kaiming He, Xiangyu Zhang, Shaoqing Ren, and Jian Sun.
\newblock Deep residual learning for image recognition.
\newblock In \emph{Proceedings of the IEEE conference on computer vision and pattern recognition}, pages 770--778, 2016.

\bibitem[Hochreiter and Schmidhuber(1997)]{hochreiter1997flat}
S.~Hochreiter and J.~Schmidhuber.
\newblock Flat minima.
\newblock \emph{Neural Computation}, 9\penalty0 (1):\penalty0 1--42, 1997.

\bibitem[Huang et~al.(2018)Huang, Li, Pleiss, Liu, Hopcroft, and Weinberger]{huangsnapshot}
Gao Huang, Yixuan Li, Geoff Pleiss, Zhuang Liu, John~E Hopcroft, and Kilian~Q Weinberger.
\newblock Snapshot ensembles: Train 1, get ${M}$ for free.
\newblock In \emph{International Conference on Learning Representations}, 2018.

\bibitem[Keskar et~al.(2017)Keskar, Mudigere, Nocedal, Smelyanskiy, and Tang]{keskar2016large}
N.~S. Keskar, D.~Mudigere, J.~Nocedal, M.~Smelyanskiy, and P.~T.~P. Tang.
\newblock On large-batch training for deep learning: Generalization gap and sharp minima.
\newblock In \emph{In International Conference on Learning Representations (ICLR)}, 2017.

\bibitem[Kleinberg et~al.(2018)Kleinberg, Li, and Yuan]{kleinberg2018alternative}
Bobby Kleinberg, Yuanzhi Li, and Yang Yuan.
\newblock An alternative view: When does {SGD} escape local minima?
\newblock In \emph{International Conference on Machine Learning}, pages 2698--2707. PMLR, 2018.

\bibitem[Krizhevsky and Hinton(2009)]{krizhevsky2009learning}
Alex Krizhevsky and Geoffrey Hinton.
\newblock Learning multiple layers of features from tiny images, 2009.
\newblock URL \url{https://www.cs.toronto.edu/~kriz/cifar.html}.

\bibitem[Li et~al.(2021)Li, Malladi, and Arora]{li2021validity}
Zhiyuan Li, Sadhika Malladi, and Sanjeev Arora.
\newblock On the validity of modeling {SGD} with stochastic differential equations ({SDE}s).
\newblock In \emph{Advances in Neural Information Processing Systems}, volume~34, 2021.

\bibitem[Li et~al.(2022)Li, Wang, and Arora]{li2021happens}
Zhiyuan Li, Tianhao Wang, and Sanjeev Arora.
\newblock What happens after {SGD} reaches zero loss? --a mathematical framework.
\newblock In \emph{International Conference on Learning Representations}, 2022.

\bibitem[Liu et~al.(2023)Liu, Drusvyatskiy, Belkin, Davis, and Ma]{liu2023aiming}
Chaoyue Liu, Dmitriy Drusvyatskiy, Mikhail Belkin, Damek Davis, and Yi-An Ma.
\newblock Aiming towards the minimizers: fast convergence of {SGD} for overparametrized problems.
\newblock \emph{arXiv preprint arXiv:2306.02601}, 2023.

\bibitem[Liu et~al.(2021)Liu, Ziyin, and Ueda]{liu2021noise}
Kangqiao Liu, Liu Ziyin, and Masahito Ueda.
\newblock Noise and fluctuation of finite learning rate stochastic gradient descent.
\newblock In \emph{International Conference on Machine Learning}, pages 7045--7056. PMLR, 2021.

\bibitem[Loshchilov and Hutter(2017)]{loshchilovsgdr}
Ilya Loshchilov and Frank Hutter.
\newblock {SGDR}: Stochastic gradient descent with warm restarts.
\newblock In \emph{International Conference on Learning Representations}, 2017.

\bibitem[Ma et~al.(2022)Ma, Kunin, Wu, and Ying]{ma2022beyond}
Chao Ma, Daniel Kunin, Lei Wu, and Lexing Ying.
\newblock Beyond the quadratic approximation: The multiscale structure of neural network loss landscapes.
\newblock \emph{Journal of Machine Learning}, 1\penalty0 (3):\penalty0 247--267, 2022.

\bibitem[Mori et~al.(2022)Mori, Ziyin, Liu, and Ueda]{mori2021logarithmic}
Takashi Mori, Liu Ziyin, Kangqiao Liu, and Masahito Ueda.
\newblock Power-law escape rate of {SGD}.
\newblock In \emph{International Conference on Machine Learning}, pages 15959--15975. PMLR, 2022.

\bibitem[Pesme et~al.(2021)Pesme, Pillaud-Vivien, and Flammarion]{pesme2021implicit}
Scott Pesme, Loucas Pillaud-Vivien, and Nicolas Flammarion.
\newblock Implicit bias of {SGD} for diagonal linear networks: a provable benefit of stochasticity.
\newblock \emph{Advances in Neural Information Processing Systems}, 34:\penalty0 29218--29230, 2021.

\bibitem[Simonyan and Zisserman(2015)]{vgg}
Karen Simonyan and Andrew Zisserman.
\newblock Very deep convolutional networks for large-scale image recognition.
\newblock In \emph{3rd International Conference on Learning Representations, {ICLR} 2015}, 2015.

\bibitem[Simsekli et~al.(2019)Simsekli, Sagun, and Gurbuzbalaban]{simsekli2019tail}
Umut Simsekli, Levent Sagun, and Mert Gurbuzbalaban.
\newblock A tail-index analysis of stochastic gradient noise in deep neural networks.
\newblock In \emph{International Conference on Machine Learning}, pages 5827--5837. PMLR, 2019.

\bibitem[Smith(2017)]{smith2017cyclical}
Leslie~N Smith.
\newblock Cyclical learning rates for training neural networks.
\newblock In \emph{2017 IEEE winter conference on applications of computer vision (WACV)}, pages 464--472. IEEE, 2017.

\bibitem[Thomas et~al.(2020)Thomas, Pedregosa, Merri{\"e}nboer, Manzagol, Bengio, and Le~Roux]{thomas2020interplay}
Valentin Thomas, Fabian Pedregosa, Bart Merri{\"e}nboer, Pierre-Antoine Manzagol, Yoshua Bengio, and Nicolas Le~Roux.
\newblock On the interplay between noise and curvature and its effect on optimization and generalization.
\newblock In \emph{International Conference on Artificial Intelligence and Statistics}, pages 3503--3513. PMLR, 2020.

\bibitem[Vershynin(2018)]{vershynin2018high}
Roman Vershynin.
\newblock \emph{High-dimensional probability: An introduction with applications in data science}, volume~47.
\newblock Cambridge university press, 2018.

\bibitem[Wojtowytsch(2021)]{wojtowytsch2021stochastic}
Stephan Wojtowytsch.
\newblock Stochastic gradient descent with noise of machine learning type. part {II}: Continuous time analysis.
\newblock \emph{arXiv preprint arXiv:2106.02588}, 2021.

\bibitem[Wojtowytsch(2023)]{wojtowytsch2023stochastic}
Stephan Wojtowytsch.
\newblock Stochastic gradient descent with noise of machine learning type part i: Discrete time analysis.
\newblock \emph{Journal of Nonlinear Science}, 33\penalty0 (3):\penalty0 45, 2023.

\bibitem[Woodworth et~al.(2020)Woodworth, Gunasekar, Lee, Moroshko, Savarese, Golan, Soudry, and Srebro]{woodworth2020kernel}
Blake Woodworth, Suriya Gunasekar, Jason~D Lee, Edward Moroshko, Pedro Savarese, Itay Golan, Daniel Soudry, and Nathan Srebro.
\newblock Kernel and rich regimes in overparametrized models.
\newblock In \emph{Conference on Learning Theory}, pages 3635--3673. PMLR, 2020.

\bibitem[Wu et~al.(2020)Wu, Hu, Xiong, Huan, Braverman, and Zhu]{wu2020noisy}
Jingfeng Wu, Wenqing Hu, Haoyi Xiong, Jun Huan, Vladimir Braverman, and Zhanxing Zhu.
\newblock On the noisy gradient descent that generalizes as {SGD}.
\newblock In \emph{International Conference on Machine Learning}, pages 10367--10376. PMLR, 2020.

\bibitem[Wu and Su(2023)]{wu2023implicit}
Lei Wu and Weijie~J Su.
\newblock The implicit regularization of dynamical stability in stochastic gradient descent.
\newblock In \emph{Proceedings of the 40th International Conference on Machine Learning}, volume 202, pages 37656--37684. PMLR, 23--29 Jul 2023.

\bibitem[Wu et~al.(2017)Wu, Zhu, and E]{wu2017towards}
Lei Wu, Zhanxing Zhu, and Weinan E.
\newblock Towards understanding generalization of deep learning: Perspective of loss landscapes.
\newblock \emph{arXiv preprint arXiv:1706.10239}, 2017.

\bibitem[Wu et~al.(2018)Wu, Ma, and E]{wu2018sgd}
Lei Wu, Chao Ma, and Weinan E.
\newblock How {SGD} selects the global minima in over-parameterized learning: A dynamical stability perspective.
\newblock \emph{Advances in Neural Information Processing Systems}, 31:\penalty0 8279--8288, 2018.

\bibitem[Wu et~al.(2022)Wu, Wang, and Su]{wu2022does}
Lei Wu, Mingze Wang, and Weijie~J Su.
\newblock The alignment property of {SGD} noise and how it helps select flat minima: A stability analysis.
\newblock \emph{Advances in Neural Information Processing Systems}, 35:\penalty0 4680--4693, 2022.

\bibitem[Xie et~al.(2020)Xie, Sato, and Sugiyama]{xie2020diffusion}
Zeke Xie, Issei Sato, and Masashi Sugiyama.
\newblock A diffusion theory for deep learning dynamics: Stochastic gradient descent exponentially favors flat minima.
\newblock In \emph{International Conference on Learning Representations}, 2020.

\bibitem[Xie et~al.(2022)Xie, Wang, Zhang, Sato, and Sugiyama]{xie2022adaptive}
Zeke Xie, Xinrui Wang, Huishuai Zhang, Issei Sato, and Masashi Sugiyama.
\newblock Adaptive inertia: Disentangling the effects of adaptive learning rate and momentum.
\newblock In \emph{International conference on machine learning}, pages 24430--24459. PMLR, 2022.

\bibitem[Zhang et~al.(2017)Zhang, Bengio, Hardt, Recht, and Vinyals]{zhang2017understanding}
Chiyuan Zhang, Samy Bengio, Moritz Hardt, Benjamin Recht, and Oriol Vinyals.
\newblock Understanding deep learning requires rethinking generalization.
\newblock In \emph{International Conference on Learning Representations}, 2017.

\bibitem[Zhang et~al.(2019)Zhang, Dauphin, and Ma]{fixup}
Hongyi Zhang, Yann~N. Dauphin, and Tengyu Ma.
\newblock Residual learning without normalization via better initialization.
\newblock In \emph{International Conference on Learning Representations}, 2019.

\bibitem[Zhou et~al.(2020)Zhou, Feng, Ma, Xiong, Hoi, and E]{zhou2020towards}
Pan Zhou, Jiashi Feng, Chao Ma, Caiming Xiong, Steven Chu~Hong Hoi, and Weinan E.
\newblock Towards theoretically understanding why {SGD} generalizes better than {Adam} in deep learning.
\newblock \emph{Advances in Neural Information Processing Systems}, 33, 2020.

\bibitem[Zhu et~al.(2019)Zhu, Wu, Yu, Wu, and Ma]{zhu2019anisotropic}
Zhanxing Zhu, Jingfeng Wu, Bing Yu, Lei Wu, and Jinwen Ma.
\newblock The anisotropic noise in stochastic gradient descent: Its behavior of escaping from sharp minima and regularization effects.
\newblock In \emph{International Conference on Machine Learning}, pages 7654--7663. PMLR, 2019.

\bibitem[Ziyin et~al.(2022)Ziyin, Liu, Mori, and Ueda]{ziyin2021minibatch}
Liu Ziyin, Kangqiao Liu, Takashi Mori, and Masahito Ueda.
\newblock Strength of minibatch noise in {SGD}.
\newblock In \emph{International Conference on Learning Representations}, 2022.

\end{thebibliography}
\end{document}